\setlist{nosep}
\newif\ifdraft
\newcommand{\TODO}[1]{
  \ifdraft
    \ifdefined\isaccepted \else
      \textcolor{red}{[TODO: #1]}\xspace
    \fi
  \fi
}
\newcommand{\NOTE}[1]{
  \ifdraft
    \ifdefined\isaccepted \else
      \textcolor{blue}{[NOTE: #1]}\xspace
    \fi
  \fi
}
\renewcommand{\paragraph}[1]{\textbf{#1}\;\xspace}
\renewcommand{\eqref}[1]{Equation~\ref{eq:#1}}
\newcommand{\secref}[1]{Section~\ref{sec:#1}}
\newcommand{\appref}[1]{Appendix~\ref{app:#1}}
\newcommand{\figref}[1]{Figure~\ref{fig:#1}}
\newcommand{\tabref}[1]{Table~\ref{tab:#1}}
\newcommand{\algref}[1]{Algorithm~\ref{alg:#1}}
\newcommand{\thmref}[1]{Theorem~\ref{thm:#1}}
\newcommand{\lemref}[1]{Lemma~\ref{lem:#1}}
\newcommand{\corref}[1]{Corollary~\ref{cor:#1}}
\newcommand{\defref}[1]{Definition~\ref{def:#1}}
\newcommand{\algrefs}[3][and]{Algorithms~\ref{alg:#2} #1~\ref{alg:#3}}
\newtheorem*{rep@theorem}{\rep@title}
\newcommand{\newreptheorem}[2]{\newenvironment{rep#1}[1]{\def\rep@title{#2 \ref{##1}}\begin{rep@theorem}}{\end{rep@theorem}}}
\newtheorem{theorem}{Theorem}
\newtheorem{lemma}{Lemma}
\newtheorem{corollary}{Corollary}
\newtheorem{definition}{Definition}
\newif\ifreptheorem
\newif\ifshowproofs
\newcommand{\switchreptheorem}[2]{
  \ifreptheorem
    #1\xspace
  \else
    #2\xspace
  \fi
}
\newcommand{\replabel}[1]{
  \ifreptheorem
    \tag{\ref{#1}}
  \else
    \label{#1}
  \fi
}
\BODY\end{reptheorem}
\BODY\end{theorem}
\BODY\end{replemma}
\BODY\end{lemma}
\BODY\end{repcorollary}
\BODY\end{corollary}
\BODY\end{proof}\label{app:prf:#1}
\newenvironment{titled-paragraph}[1]{\textbf{#1:}}{}
\newcommand{\iid}{\emph{i.i.d.}\xspace}
\newcommand{\wrt}{w.r.t.\xspace}
\newcommand{\ie}{i.e.\xspace}
\newcommand{\eg}{e.g.\xspace}
\newcommand{\egcite}[1]{\citep[\eg][]{#1}}
\newcommand{\probability}{\ensuremath{\mathrm{Pr}}}
\newcommand{\expectation}{\ensuremath{\mathbb{E}}}
\newcommand{\defeq}{\ensuremath{\vcentcolon=}}
\newcommand{\N}{\ensuremath{\mathbb{N}}}
\newcommand{\R}{\ensuremath{\mathbb{R}}}
\newcommand{\suchthat}[1][]{\ensuremath{\underset{#1}{\mathrm{s.t.\;}}}}
\newcommand{\inner}[2]{\ensuremath{\left\langle {#1}, {#2} \right\rangle}\xspace}
\newcommand{\norm}[1]{\ensuremath{\left\lVert {#1} \right\rVert}\xspace}
\DeclareMathOperator*{\argmin}{argmin}
\newcommand{\indices}[1]{\ensuremath{\left[#1\right]}}
\newcommand{\codecomment}[1]{\`//~\textit{#1}}
\newcounter{lineno}
\newenvironment{pseudocode}{\setcounter{lineno}{0}\begin{tabbing}\textbf{mm}\=mm\=mm\=mm\=mm\=\kill}{\end{tabbing}}
\newcommand{\codename}{\>}
\newcommand{\codeline}{\>\stepcounter{lineno}\textbf{\arabic{lineno}}\'\>}
\DeclareMathOperator{\fix}{fix}
\newcommand{\elementwiseproduct}{\odot}
\newcommand{\elementwiseexp}{\operatorname{.exp}}
\newcommand{\grad}{\ensuremath{\nabla}\xspace}
\newcommand{\subgrad}{\ensuremath{\check{\grad}}\xspace}
\newcommand{\supgrad}{\ensuremath{\hat{\grad}}\xspace}
\newcommand{\stochasticgrad}{\ensuremath{\Delta}\xspace}
\newcommand{\stochasticsubgrad}{\ensuremath{\check{\stochasticgrad}}\xspace}
\newcommand{\stochasticsupgrad}{\ensuremath{\hat{\stochasticgrad}}\xspace}
\newcommand{\parameters}{\ensuremath{\theta}\xspace}
\newcommand{\Parameters}{\ensuremath{\Theta}\xspace}
\newcommand{\multipliers}{\ensuremath{\lambda}\xspace}
\newcommand{\Multipliers}{\ensuremath{\Lambda}\xspace}
\newcommand{\matrixmultipliers}{\ensuremath{M}\xspace}
\newcommand{\Matrixmultipliers}{\ensuremath{\mathcal{M}}\xspace}
\newcommand{\numconstraints}{\ensuremath{m}\xspace}
\newcommand{\matrixmultipliersize}{\ensuremath{\tilde{m}}\xspace}
\newcommand{\constraint}[1]{\ensuremath{g_{#1}}\xspace}
\newcommand{\objective}{\constraint{0}\xspace}
\newcommand{\proxyconstraint}[1]{\ensuremath{\tilde{g}_{#1}}\xspace}
\newcommand{\lagrangian}{\ensuremath{\mathcal{L}}\xspace}
\newcommand{\bregman}[1]{\ensuremath{D_{#1}}\xspace}
\newcommand{\bound}[1]{\ensuremath{B_{#1}}\xspace}
\newcommand{\Radius}{\ensuremath{R}\xspace}
\newcommand{\margin}{\ensuremath{\gamma}\xspace}
\newcommand{\deltamatrix}{\ensuremath{A}\xspace}
\newcommand{\approximation}{\ensuremath{\rho}\xspace}
\newcommand{\oracle}{\ensuremath{\mathcal{O}_{\approximation}}\xspace}
\begin{document}

\title{Optimization with Non-Differentiable Constraints with Applications to Fairness, Recall, Churn, and Other Goals}

\author{\name Andrew Cotter \email acotter@google.com\\
\name Heinrich Jiang \email heinrichj@google.com\\
\name Serena Wang \email serenawang@google.com \\
\name Taman Narayan \email tamann@google.com \\
\name Maya Gupta \email mayagupta@google.com \\
\addr Google AI\\
1600 Amphitheatre Pkwy\\
Mountain View, CA, USA \\
\name Seungil You \email seungil.you@gmail.com \\
\addr Kakao Mobility \\
Seongnam-si, Gyeonggi-do, South Korea \\
\name Karthik Sridharan \email sridharan@cs.cornell.edu \\
\addr Cornell University\\
Ithaca, NY, USA \\
}
\editor{}

\maketitle

\begin{abstract}
\TODO{Draft mode is on! Comment out the {\textbackslash}drafttrue statement to turn off TODO and NOTE comments before submitting}
We show that many machine learning goals, such as improved fairness metrics, can be expressed as constraints on the model's predictions, which we call rate constraints. We study the problem of training non-convex models subject to these rate constraints (or any non-convex and non-differentiable constraints). In the non-convex setting, the standard approach of Lagrange multipliers may fail. Furthermore, if the constraints are non-differentiable, then one cannot optimize the Lagrangian with gradient-based methods. To solve these issues, we introduce the proxy-Lagrangian formulation. This new formulation leads to an algorithm that produces a stochastic classifier by playing a two-player non-zero-sum game solving for what we call a semi-coarse correlated equilibrium, which in turn corresponds to an approximately optimal and feasible solution to the constrained optimization problem. We then give a procedure which shrinks the randomized solution down to one that is a mixture of at most $m+1$ deterministic solutions, given $m$ constraints. This culminates in algorithms that can solve non-convex constrained optimization problems with possibly non-differentiable and non-convex constraints with theoretical guarantees.  We provide extensive experimental results enforcing a wide range of policy goals including different fairness metrics, and other goals on accuracy, coverage, recall, and churn.
\end{abstract}

\begin{keywords} constrained optimization, non-convex, fairness, swap regret, non-zero-sum game
\end{keywords}

\section{Introduction}\label{intro}
We seek to provide better ways to control machine learning to meet societal, legal, and practical goals, and to take advantage of different kinds of side information and intuition that practitioners may have about their machine learning problem.  In this paper, we show that many real-world goals and side information can be expressed as constraints on the model's prediction rates on different datasets, which we refer to as \emph{rate constraints}, turning training into a constrained optimization problem. A simple example of a rate constraint is requiring that a binary classifier predict $80\%$ of examples to be the positive class, leading to the constrained optimization problem:
\begin{align}  
  \label{eq:example} \min_{\theta} & \frac{1}{N}  \sum_{j=1}^N  \ell(f(x_j;\theta), y_j)  + R\left(\theta\right) \\
  \notag & \textrm{s.t. }  \frac{1}{N} \sum_{j=1}^N I_{f(x_j; \theta)
  \geq 0} \geq 0.8,
\end{align}
where $\{(x_j, y_j)\}$ is a training set for $j=1, \ldots, N$, $\ell$ is the loss, $R$ is a regularizer, and $I$ is the usual indicator.

\subsection{The Broad Applicability of Rate Constraints}
One can express a surprisingly large set of real-world goals using rate constraints.  Here we preview some categories of goals, with more details in \secref{goals}. 

\textbf{Fairness:}
Many fairness goals can be expressed as rate constraints, including the popular fairness goal of \emph{statistical parity}. For example, one can constrain a classifier's
positive prediction rate for men to be within ten percent of its rate for women.  Other fairness goals that can be expressed as rate constraints are \emph{equal opportunity} and \emph{equal odds} \citep{Hardt:2016}, as well as fairness goals we encounter in real-world problems but have not previously seen in the machine learning literature, such as \emph{no worse off}. 

\textbf{Performance Measures:} Some metrics can be expressed as rate constraints, though approximations may be needed. For example, one can lower-bound the accuracy on particular slices of the data, or the recall.  Precision and win-loss ratio (WLR) compared to a baseline classifier can be expressed with rate constraints, however, there are some caveats about how satisfying constraints on these metrics will generalize to test samples (details below).  AUC can be approximated as a set of rate constraints, using the approximation proposed in \citet{Eban:2017}.

\textbf{Churn:}
Churn is the probability that each sample's classifier decision will change if the new classifier is deployed, compared to the decision of the classifier it is replacing. Reducing classifier \emph{churn} \citep{Cormier:2016,Goh:2016} is important in many practical machine learning systems to improve overall system stability and to make changes easier to measure and test.   Churn can be expressed with rate constraints, thus one can constrain the churn of a new classifier to some desired level. 

\textbf{Multiple Training Datasets:} Another application is when one has multiple training sets of varying quality. For example, one might have a small set of expertly rated training data, but also a large set of noisy training data one would also like to use. One can train using rate constraints to require the classifier to achieve a certain accuracy on the small expertly rated data, while training to minimize errors on large noisy data.

\textbf{Unlabeled Datasets:} Many of the rate constraints we discuss do not require labels, such as the fairness goal of statistical parity, coverage, or churn. This enables one to take advantage of large unlabeled datasets, which are cheaper to obtain than labeled data.

\subsection{Why Constrain? Why Not Penalize?}
For many of these problems, one could instead train the model with an additive penalty, rather than a constraint as in (\eqref{example}), to encourage the desired behavior. Given multiple goals, one could use a linear combination of multiple penalties.  However, that approach requires the practitioner to determine the right weight for each penalty; and those weights
may interact with each other in unexpected ways, particularly if the different goals are defined on different datasets. 

We have found that specifying a desired constraint for each goal is in practice cleaner and easier for practitioners. The key reason is that a constraint has absolute meaning, making it possible for a practitioner to specify their goal as a constraint without regard for the presence of other constraints. For example, the meaning of requiring a classifier to have $80\%$ recall in India does not depend on whether there is one or one hundred other such locale-specific constraints on the classifier.  We also found that using hard constraints leads to a more understandable machine learning model because it is clearer what the model was trained to do, and it is clearer to measure and verify whether the training sufficiently achieved the practitioner's intent for each individual goal.

\subsection{Training with Constraints: }
Training with rate constraints poses some hard challenges:
\begin{enumerate}
\item \textbf{Non-convex:} For nonlinear function classes, such as neural networks, the objective and constraint functions will be non-convex.
\item \textbf{Non-differentiable:} Rate constraints are 
  linear combinations of positive and negative classification rates, that is, they are made up of \emph{indicator} functions (0-1 losses) which are not even semi-differentiable (as in the example of \eqref{example}).
\item \textbf{Data-dependent:} The constraints are data-dependent, and for large datasets may be expensive to compute.
\end{enumerate}

While our motivating optimization problem is training with rate constraints, the analysis and algorithms we present will apply generally to constrained optimization problems of the form:
\begin{align} \label{eq:constrained-problem} 
\min_{\theta \in \Theta} & g_0(\theta)\\
\notag \suchthat & g_i(\theta) \leq 0 \textrm{ for } i = 1, \ldots, m,
\end{align}
where $g_0$ and $g_i$ may be non-convex, and $g_i$ may additionally be non-differentiable and expensive to evaluate.

\subsection{The Lagrangian Is Not Optimal for Non-Convex Problems}\label{sec:twoplayergame}



A popular approach to constrained optimization problems of the form (\eqref{constrained-problem}) is the method of Lagrange multipliers, using the Lagrangian defined as follows:
\begin{align} \label{eq:lagrangian}
\mathcal{L}(\theta, \lambda) \stackrel{\triangle}{=} g_0(\theta) + \sum_{i=1}^{m}
    \lambda_i g_i(\theta),
\end{align}
where $\lambda$ is an $(m+1)$-dimensional non-negative vector of Lagrange multipliers. The method of Lagrange multipliers can be viewed as a two-player zero-sum game where one player minimizes (\eqref{lagrangian}) with respect to the model parameters $\theta$, and the other player maximizes (\eqref{lagrangian}) with respect to the Lagrange multipliers $\lambda$. If the objective and constraints are all convex in $\theta$, this becomes a convex game, and under general conditions, has a pure Nash equilibrium~\citep{Neumann:1928} such that there exists a $\theta$ for the first player and a vector of Lagrange multipliers $\lambda$ for the second player such that neither player has the incentive to change their action given the other player's choice. In other words, there is \emph{no duality gap} between the primal (min-max) and dual (max-min) problems in the convex case and thus, solving for a pure Nash equilibrium (equivalently finding a saddle point in the Lagrangian) gives us an optimal and feasible solution to the original constrained optimization problem (\eqref{constrained-problem}).

While applying convex solvers to non-convex problems has proved reasonable for many methods such as stochastic gradient descent (SGD), constrained optimization with Lagrange multipliers may \emph{not even have a stationary point}, and thus may never converge, instead oscillating between different solutions. Moreover, because of the Lagrange duality gap which exists in non-convex problems, the saddle points found will not in general correspond to the desired solution. In game theory parlance, we show that for the non-convex setting of interest, a \emph{pure} Nash equilibrium does not exist in general (an example can be found in the Appendix). 

However, if we allow each player to choose a distribution of solutions (that is, a stochastic solution) instead of a single solution over their respective spaces $\Theta$ and $\Lambda$, and take the value of the Lagrangian to be the expected value over these distributions, then an equilibrium will exist. 

In this paper, we provide algorithms that approximately find such stochastic solution equilibria, and we show that these correspond to nearly-feasible and nearly-optimal {\it stochastic} solutions to the original constrained optimization problem (\eqref{constrained-problem}). The stochastic solution gives a randomized model: to classify an example $x$, we  sample a $\theta^*$ from the distribution over $\Theta$, and our guarantees will be in terms of expectations with respect to this randomized model and the samples. 

\subsection{The Lagrangian Is Impractical for Non-differentiable Constraints}\label{sec:twoplayergame}
Given non-differentiable constraints (such as rate constraints), a major shortcoming of the Lagrangian is that one cannot use gradient-based methods to optimize it. One approach is to use the Lagrangian but replace  non-differentiable constraints with relaxed versions which are differentiable~\egcite{Davenport:2010,Bottou:2011,Eban:2017}. However optimizing with the relaxed versions may lead to solutions which either over-constrain or fail to satisfy the original constraints. 

We introduce what we call the {\it proxy-Lagrangian}, 
where the key idea is to only relax the non-differentiable constraints when necessary. Solving the proxy-Lagrangian poses a considerable amount of technical challenges but leads to a number of interesting insights, and we provide algorithms which attain solutions with precise optimality and feasibility guarantees on the original non-differentiable constraints. 

Overall, our paper gives an end-to-end recipe to provably (with access to an optimization oracle) and efficiently solve non-convex optimization problems with possibly non-differentiable constraints, and whose final solution is a mixture of at most $m+1$ deterministic solutions. In practice, we use SGD in place of the oracle. To our knowledge, this is the first time such a procedure has been found to provably solve such non-convex problems with such irregular constraints and return a sparse solution. 

In addition, for those practical situations where a stochastic model is unappealing, we will also experimentally consider algorithms that do produce deterministic models, though they do not come with guarantees.

\subsection{Main Contributions and Organization}

The main contributions of this paper are: 
\begin{itemize}
\item We show many real-world goals can be addressed and prior knowledge captured by training with rate constraints.

\item We give a new \emph{proxy-Lagrangian} formulation for optimizing non-convex objectives with  non-differentiable constraints.

\item We provide an algorithm that outputs a $m+1$ \emph{sparse} stochastic classifier with theoretical guarantees, where $m$ is the number of constraints.

\item We show that our \emph{proxy-Lagrangian} formulation can also be used to produce a deterministic classifier that may be more practical for some applications, but without guarantees.

\item We experimentally demonstrate that the proposed optimization can be used to train classifiers with rate constraints, on both benchmark datasets and for real-world case studies.
\end{itemize}

Although our motivation and experimental focus is on the problem of training classifiers with rate constraints, our proposed proxy-Lagrangian formulation and theoretical results have broader application to other constrained optimization problems.

We next review related work. Then in \secref{goals} we detail many different goals that can be expressed with rate constraints. We then turn to the question of how to actually optimize with constraints, proposing new algorithms and theoretical results in \secref{ourOptSection}. \secref{experiments} presents a diverse set of experiments on benchmark and real datasets to illustrate the applicability of rate constraints and the proposed optimization.  We close with a discussion of conclusions in \secref{results} and open questions in \secref{conclusions}.

\section{Related Work} \label{sec:related}
We begin by reviewing our own prior work which this paper builds upon, then other work that considers specific rate constraints, and then related work in constrained optimization. 

\subsection{Our Prior Related Work}
In \citet{Goh:2016}, we showed that some fairness goals, low-churn re-training, and  recall lower bounds can all be optimized for by expressing these goals as constraints on the classifier's decisions on targeted datasets, and training the classifier to respect these constraints as part of the empirical risk minimization. We referred to these constraints as \emph{dataset constraints}, but we now use the more precise term \emph{rate constraints} to reflect our focus on the class of constraints that can be written in terms of the classifier's positive and negative decision rates. In this paper, we broaden the set of goals that can be expressed as rate constraints, and provide more insight into how to use rate constraints in practice. 

In \citet{Goh:2016}, the proposed constrained optimization algorithm was limited to linear classifiers. Our proposed method used a new cutting-plane algorithm to iteratively upper-bound the ramp loss with a convex loss, then solve the resulting inner-loop minimizations using an SVM solver. While amenable to theoretical analysis, this strategy is a bit slow and difficult to scale to more than a handful of constraints.  In contrast, in this paper we show we can effectively and efficiently train \emph{nonlinear} classifiers with rate constraints using the more popular and scalable approach of stochastic gradients.  Overall, compared to our prior work in \citet{Goh:2016}, we provide many new algorithmic, theoretical, and experimental contributions. 

Our strategy presented here for treating non-differentiable problems as a non-zero sum two-player game using a proxy Lagrangian formulation can be found on arXiv (\citet{Cotter:2018}), and will be separately submitted to the ALT 2019 conference. This journal paper differs by being more comprehensive and complete, with more discussion of how a broad set of goals can be expressed as rate constraints and much more comprehensive experiments.  

A key issue raised in this paper is how well satisfying rate constraints on a training set generalizes to new data drawn from the same distribution (\eg on a testing set, or at evaluation time). We build on the work presented in this paper to address this generalization question in a separate paper~\citep{Cotter:generalization}.

\subsection{Other Related Work in Training with Rate Constraints}
A number of special cases of rate constraints have been previously considered in the literature.

\citet{Mann:2007} and follow-on work \citep{Bellare:2009,Mann:2010} optimized probabilistic models with constraints, to incorporate side information about the prior priors on class labels (which we call \emph{coverage constraints}). They note their strategy could also be applied to any constraints that can be written as an expectation over a score on the random $(X,Y)$ samples. They incorporated this side information as an additive regularizer and penalized the relative entropy between the given priors and estimated multi-class logistic regression models \citep{Mann:2007}. They noted their approximation for the indicator could lead to degenerate solutions (see their paper for details and how they addressed the problem with additional regularization). 

Neyman-Pearson classification trains with a constraint on the false positive rate \citep{Scott:2005}, and a number of researchers have investigated this special case.
\citet{Eban:2017} optimized the model parameters and Lagrangian multiplier using stochastic gradients with a hinge approximation for the indicators in the empirical loss and constraints, and took the last training iterate as their solution. We compare to that optimization strategy in our experiments (listed as \emph{Hinge Last} in the result tables). Similarly,  \citet{Davenport:2010} optimized Neyman-Pearson support vector machines  with hinge loss relaxations,  using coordinate descent. \citet{Bottou:2011} relaxed the indicators to the ramp loss (both in the objective and constraints). 

\citet{Agarwal:2018} recently addressed training classifiers with fairness constraints. Like this work, their proposed algorithm is based on the two-player game perspective. Unlike this paper, they assume a \emph{zero-sum} game, which works because they also assume oracle solvers for the two players, side-stepping the practical issues of dealing with the non-differentible non-convex indicators in the constraints, which is the focus of our algorithmic and theoretical contributions. Similar to this work, they output a stochastic classifier, but do not provide the sparse $m+1$ solution that we present in this work. They also consider a deterministic solution, which they produce by searching over a grid of values for $\lambda$ for the best $\lambda$. They noted in their experimental section that the resulting deterministic solution was generally as good as their stochastic solutions on test data for those experiments they tried it on. As they note, a grid-search over $\lambda$ is less ideal as the number of constraints grows. 

Some other work in training fairer classifiers has used weaker constraints or relaxed them immediately to weaker constraints such as correlation, e.g. \citet{Zafar:2015}. Another set of work in fair classification only corrects a model post-training by optimizing additive group-specific bias parameters,  e.g. \citet{Hardt:2016} and \citet{Woodworth:2017}. \citet{donini2018empirical} studies optimization of fairness constraints for kernel methods by formulating the fairness constraints as orthogonality constraints.

\subsection{Other Types of Constraints On Machine Learned Models}
We focus on rate constraints in this paper, which differ from some other constrained machine learning tasks in that: \emph{(i)} because they depend on $f(x)$, they generally will depend on every parameter in the model, \emph{(ii)} thus are usually relatively expensive to compute, and \emph{(iii)} we do not generally expect to have a very large number of constraints (for practical problems generally 2-1000). 

These qualities are different than the popular constrained machine learning problem of \emph{shape constraints}, which may entail adding constraints to training to make the model monotonic (see e.g. \citep{Barlow:72,Groeneboom:2014,Gupta:2016,Canini:2016,luss2017bounded,You:2017,Lafferty:2018} 
and/or convex/concave~\egcite{Pya:2015,Chen:2016,DimReturns:2018}. Shape constraints generally entail adding many sparse cheap-to-evaluate constraints. For example, for isotonic regression on $N$ sample points, there are $O(N)$ constraints, and each is a function of only two model parameters~\citep{Barlow:72}. As another examples, for diminishing returns constraints on ensembles of lattices, there are $O(K)$ constraints where $K$ is the number of model parameters, but each constraint is a function of only three model parameters \citep{DimReturns:2018}. Problems with many cheap sparse constraints can be well-handled by stochastic sampling of the constraints, as in \citep{Cotter:2016}, unlike for rate constraints. 

Another type of constrained machine learning aims to constrain the model parameters to obey other properties, such as physical limits on the learned system (see e.g. \citet{Long:2018,Stewart:2017}). These constraints generally do not take the form of rate constraints, but such constrained machine learning models may also benefit form the presented algorithms and theory. 

\subsection{Related Work in Constrained Optimization as a Two Player Game}

Our constrained optimization algorithms and analyses build on the long history of treating constrained optimization as a two-player game: \citet{Arora:2012} surveys some such work, and there are several
more recent examples, e.g. \citep{Agarwal:2018,Kearns:2017,Narasimhan:2018}. 

In this paper we extend prior work in treating constrained optimization as a two-player game in three key ways. First, we introduce a shrinking procedure that significantly simplifies a  ``$T$-stochastic'' solution (\ie stochastic classifiers supported on all $T$ of the iterates) to a sparse ``$m$-stochastic'' solution (stochastic classifiers supported on only $m+1$ iterates, where $m$ is the number of constraints). Secondly, to handle non-differentiable constraints, we propose a new proxy-Lagrangian \emph{non-zero-sum} formulation, whereas prior work formulates the optimization as a zero-sum game. Third, we consider a broader set of problems than some of this prior work. For example, \citet{Agarwal:2018} propose a Lagrangian-based approach that is very similar we outline in \secref{lagrangian}, but only tackle fairness
constraints. Here we can handle any problem of the form of Equation 2. 

For example, our contributions also apply to robust optimization problems of the form:
\begin{equation*}
  \min_{\parameters \in \Parameters} \; \max_{i \in \indices{\numconstraints}}
  \constraint{i}\left(\parameters\right)
\end{equation*}

The recent work of \citet{Chen:2017} addresses non-convex robust optimization. Like both us and \citet{Agarwal:2018}, they (i) model such a problem as a
two-player game where one player chooses a mixture of objective functions, and
the other player minimizes the loss of the mixture, and (ii) they find a
\emph{distribution} over solutions rather than a pure equilibrium. These
similarities are unsurprising in light of the fact that robust optimization can
be reformulated as constrained optimization via the introduction of a slack
variable:
\begin{align}
  \label{eq:related-work:robust}
  \min_{\parameters \in \Parameters, \xi \in \Xi} \; & \xi \\
  \notag \suchthat & \forall i \in \indices{\numconstraints} . \xi \ge
  \constraint{i}\left(\parameters\right)
\end{align}
Correspondingly, one can transform a robust problem to a constrained one at the
cost of an extra bisection search~\egcite{Christiano:2011,Rakhlin:2013}. As
this relationship suggests, our main contributions can be adapted to the robust
optimization setting. In particular: (i) our proposed shrinking procedure can be
applied to \eqref{related-work:robust} to yield a distribution over only $m+1$
solutions, and (ii) one could perform robust optimization over
non-differentiable (even discontinuous) losses using ``proxy objectives,'' just
as we use proxy constraints.

\subsection{Other Strategies for Constrained Optimization}

There are other strategies for constrained optimization, each of which we argue is not well-suited to the problem of training classifiers with rate constraints. 

The complexity of the constraints makes it unattractive to use approaches that require
projections, such as projected SGD, or optimization of constrained subproblems, such as 
Frank-Wolfe~\citep{Hazan:2012,Jaggi:2013,Garber:2013}). 

Another strategy for constrained optimization is to penalize
violations~\egcite{Arora:2012,Rakhlin:2013,Mahdavi:2012,Cotter:2016,yang2017richer}, for
example by adding $\gamma \max_{i \in \indices{m}} \max\left\{0, 
\constraint{i}\left(\parameters\right)\right\}$ to the objective, where
$\gamma\in\R_+$ is a hyperparameter, and optimizing the resulting problem using
a first order method.  This strategy is a poor match to rate constraints for two reasons. First,  if the constraint functions are
non-(semi)differentiable, as in the indicators used in rate constraints. Second, each rate constraint is data-dependent, so evaluating $\constraint{i}$, or even determining whether it
is positive (as is necessary for such methods, due to the max with $0$), requires enumerating over the
entire constraint dataset, making this incompatible with the use with a 
computationally-cheap stochastic gradient optimizer. 


\section{How To Use Rate Constraints}\label{sec:goals}

In \citet{Goh:2016}, we showed that a number of useful machine learning goals can be expressed as constraints on the classifier's mean decisions on different datasets, including recall, coverage (the positive or negative classification proportion), churn, and some different fairness metrics. In this section, we first detail the mathematical formulation of rate constraints and the resulting constrained empirical risk minimization training. Then, we provide a list of metrics that can be expressed as rate constraints in \tabref{goals}, and detail in the following subsections how these rate constraints can be used to impose a broad set of policy goals and take advantage of side information. 

Given a model $f(x) \in \mathbb{R}$, a dataset $D$, and using $I$ to denote the usual indicator, define the classifier's positive and negative classification rates on $D$ as $p^{+}\left(D\right)$ and $p^{-}\left(D\right)$, where,
\begin{equation}\label{eq:rates}
  p^+(D) \stackrel{\triangle}{=} \frac{1}{| D |} \sum_{x \in D} I_{f(x; \theta)
  \geq 0}
  \;\;\; \: \textrm{and} \:  \;\;\;
  p^-(D) \stackrel{\triangle}{=} \frac{1}{| D |} \sum_{x \in D}  I_{f(x;
  \theta) < 0}.
\end{equation}

A constraint that can be expressed in terms of a non-negative linear combination of positive classification rates $p^+\left(D_k\right)$ and  negative classification rates  $p^-\left(D_k\right)$ over different datasets $\{D_k\}$ we call a \emph{rate constraint}:
\begin{equation}\label{eq:rateConstraint}
 \sum_{k=1}^K  \alpha_k p^+\left(D_k\right)
  + \beta_k p^- \left(D_k\right) \leq \kappa.
\end{equation}

\tabref{goals} shows how different choices of scalars $\alpha_k, \beta_k,  \kappa \in \mathbb{R}$ and datasets $\{D_k\}$ correspond to different standard performance metrics like accuracy and recall.
One can add $m$ rate constraints to the standard structural risk minimization to train a classifier with parameters $\theta$ on train dataset $D_0$, producing the constrained empirical risk minimization :
\begin{align}  \label{eq:loss}  
\min_{\theta} & \frac{1}{| D_0 |} \sum_{(x,y) \in D_0}  \ell(f(x;\theta), y)  + R\left(\theta\right) \\
   \textrm{s.t. }  &  \sum_{k=1}^{K_i}  \alpha_{ik} p^+\left(D_{ik}\right)
  + \beta_{ik} p^-\left(D_{ik}\right)  \leq \kappa_{i} \;\;
  \textrm{for } i=1, \ldots, m, \nonumber
\end{align}
where $\alpha_{ik}, \beta_{ik} \in \mathbb{R}$, $D_{ik}$ is the $k$th dataset for the $i$th constraint, $K_i$ is the number of datasets used to specify the $i$th constraint, and $\kappa_i \in \mathbb{R}$.

Throughout this work, we focus on inequality constraints, for lower-bounding or upper-bounding some rate. Equality constraints can be imposed by using both a lower-bound and upper-bound inequality constraint, though we suggest doing so with some margin between the lower and upper bound to make the optimization problem easier. 

For some applications it is notationally more convenient to drop the normalization: let $c^{+}\left(D\right)$ and $c^{-}\left(D\right)$ denote the count of the positive and negative classifications: 
\begin{equation}\label{eq:counts}
  c^+(D) \stackrel{\triangle}{=} \sum_{x \in D} I_{f(x; \theta)
  \geq 0}
  \;\;\; \: \textrm{and} \:  \;\;\;
  c^-(D) \stackrel{\triangle}{=}  \sum_{x \in D}  I_{f(x;  \theta) < 0}.
\end{equation}

In the rest of this section we show how different rate constraints can be used to impose various policy goals or capture side information. A key insight is that one can add constraints just on specific groups or subsets of the dataset by the choice of the constraint datasets $\{D_k\}$, which makes this approach particularly useful for fairness goals or other slice-specific metrics that are measured in terms of statistics on different datasets (see Table 2 and further details below).

\begin{table*}[t]
\caption{Examples of Metrics Expressed with Rates}
\label{tab:goals}
\begin{center}
\begin{tabular}{ll}
\toprule
$D$ & Set of examples \\
$D[*]$ & Subset of $D$ that satisfies expression *, e.g. $D[x \in \textrm{male}]$ is \\ & the subset of $D$ of male examples,  $D[y=1]$ is  the subset \\& of $D$ whose label is $1$, etc.  \\
$p^+(D) \in [0,1]$ & Proportion of $D$  classified positive \\
$p^-(D) \in [0,1]$ & Proportion of $D$  classified negative \\
$c^+(D) \in \mathbb{N}$ & Count of $D$  classified positive: $c^+(D) = |D| \; p^+(D) $  \\
$c^-(D) \in \mathbb{N}$ & Count of $D$  classified negative: $c^-(D) = |D| \; p^-(D) $  \\
Coverage & $p^+(D)$ \\
True Positives (TP) & $ c^+(D[y=1])$ \\
False Positives (FP) & $ c^+(D[y=-1])$ \\
True Negatives (TN) & $ c^-(D[y=-1])$ \\
False Negatives (FN) & $ c^-(D[y=1])$ \\
Recall  & $p^+(D[y=1])$  \\
Precision & $ c^+ (D[y=1])  / c^+(D)$  \\
Accuracy  & $(c^+(D[y = 1])  + c^-(D[y = -1]) / |D|$ \\
$h$ & A fixed classifier taken as given  \\
AUCROC & $\lim_{L, J \rightarrow \infty} \frac{1}{L} \sum_{\ell=1}^L \hspace{3mm} \max_{j \in [J] : p^{+}_{\alpha_j}(D[y=-1]) \le \frac{\ell}{L}} \hspace{3mm} p^{+}_{\alpha_j}(D[y=1])$ \\
Wins Compared to $h$ &  $c^+(D[h = -1, y = 1]) + c^-(D[h = 1, y = -1]$ \\ 
Losses Compared to $h$ &  $c^+(D[h = -1, y = -1]) + c^-(D[h = 1, y = 1]$ \\
Win Loss Ratio (WLR) & Wins Compared to $h$ / Losses Compared to $h$\\
Churn & $(c^+(D[h = -1]  + c^-(D[h = 1]))/ |D| $\\
Loss-only Churn & $(c^+(D[h = -1, y = -1]) + c^-(D[h = 1, y = 1]) / |D[h=y]| $ \\
\end{tabular}
\end{center}
\end{table*}

\subsection{Coverage Constraints}
Coverage is the proportion of classifications that are positive: $p^+(D)$ (a variant is \emph{negative coverage} $p^-(D)$.). For example, if a company wants to train a classifier to identify promising repeat customers, and knows it will use the classifier to positively predict $10\%$ of all customers to receive a printed catalog, then one could train the classifier with a $10\%$ coverage constraint.  

Coverage constraints can also be used to capture prior knowledge in the training. For example, if training a model to classify Americans' sex as male or female, one can regularize the classifier by incorporating the prior knowledge that $51\%$ of examples should be predicted to be women, by using a $51\%$ coverage constraint on the full dataset.

Using slice-specific coverage constraints can capture more side information.  For example, for the American male/female classifier, in addition to the overall coverage constraint of $51\%$, one could also add constraints capturing prior information about state sex distributions, such as constraining $51.5\%$ of
examples from New York to be classified as women, but constraining only $47.6\%$ of examples
from Alaska to be classified as women. 

A key advantage of coverage constraints is that they do not require labeled examples. This enables one to train on labeled training examples from a convenient distribution (such as actively-sampled examples), but add a coverage constraint to ensure the classifier is optimized to positively classify the desired proportion of positive classifications on a larger unlabeled dataset drawn \iid from the true underlying distribution. This usage of a coverage constraint forms a simple semi-supervised regularization of the classifier.

Another good use case for coverage constraints is to help make a controlled comparison of two model structures. For example, suppose one has a model type A (\eg a kernel SVM), and wonders if an alternative B (say, a DNN) is better, where A makes positive predictions on $40\%$ of test examples, while B appears to be more accurate, but only predicts the positive class for $35\%$ of test examples. If precision errors are worse than recall errors, we cannot be sure that B is better than A. We can try to quantify the misclassification costs of a false negative vs. a false positive, but that may be difficult to agree upon. It would be simpler to compare B to A at the same coverage as A, or at some other relevant coverage. Coverage-matching B to A can be done by tuning the decision threshold of B post-training, but including the coverage constraint in the training can help B learn to be a better classifier when tested at the desired coverage.

\subsection{Constraints on Accuracy, Recall, Precision, AUC}

As noted in \tabref{goals}, many standard performance metrics for classifiers, including accuracy, can be expressed in terms of rates, and thus constrained with rate constraints. 

Recall, defined as TP / (TP + FP),  can be constrained as $p^+(D[y=1]) > \kappa$ for the user's choice of $\kappa \in [0,1]$.  For example, one may wish to train a classifier that awards free lunches to poor students for overall accuracy, but constrain it to obtain at least $95\%$ recall on the most needy students.

Precision constraints can be used but are more subtle. Precision can be expressed in rates as $c^+ (D[y=1])  / c^+(D)$, and thus to get precision of at least $\kappa$, one can add a rate constraint:
\begin{equation}\label{eqn:precisionConstraint}
c^+ (D[y=1]) - \kappa  c^+(D)  \geq 0.
\end{equation}
If (\ref{eqn:precisionConstraint}) holds, then mathematically the precision is lower-bounded by $\kappa$ on the dataset $D$.  However, since the expectation of a ratio does not equal the ratio of the expected numerator and denominator, analyzing how well the empirical constraint holding generalizes to new \iid samples is not straightforward, and violating the constraint (\ref{eqn:precisionConstraint}) by some $\epsilon > 0$ does not translate directly into precision. 

The ROC AUC  (Area under the ROC curve)  can be approximated using a rate constraint, as in \citet{Eban:2017}. The ROC curve is obtained by plotting the true positive rate (TPR) vs. the false positive rate (FPR).  First, slice up the FPR-axis into $L$ slices (to approximate the required Riemann integral). Then for the $\ell$th slice, consider $J$ different decision thresholds and choose the threshold that maximizes TPR and satisfies the $\ell$th slice FPR bound $\ell/L$, and then the averaged maximum precision across the $L$ FPR slices is bounded: 

\begin{equation}\label{eqn:AUCROCConstraint}
\frac{1}{L} \sum_{\ell=1}^L \hspace{3mm} \max_{j \in [J] : p^{+}_{\alpha_j}(D[y=-1]) \le \frac{\ell}{L}} \hspace{3mm} p^{+}_{\alpha_j}(D[y=1]) \geq \kappa.
\end{equation}
where $p_{\alpha}^{+}(D)  \stackrel{\triangle}{=} \frac{1}{| D |} \sum_{x \in D} I_{f(x; \theta)
  \geq \alpha}$, $c_{\alpha}^{+}(D)  \stackrel{\triangle}{=} \sum_{x \in D} I_{f(x; \theta)
  \geq \alpha}$, and $\alpha_j := \frac{2j-1}{2J}$ for $j \in [J]$.
In particular, $p_{0}^{+} \equiv p^{+}$. Taking $L \rightarrow \infty, J \rightarrow \infty$ will have the expression on the LHS of (\ref{eqn:AUCROCConstraint}) converge to the exact ROC AUC.



\subsection{Churn and Win Loss Ratio Constraints} \label{sec:churn}
In practice, a classifier is often being trained to replace an existing model. In such cases, the new classifier $h$ may be evaluated compared to the existing one.

One common metric to compare two classifiers is the win-loss ratio (WLR), which for a given test set is the number of times the new classifier is right and the old classifier is wrong, divided by the number of times the new classifier is wrong and the old classifier is right.

WLR can be expressed in rates as given in Table \ref{tab:goals}, where we use $D[h=-1]$ to denote the subset of $D$ that is labeled negatively by the classifier $h$, and $D[h=-1,y=1]$ to denote the subset of $D$ of whose training label $y$ is $1$, so that $c^+(D[h=-1,y=1])$ is the number of wins of the new classifier over $h$, and so on. Re-arranging terms, one can constrain for WLR using a rate constraint:
\begin{align} 
&c^+(D[h = -1, y = 1]) + c^-(D[h = 1, y = -1])  \nonumber \\ 
&- \kappa (c^+(D[h = -1, y = -1]) + c^-(D[h = 1, y = 1])) \geq 0, \label{eqn:wlrConstraint}
\end{align}
where $\kappa \in \mathbb{R}^+$ is the lower-bound on the WLR. However, like precision, if the constraint holds on dataset $D$ then the WLR does meet the lowerbound on $D$, but generalization to \iid samples is not straightforward, since the expectation of a ratio does not equal the
ratio of expectations.

WLR constraints on different slices of the data can ensure that a new classifier's gains are not coming at the expense of an important subset of examples. (See also our discussion of \emph{no worse off} and \emph{no lost benefits} for related fairness constraints).

In practice, it is common to test a new classifier by drawing a fresh test set of examples whose classification decisions have changed between the new classifier and $h$, and then only label those changed examples. We refer to this as a  \emph{fresh test}. Fresh tests reduce overfitting to a fixed re-used test set, and reduce costs by only labeling examples whose decisions have changed. Under such test set-ups, even if the two classifiers have the same accuracies, a higher WLR makes it easier to statistically significantly confirm that the new classifier is better than $h$ \citep{Cormier:2016}.

The proportion of a dataset $D$ whose decisions are flipped is called \emph{churn} \citep{Cormier:2016,Goh:2016}. Note that when using a fresh test, the labeling costs scales linearly with the churn (and the size of the test set $D$).  High churn also causes more instability for follow-on systems, and can confuse users. \citet{Goh:2016} proposed
explicitly constraining the churn, which can be directly expressed as the rate constraint: 
\begin{equation}\label{eqn:churnConstraint}
c^+(D[h = -1])  + c^-(D[h = 1])   \leq \kappa |D|,
\end{equation}
where $\kappa \in [0,1]$ is the proportion of $D$ whose classification decision is allowed to flip.  

We note that constraining churn on different slices of the
data with tighter and looser constraints can be useful. For example, if the classifier is to be used worldwide, but evaluating classifier changes is more expensive in
Norway than in Vietnam, or if there is known to be less headroom to improve on
examples from Norway, then it can be beneficial to constrain the churn
tightly for examples from Norway, but loosely for examples from Vietnam.

Of course, constraining churn too tightly limits the potential accuracy gains. Thus we also propose considering loss-only churn constraints, which only penalizes changing decisions if they used to be correct:
\begin{equation}\label{eqn:lossonlychurnConstraint}
c^+(D[h = -1, y = -1])  + c^-(D[h = 1, y = 1])   \leq \kappa |D|,
\end{equation}
where $\kappa \in [0,1]$ is the proportion of $D$ whose classification decision is allowed to flip. 

One disadvantage of constraining loss-only churn is it requires labeled examples, whereas churn constraints can be conveniently employed on a dataset of unlabeled examples. 

\subsection{Group-Specific Goals and Fairness Goals}
An important use case for rate constraints is enforcing metrics for different groups or categories of examples. For example, ensuring that a classifier that identifies family-friendly videos works roughly equally well for different types of adult content. Rate constraints can be used to enforce a broad set of such group-specific goals, as detailed in \tabref{fairness}, where $k$ indexes the $K$ different groups of interest.

A special case of group-specific goals are those that can be interpreted as designed to improve some fairness metric, and in these cases the groups are usually defined as different categories of people, such as races or age brackets. \tabref{fairness} shows that many of the fairness goals already studied in the machine learning literature can be expressed with rate constraints.  However, fairness is a complex moral and policy problem, and depending on the context and application,  different concrete formulations may be appropriate to improve fairness measures, and some may not be group-based at all. 
Many of these fairness goals are designed for applications where positive classification endows a benefit, such as being awarded a loan, a job, or a free lunch. For example, the goal of \emph{statistical parity} reflects that a bank might be legally required to give loans at equal rates to different groups, that is, the classifier is required to provide equal positive rates of classification across groups (see e.g. \citep{Zafar:2015,fish:2016,Hardt:2016,Goh:2016}). Statistical parity is also known as \emph{demographic parity} \citep{Hardt:2016}, and \emph{equal coverage} \citep{Goh:2016}. Notice that a statistical parity constraint ignores the labels of the training data. We introduce the related goal of \emph{minimum coverage}, which enforces a pre-set minimal benefit rate for each group.

Similarly, one of the fairness goals we see in practice but have not noticed in the literature is \emph{no lost benefits}: which requires a model to classify examples positively from each group at least as often as the classifier $h$ that it is replacing. \emph{No lost benefits} is a type of churn goal (see Sec. \ref{sec:churn}) that is measured for the whole group (rather than for individual decisions).

The other fairness goals in the Table \ref{tab:fairness} depend on the training labels. For example, we add the goal \emph{accurate coverage}, which requires the classifier give free lunches to each group to match that group's positive training label rate.  This goal ignores whether the individual predictions are accurate, but tries to ensure that each group overall receives a rate of benefits that it is labeled as deserving.  

Returning to the scenario where the new classifier will replace a current classifier $h$, we propose the \emph{not worse off} fairness goal. For example, suppose we invent a new driving test that is more accurate than a current written driving test at diagnosing whether illiterate people are safe drivers, then \emph{not worse off} requires that the new driving test not reduce accuracy compared to the old test for other groups, e.g. senior citizens and teenagers.  \emph{Not worse off} is a label-dependent group-specific churn goal.

\emph{Equal opportunity} and \emph{equal odds}  \citep{Hardt:2016} also rely on the training labels, but disregard any previous classifier. For example, \emph{equal opportunity} requires that \emph{if} a classifier awards free lunches (positive classification) to half of the east-side children who are labeled as deserving free lunches, then it should also award free lunches to half of the west-side children who are labeled as deserving free lunches. Notice that \emph{equal opportunity} imposes no conditions whatsoever on the negatively-labeled examples (in this case, those shudents who are not labeled as deserving of free lunches). In contrast, the fairness goal of \emph{equal odds} requires both the true positive rate and the false positive rate to be the same for all groups. We add to this category the related goal of \emph{equal accuracy}, which aims to make the classifier equally accurate for the different groups.  

Another fairness goal we find useful in practice but have not previously seen in the literature is \emph{minimum accuracy}, which requires that every group experience some pre-set level of accuracy. \emph{Minimum accuracy} ensures that no group is left behind, but respects that for some problems some groups may be much easier to classify than other groups.  For such problems, cross-group constraints can lead to degenerate solutions, as the only way to make all groups have equal metrics may be to produce a degenerate classifier. 

Fairness goals that depend on the training labels are most compelling when the training examples and labels are believed to have been fairly sampled and labeled. These goals are less compelling when the training data is not entirely trusted, or thought to be misaligned with the policy goals, a situation referred to as \emph{negative legacy} \citep{kamishima:2012}.

\begin{table*}[t]
\caption{Group-Specific and Fairness Goals Expressed As Rate Constraints for Groups $k = 1, \ldots, K$}
\label{tab:fairness}
\begin{center}
\begin{tabular}{ll}
\toprule
Statistical Parity & $ p^+(D_k) = p^+(D) \: \forall k$ \\
Minimum Coverage & $ p^+(D_k) \geq \kappa \: \forall k$ and user-specified $\kappa \in [0,1]$ \\
No Lost Benefits & $p^+(D_k) \geq |(D_k[h=1])| / |D_k| \: \forall k$  \\
Accurate Coverage & $ p^+(D_k) = |D_k[y=1]| / |D_k| \: \forall k$ \\
Equal Opportunity  & $ p^+(D_k[y=1]) = p^+(D[y=1]) \: \forall k$ \\ 
Equal Odds & $ p^+(D_k[y=1]) = p^+(D[y=1]) \: \forall k$ \\
& and $p^+(D_k[y=-1]) = p^+(D[y=-1]) \: \forall k$  \\ 
Equal Accuracy  & $ (c^+(D_k[y=1]) + c^-(D_k[y=-1])) / |D_k| $ \\
& $= (c^+(D[y=1]) + c^-(D[y=-1])) / |D| \: \forall k$  \\
Minimum Accuracy & $(c^+(D_k[y=1]) + c^-(D_k[y=-1])) / |D_k| \geq \kappa \: \forall k$ \\
Accurate Coverage & $ p^+(D_k) = |D_k[y=1]| / |D_k| \: \forall k$ \\
Not Worse Off & $(c^+(D_k[y=1]) + c^-(D_k[y=-1])) / |D_k| \geq |(D_k[y=h])| / |D_k| \: \forall k$  \\

\end{tabular}
\end{center}
\end{table*}

\subsection{Egregious Examples and Steering Examples}\label{sec:steering}
Another use of rate constraints is to constrain the performance on auxiliary labeled datasets to control the classifier. 

For example, \citet{Goh:2016} proposed constraining the classifier for high accuracy on a small set of particularly \emph{egregious examples} that should definitely not be mislabeled.  Egregious examples act as an \emph{integrated unit test}: as the classifier trains it actively is testing to see if it satisfies the constraint on the egregious examples and is able to correct the training if not. 

Another practical example of using an auxiliary labeled dataset we term \emph{steering examples}, which we define as a set of labeled examples that are more accurately labeled than the training set. For example, one may have access to a large but noisy training set of clicks on news articles. However, a click on a news article might be because it was
relevant news, or because it had a catchy headline. We can try to steer the classifier to focus its fitting on the relevant news articles by providing a smaller but expertly-labeled curated set of examples that mark catchy headlines as negative, and then constrain the classifier to
achieve some reasonable accuracy on the steering examples (e.g. $70\%$). The classifier will be forced by the constraint to disregard the incorrectly-labeled training examples. A second example is a classifier whose goal is to determine if an online store should advertise to a given customer. Suppose there is a large dataset of
training examples with the positive label, ``customer clicked advertisement and
\emph{visited} website'', but a relatively small set of examples where the
positive label is, ``customer clicked advertisement and made a
\emph{purchase}.'' It may be better to train on the large set of
``visited'' examples due to its much larger size and coverage, but also to constrain at least some specified accuracy on
the smaller ``purchase'' examples in order to steer the classifier towards prioritizing clicks that lead to purchases.

\subsection{Decision Rule Priors}
Machine learning practitioners often have prior knowledge about a classification problem that they can communicate as a decision rule on a tiny set of features. For example, ``Don't recommend a book to a user if it is in a
language they haven't purchased before.''  We propose incorporating such decision rule priors into the structural risk minimization problem by
constraining a large set of unlabeled samples to satisfy the decision rule.

Such decision rule priors can act as regularizers against noisy and
poorly-sampled training examples, and can produce a classifier that is more
interpretable because it is known to obey the given decision rules
(like all rate constraints, this depends on whether one constrains with slack or not, and exactly how well the satisfied constraint
generalizes to a different draw of \iid samples or non-\iid samples depends on
the dataset used in the constraint, the classifier's function class, and how
hard the constraint is to satisfy).

This proposal is similar to Bayesian Rule Lists (BRL) \citep{Letham:2015} in
that a decision rule (or set of decision rules) is given a priori to training
the model. However, BRL training takes as input a large set of decision rules
and outputs a posterior over the rules, rather than incorporating a decision
rule into a structural risk minimization problem.

\subsection{How To Best Specify Rate Constraints} \label{sec:bestpractices}
For any rate constraint, one wants to allow some slack in order to find a feasible solution. For example, statistical parity could be written as a constraint with \emph{additive slack} like this:
\begin{equation*}
p^+(D)  - p^+(D_k)  \leq \kappa,
\end{equation*}
or with \emph{multiplicative slack} like this:
\begin{equation*}
p^+(D)  -  \kappa p^+(D_k) \geq 0.
\end{equation*}

Our experience is that additive slack tends to be more likely to produce reasonable solutions than multiplicative slack for many constraints. The danger to watch out for is whether the constraint is specified in a way that encourages the training to satisfy the constraint in a suboptimal way. For example, suppose one constrains the false positive rate of each groups to be no worse than $125\%$ of the overall false positive rate (multiplicative slack), then the training is incentized to increase the overall false positive rate because that loosens the constraint further (due to the slack being multiplicative). 

Constraints can also be expressed \emph{pairwise} between groups instead of against the global rate
\begin{equation*}
p^+(D_j)  - p^+(D_k)  \leq \kappa,
\end{equation*}
for all $j, k$ pairs.  Our experience is that constraints that involve larger datasets are generally preferable, as the smaller the dataset used in a constraint the greater the risk of degenerate solutions or overfitting. 

Equality constraints can be expressed by using both a lower-bound and upper-bound inequality constraint.  In practice, we suggest allowing some slack of wiggle-room between the lower and upper bounds in order to make the optimization more stable, as a larger feasible set will make the  stochastic gradient optimization more stable.


\section{Optimizing with Constraints} \label{sec:ourOptSection}

For nonlinear function classes, training a classifier with rate constraints as per (\eqref{loss}) is a non-convex optimization over a non-convex constraint set.  In this section we provide new theoretical insights and algorithms to optimize general non-convex problems with non-convex constraints, then demonstrate our algorithmic proposals work well in practice with multiple real-world constraints in \secref{experiments}. We first outline our two main contributions for this section below.

\textbf{A Minimal Stochastic Solution: }
Algorithms that solve non-convex constrained optimization problems based on regret minimization, which includes our approach as well as previous  work~\egcite{Chen:2017,Agarwal:2018} will output a distribution over $\theta$s which has discrete support over $T$ different $\theta$ (produced at different iterations of the algorithm), requiring us to store and sample from $T$ different models. In practice, large $T$ may be problematic to store and analyze. Surprisingly, we prove that there always exists an equilibrium that has \emph{sparse support} on at most $m+1$ choices of model parameters,  where $m$ is the number of constraints. We use this result to provide a new practical algorithm to shrink the approximated equilibrium down to a nearly-optimal and nearly-feasible solution supported on at most $m+1$ models, which is guaranteed to be at least as good as the original stochastic classifier supported on $T$ models. 

\textbf{Handling Non-Differentiable Constraints: }
A key issue for \eqref{loss} is the non-differentiability of the constraints due to the indicators in the rate constraints. To handle this, in \secref{proxy-lagrangian-equilibrium}, we introduce a new formulation we call the \emph{proxy-Lagrangian} that changes the standard two-player \emph{zero-sum} game to a two-player \emph{non-zero-sum} game, which presents new challenges to analysis. In fact, solving for such a Nash equilibrium is PPAD-complete in the non-zero-sum setting \cite{Chen:2006}. We prove that a particular game theory solution concept, which we call {\it semi-coarse correlated equilibrium}, results in a stochastic classifier that is feasible and optimal. This is surprising because the semi-coarse correlated equilibrium is a weaker notion of equilibrium than Nash equilibrium. We go on to provide a novel algorithm that converges to
such an equilibrium. To our knowledge, we give the first reduction to this particular solution concept and the first practical use for it, which may be of independent interest. Interestingly, the
$\parameters$-player needs to only minimize the usual external regret, but the
$\multipliers$-player must minimize the \emph{swap regret}~\citep{Blum:2007}, a stronger notion of regret. While the resulting distribution is supported on (a possibly large number of) $\left(\parameters,\multipliers\right)$ pairs, applying the same
``shrinking'' procedure as before yields a distribution over only
$\numconstraints+1$ $\parameters$s that is at least as good as the original.

In Section~\ref{sec:lagrangian}, we handle the optimization of the zero-sum Lagrangian game with an oracle-based algorithm and introduce  our proposed ``shrinking'' procedure. Then, in Section~\ref{sec:introduction:non-zero-sum} we introduce the concept of proxy constraints, describe how it is useful to handle non-differentiable constraints, and formulate the non-zero-sum modification of the Lagrangian, which we call the proxy-Lagrangian. Section~\ref{sec:proxy-lagrangian-equilibrium} describes the equilibrium required out of this non-zero-sum game so that it will correspond to an approximately feasible and optimal solution to the constrained optimization problem.  Section~\ref{sec:proxy-lagrangian-algorithm} gives an oracle-based procedure for solving for such an equilibrium. Section~\ref{sec:practical-proxy-lagrangian-algorithm} gives a more practical stochastic gradient-based optimizer along with improved guarantees in the convex setting. Finally, Section~\ref{sec:proxy-lagrangian:shrinking} shows that the "shrinking" procedure holds for the non-zero-sum solution as well.

\subsection{Lagrangian Optimization in the Non-convex Setting}\label{sec:lagrangian}

We start by assuming an approximate Bayesian optimization oracle (defined in \secref{oracle}), which enables us to use the Lagrangian formulation and not relax the non-convex and/or non-differentiable constraints. This setting is a slight generalization of that presented in \citet{Agarwal:2018}.  \algref{oracle-lagrangian} solves for a stochastic solution to the non-convex constrained optimization problem. It proceeds by playing the following for $T$ rounds: the model parameter player plays best-response (that is, the $\theta$ which  minimizes the Lagrangian given the last choice of Lagrange multipliers), and the Lagrange multiplier player plays a regret minimizing strategy (here we use projected SGD). 
 
Our first contribution of this section (in \secref{mixedNash}) is showing that the resulting stochastic classifier is provably approximately feasible and optimal in expectation. This extends the fair classification work of \citet{Agarwal:2018} to our slightly more general setting. Our second contribution comes in \secref{lagrangian:shrinking}:  we will show how the support of the stochastic solution can be efficiently ``shrunk'' to one that is \emph{at least as good}, but is supported on only $m+1$ solutions and is shown to also have a considerable gain empirically. 

%
\begin{algorithm*}[t]

\begin{pseudocode}
\codename $\mbox{OracleLagrangian}\left( \Radius \in \R_+, \lagrangian : \Parameters \times \Multipliers \rightarrow \R, \oracle : \left(\Parameters \rightarrow \R\right) \rightarrow \Parameters, T \in \N, \eta_{\multipliers} \in \R_+ \right)$: \\
\codeline Initialize $\multipliers^{(1)} = 0$ \\
\codeline For $t \in \indices{T}$: \\
\codeline \> Let $\parameters^{(t)} = \oracle\left( \lagrangian\left(\cdot,\multipliers^{(t)}\right) \right)$ \codecomment{Oracle optimization} \\
\codeline \> Let $\stochasticgrad^{(t)}_{\multipliers}$ be a gradient of $\lagrangian\left(\parameters^{(t)},\multipliers^{(t)}\right)$ \wrt $\multipliers$ \\
\codeline \> Update $\multipliers^{(t+1)} = \Pi_{\Multipliers}\left( \multipliers^{(t)} + \eta_{\multipliers} \stochasticgrad^{(t)}_{\multipliers} \right)$ \codecomment{Projected gradient update} \\
\codeline Return $\parameters^{(1)},\dots,\parameters^{(T)}$ and $\multipliers^{(1)},\dots,\multipliers^{(T)}$
\end{pseudocode}

\caption{
  Optimizes the Lagrangian formulation (\eqref{lagrangian}) in the non-convex
  setting via the use of an approximate Bayesian optimization oracle $\oracle$
  (\defref{oracle}) for the $\parameters$-player.
  The parameter $\Radius$ is the radius of the Lagrange multiplier space
  $\Multipliers \defeq \left\{ \multipliers \in \R_+^{\numconstraints} :
  \norm{\multipliers}_1 \le \Radius \right\}$, and the function
  $\Pi_{\Multipliers}$ projects its argument onto $\Multipliers$ \wrt the
  Euclidean norm.
}

\label{alg:oracle-lagrangian}

\end{algorithm*}

\subsubsection{Oracle for Unconstrained Non-convex Minimization (Additive Approximation)} \label{sec:oracle}
\algref{oracle-lagrangian}, like \citet{Chen:2017}'s
algorithm for robust optimization, requires an \emph{oracle} for performing
approximate non-convex minimization:
\begin{definition}
  \label{def:oracle}
  A $\approximation$-approximate Bayesian optimization oracle is a function
  $\oracle : \left(\Parameters \rightarrow \R\right) \rightarrow \Parameters$
  for which:
  \begin{equation*}
    f\left( \oracle\left(f\right) \right) \le \inf_{\parameters^* \in
    \Parameters} f\left(\parameters^*\right) + \approximation
  \end{equation*}
  for any $f : \Parameters \rightarrow \R$ that can be written as a nonnegative
    linear combination of the objective and constraint functions
    $\objective,\constraint{1},\dots,\constraint{\numconstraints}$.
\end{definition}
with the $\parameters$-player using this oracle, and the $\multipliers$-player
using projected gradient ascent. We note that this is a standard assumption in order to obtain theoretical guarantees. (e.g. see \citet{Chen:2017}, which uses a multiplicative instead of additive approximation).

\subsubsection{Approximate Mixed Nash Equilibrium} \label{sec:mixedNash}
We characterize the relationship between an
approximate Nash equilibrium of the Lagrangian game, and a nearly-optimal
nearly-feasible solution to the non-convex constrained problem
(\eqref{constrained-problem}) in our theorem below.  This theorem has a few differences from the more typical equivalence between Nash equilibria and optimal feasible solutions in the convex setting. First, it characterizes \emph{mixed} equilibria, in that uniformly sampling from the sequences $\theta^{(t)}$ and
$\lambda^{(t)}$ can be interpreted as defining distributions over
$\Parameters$ and $\Multipliers$. 
Second, we
require compact domains in order to prove convergence rates (below) so
$\Multipliers$ is taken to consist only of sets of Lagrange multipliers with
bounded $1$-norm~\footnote{In \appref{suboptimality}, this is generalized to
$p$-norms.}.

Finally, as a consequence of this second point, the feasibility guarantee of
\eqref{thm:lagrangian-suboptimality-and-feasibility:optimality-and-feasibility}
(right) only holds if the Lagrange multipliers are, on average, smaller than
the maximum $1$-norm radius $\Radius$. Thankfully, as is shown by the final
result of \thmref{lagrangian-suboptimality-and-feasibility}, if there exists a
point satisfying the constraints with some margin $\gamma >  0$, then there will
exist $R$s that are large enough to guarantee feasibility to within
$O(\epsilon)$.

\begin{theorem}
  \label{thm:lagrangian-suboptimality-and-feasibility}
  Define:
   \begin{equation}
  \Lambda \stackrel{\triangle}{=} \{\lambda \in \R_+^m: \| \lambda \|_1 \leq \Radius \}
  \end{equation}
  and let
  $\parameters^{(1)},\dots,\parameters^{(T)} \in \Parameters$ and
  $\multipliers^{(1)},\dots,\multipliers^{(T)} \in \Multipliers$ be sequences of
  parameter vectors and Lagrange multipliers that comprise an approximate mixed Nash equilibrium, \ie:
  \begin{equation*}
    \max_{\multipliers^* \in \Multipliers} \frac{1}{T} \sum_{t=1}^T
    \lagrangian\left( \parameters^{(t)}, \multipliers^* \right) -
    \inf_{\parameters^* \in \Parameters} \frac{1}{T} \sum_{t=1}^T
    \lagrangian\left( \parameters^*, \multipliers^{(t)} \right) \le \epsilon
  \end{equation*}
  Define $\bar{\parameters}$ as a random variable for which $\bar{\parameters}
  = \parameters^{(t)}$ with probability $1/T$, and let $\bar{\multipliers}
  \stackrel{\triangle}{=} \left(\sum_{t=1}^T \multipliers^{(t)}\right) / T$.
  Then $\bar{\parameters}$ is nearly-optimal and nearly-feasible in expectation:
  \begin{equation*}
    \label{eq:thm:lagrangian-suboptimality-and-feasibility:optimality-and-feasibility}
    \expectation_{\bar{\parameters}}\left[
    \objective\left(\bar{\parameters}\right) \right] \le \inf_{\parameters^*
    \in \Parameters : \forall i .  \constraint{i}\left(\parameters^*\right) \le
    0} \objective\left(\parameters^*\right) + \epsilon
    \;\;\;\; \mathrm{and} \;\;\;\;
    \max_{i \in \indices{\numconstraints}}
    \expectation_{\bar{\parameters}}\left[
    \constraint{i}\left(\bar{\parameters}\right) \right] \le
    \frac{\epsilon}{\Radius - \|\bar{\multipliers}\|_1}
  \end{equation*}
  Additionally, if there exists a $\parameters' \in \Parameters$ that satisfies
  all of the constraints with margin $\gamma$ (\ie
  $\constraint{i}\left(\parameters'\right) \le -\gamma$ for all
  $i\in\indices{\numconstraints}$), then:
  \begin{equation*}
    \|\bar{\multipliers}\|_1 \le \frac{\epsilon +
    \bound{\objective}}{\gamma}
  \end{equation*}
  where $\bound{\objective} \ge \sup_{\parameters \in \Parameters}
  \objective\left(\parameters\right) - \inf_{\parameters \in \Parameters}
  \objective\left(\parameters\right)$ is a bound on the range of the objective
  function $\objective$.
\end{theorem}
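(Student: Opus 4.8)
The plan is to work directly from the equilibrium-gap hypothesis. Write $A \defeq \max_{\multipliers^* \in \Multipliers}\frac{1}{T}\sum_{t=1}^T\lagrangian(\parameters^{(t)},\multipliers^*)$ and $B \defeq \inf_{\parameters^*\in\Parameters}\frac{1}{T}\sum_{t=1}^T\lagrangian(\parameters^*,\multipliers^{(t)})$, so the hypothesis is $A - B \le \epsilon$. The first thing I would record is that since $\lagrangian(\parameters,\multipliers)=\objective(\parameters)+\sum_i\multipliers_i\constraint{i}(\parameters)$ is affine in $\multipliers$, the min-side average collapses to a single Lagrangian evaluated at the averaged multipliers: $\frac{1}{T}\sum_t\lagrangian(\parameters^*,\multipliers^{(t)})=\lagrangian(\parameters^*,\bar{\multipliers})$, hence $B=\inf_{\parameters^*}\lagrangian(\parameters^*,\bar{\multipliers})$. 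I would also abbreviate $V\defeq\frac{1}{T}\sum_t\objective(\parameters^{(t)})=\expectation_{\bar{\parameters}}[\objective(\bar{\parameters})]$ and $\hat{g}_i\defeq\frac{1}{T}\sum_t\constraint{i}(\parameters^{(t)})=\expectation_{\bar{\parameters}}[\constraint{i}(\bar{\parameters})]$, which are exactly the quantities the conclusion bounds.

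For the optimality half, note $0\in\Multipliers$, so taking $\multipliers^*=0$ in the max gives $A\ge V$. For the min-side, evaluate $B=\inf_{\parameters^*}\lagrangian(\parameters^*,\bar{\multipliers})$ at any feasible $\parameters^*$: since $\bar{\multipliers}_i\ge0$ and $\constraint{i}(\parameters^*)\le0$, we get $B\le\objective(\parameters^*)+\sum_i\bar{\multipliers}_i\constraint{i}(\parameters^*)\le\objective(\parameters^*)$. Chaining $V\le A\le B+\epsilon\le\objective(\parameters^*)+\epsilon$ and taking the infimum over feasible $\parameters^*$ yields the optimality bound.

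The feasibility bound is where the real work lies. Here I would evaluate the max over $\multipliers^*$ explicitly. Since $\frac{1}{T}\sum_t\lagrangian(\parameters^{(t)},\multipliers^*)=V+\sum_i\multipliers^*_i\hat{g}_i$ and $\multipliers^*$ ranges over the nonnegative $\ell_1$-ball of radius $\Radius$, the maximizer puts all its mass on the largest coordinate, so $A=V+\Radius\max\{0,\max_i\hat{g}_i\}$. For the min-side I would use that the infimum lies below the average, $B\le\frac{1}{T}\sum_t\lagrangian(\parameters^{(t)},\bar{\multipliers})=V+\sum_i\bar{\multipliers}_i\hat{g}_i$, and then bound $\sum_i\bar{\multipliers}_i\hat{g}_i\le\norm{\bar{\multipliers}}_1\max\{0,\max_i\hat{g}_i\}$ using $\bar{\multipliers}_i\ge0$. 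Substituting both estimates into $A-B\le\epsilon$ produces $(\Radius-\norm{\bar{\multipliers}}_1)\max\{0,\max_i\hat{g}_i\}\le\epsilon$, and dividing (valid once $\norm{\bar{\multipliers}}_1<\Radius$) gives the claimed bound on $\max_i\hat{g}_i$. The delicate point is that $\max\{0,\max_i\hat{g}_i\}$ appears on both sides, so the estimate is self-referential: one must extract the $\Radius$ factor from the primal maximization and the matching $\norm{\bar{\multipliers}}_1$ factor from the dual evaluation before the two can be collected on the left.

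Finally, for the multiplier bound, assume a strictly feasible $\parameters'$ with $\constraint{i}(\parameters')\le-\gamma$ for all $i$. Plugging $\parameters'$ into the min-side gives $B\le\lagrangian(\parameters',\bar{\multipliers})=\objective(\parameters')+\sum_i\bar{\multipliers}_i\constraint{i}(\parameters')\le\objective(\parameters')-\gamma\norm{\bar{\multipliers}}_1$. Combining with the lower bound $B\ge A-\epsilon\ge V-\epsilon\ge\inf_{\parameters}\objective(\parameters)-\epsilon$ (using $A\ge V$ again) yields $\gamma\norm{\bar{\multipliers}}_1\le\objective(\parameters')-\inf_{\parameters}\objective(\parameters)+\epsilon\le\bound{\objective}+\epsilon$, and dividing by $\gamma$ gives the result.
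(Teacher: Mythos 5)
Your proof is correct and follows essentially the same route as the paper's: collapse the min-side via linearity in $\multipliers$ to $\inf_{\parameters^*}\lagrangian(\parameters^*,\bar{\multipliers})$, take $\multipliers^*=0$ and a feasible comparator for optimality, bound the infimum by the average over iterates and evaluate the max over the $\ell_1$-ball (the paper phrases this via dual norms and H\"older for general $p$-norms, of which your $\max\{0,\max_i\hat{g}_i\}$ computation is the $p=1$, $q=\infty$ instance), and plug the margin-$\gamma$ point into the min-side for the multiplier bound. The only difference is that the paper proves the general statement (arbitrary $p$-norm balls, random variables rather than finite sequences) in its appendix and cites the theorem as a special case, whereas you argue the special case directly.
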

\begin{proof}
  This is a special case of \thmref{lagrangian-suboptimality} and
  \lemref{lagrangian-feasibility} in \appref{suboptimality}.
\end{proof}

\subsubsection{Convergence of \algref{oracle-lagrangian}}

\algref{oracle-lagrangian}'s
convergence rate is given by the following lemma:
\begin{lem}{oracle-lagrangian}
  \ifshowproofs
  \textbf{(\algref{oracle-lagrangian})}
  \fi
  Suppose that $\Multipliers$ and $\Radius$ are as in
  \thmref{lagrangian-suboptimality-and-feasibility}, and define
  $\bound{\stochasticgrad} \ge \max_{t \in \indices{T}}
  \norm{\stochasticgrad_{\multipliers}^{(t)}}_2$.
  If we run \algref{oracle-lagrangian} with the step size $\eta_{\multipliers}
  \defeq \Radius / \bound{\stochasticgrad} \sqrt{2T}$, then the result
  satisfies \thmref{lagrangian-suboptimality-and-feasibility}
  for:
  \begin{equation*}
    \epsilon = \approximation + \Radius \bound{\stochasticgrad}
    \sqrt{\frac{2}{T}}
  \end{equation*}
  where $\approximation$ is the error associated with the oracle $\oracle$.
\end{lem}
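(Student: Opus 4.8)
The plan is to bound the mixed-Nash gap appearing in \thmref{lagrangian-suboptimality-and-feasibility} by adding and subtracting the ``diagonal'' term $\frac{1}{T}\sum_{t=1}^T \lagrangian(\parameters^{(t)}, \multipliers^{(t)})$, which splits the gap into the external regret of the \multipliers-player and the per-round suboptimality of the \parameters-player. Concretely, define
\begin{align*}
(\mathrm{A}) &\defeq \max_{\multipliers^* \in \Multipliers} \frac{1}{T}\sum_{t=1}^T \left[ \lagrangian\left(\parameters^{(t)}, \multipliers^*\right) - \lagrangian\left(\parameters^{(t)}, \multipliers^{(t)}\right) \right], \\
(\mathrm{B}) &\defeq \frac{1}{T}\sum_{t=1}^T \lagrangian\left(\parameters^{(t)}, \multipliers^{(t)}\right) - \inf_{\parameters^* \in \Parameters} \frac{1}{T}\sum_{t=1}^T \lagrangian\left(\parameters^*, \multipliers^{(t)}\right).
\end{align*}
Since the subtracted diagonal term in $(\mathrm{A})$ does not depend on $\multipliers^*$, the $\frac{1}{T}\sum_t \lagrangian(\parameters^{(t)},\multipliers^{(t)})$ contributions cancel when summing, so the equilibrium gap equals exactly $(\mathrm{A}) + (\mathrm{B})$. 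I would then bound each piece and add them.

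For $(\mathrm{A})$, the key observation is that $\lagrangian(\parameters,\cdot)$ is affine in \multipliers: its \multipliers-gradient $\stochasticgrad_{\multipliers}^{(t)} = \left(\constraint{1}(\parameters^{(t)}),\dots,\constraint{\numconstraints}(\parameters^{(t)})\right)$ does not depend on \multipliers, so $\lagrangian(\parameters^{(t)},\multipliers^*) - \lagrangian(\parameters^{(t)},\multipliers^{(t)}) = \inner{\stochasticgrad_{\multipliers}^{(t)}}{\multipliers^* - \multipliers^{(t)}}$ holds with equality. The projected-ascent update in \algref{oracle-lagrangian} is therefore precisely online gradient ascent on these linear payoffs, and the standard telescoping argument gives, for every $\multipliers^* \in \Multipliers$,
\begin{equation*}
\frac{1}{T}\sum_{t=1}^T \inner{\stochasticgrad_{\multipliers}^{(t)}}{\multipliers^* - \multipliers^{(t)}} \le \frac{\norm{\multipliers^{(1)} - \multipliers^*}_2^2}{2 \eta_{\multipliers} T} + \frac{\eta_{\multipliers} \bound{\stochasticgrad}^2}{2}.
\end{equation*}
Using $\multipliers^{(1)} = 0$, $\norm{\multipliers^*}_2 \le \norm{\multipliers^*}_1 \le \Radius$ for all $\multipliers^* \in \Multipliers$, and $\norm{\stochasticgrad_{\multipliers}^{(t)}}_2 \le \bound{\stochasticgrad}$, substituting $\eta_{\multipliers} = \Radius/(\bound{\stochasticgrad}\sqrt{2T})$ and taking the maximum over $\multipliers^*$ bounds $(\mathrm{A})$ by $\Radius \bound{\stochasticgrad}\sqrt{2/T}$ (the step size is chosen so the two terms on the right collapse into this single expression).

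For $(\mathrm{B})$, I would first use that the infimum of an average dominates the average of infima, $\inf_{\parameters^*}\frac{1}{T}\sum_t \lagrangian(\parameters^*,\multipliers^{(t)}) \ge \frac{1}{T}\sum_t \inf_{\parameters}\lagrangian(\parameters,\multipliers^{(t)})$, so that $(\mathrm{B}) \le \frac{1}{T}\sum_t \left[\lagrangian(\parameters^{(t)},\multipliers^{(t)}) - \inf_{\parameters}\lagrangian(\parameters,\multipliers^{(t)})\right]$. Because $\multipliers^{(t)} \in \Multipliers \subseteq \R_+^{\numconstraints}$, the map $\lagrangian(\cdot,\multipliers^{(t)}) = \objective + \sum_i \multipliers^{(t)}_i \constraint{i}$ is a nonnegative linear combination of the objective and constraints, so \defref{oracle} applies to $\parameters^{(t)} = \oracle(\lagrangian(\cdot,\multipliers^{(t)}))$ and each summand is at most \approximation; hence $(\mathrm{B}) \le \approximation$. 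Adding the two bounds yields $\epsilon = \approximation + \Radius \bound{\stochasticgrad}\sqrt{2/T}$, as claimed. The main obstacle is term $(\mathrm{A})$: the oracle-driven \parameters-player carries no regret guarantee, so the entire analysis rests on exploiting the affinity of \lagrangian in \multipliers to reduce $(\mathrm{A})$ to a textbook online-gradient-ascent regret that the projection step realizes exactly, together with the step-size arithmetic that merges the two regret terms into $\Radius \bound{\stochasticgrad}\sqrt{2/T}$.
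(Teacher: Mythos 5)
Your proposal is correct and follows essentially the same route as the paper's proof: the paper likewise splits the Nash gap at the diagonal term, bounds the $\multipliers$-player's external regret by its projected-gradient-ascent guarantee (its Corollary on gradient descent, which is exactly the telescoping bound you derive, using $\bound{\Multipliers}=\Radius$ and linearity of $\lagrangian$ in $\multipliers$), and bounds the $\parameters$-player's term by $\approximation$ via the oracle definition together with the inf-of-average inequality. The only difference is presentational—you write out the online-gradient-ascent regret analysis and the nonnegative-linear-combination check explicitly, where the paper cites its lemma and leaves that check implicit.
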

\begin{prf}{oracle-lagrangian}
  Applying \corref{sgd} to the optimization over $\multipliers$ gives:
  \begin{equation*}
    \frac{1}{T} \sum_{t=1}^T \lagrangian\left( \parameters^{(t)},
    \multipliers^* \right) - \frac{1}{T} \sum_{t=1}^T \lagrangian\left(
    \parameters^{(t)}, \multipliers^{(t)} \right) \le \bound{\Multipliers}
    \bound{\stochasticgrad} \sqrt{\frac{2}{T}}
  \end{equation*}
  By the definition of $\oracle$ (\defref{oracle}):
  \begin{equation*}
    \frac{1}{T} \sum_{t=1}^T \lagrangian\left( \parameters^{(t)},
    \multipliers^* \right) - \inf_{\parameters^* \in \Parameters} \frac{1}{T}
    \sum_{t=1}^T \lagrangian\left( \parameters^*, \multipliers^{(t)} \right)
    \le \approximation + \bound{\Multipliers} \bound{\stochasticgrad}
    \sqrt{\frac{2}{T}}
  \end{equation*}
  Using the linearity of $\lagrangian$ in $\multipliers$, the fact that
  $\bound{\Multipliers} = \Radius$, and the definitions of $\bar{\parameters}$
  and $\bar{\multipliers}$, yields the claimed result.
\end{prf}

Combined with \thmref{lagrangian-suboptimality-and-feasibility}, we therefore
have that if $\Radius$ is sufficiently large, then \algref{oracle-lagrangian}
will converge to a distribution over $\Parameters$ that is, in expectation,
$O(\approximation)$-far from being optimal and feasible at a $O(1/\sqrt{T})$
rate, where $\approximation$ is defined in \secref{oracle}.

\subsubsection{Shrinking the Stochastic Solution}\label{sec:lagrangian:shrinking}

A disadvantage of \algref{oracle-lagrangian} is that it results in a mixture of $T$ solutions, which may be large and thus undesirable in practice. However, we can show that much smaller Nash equilibria exist:
\begin{lemma}
  \label{lem:sparse-lagrangian}
  If $\Parameters$ is a compact Hausdorff space, $\Multipliers$ is compact, and
  the objective and constraint functions
  $\objective,\constraint{1},\dots,\constraint{\numconstraints}$ are
  continuous, then the Lagrangian game (\eqref{lagrangian}) has a mixed Nash
  equilibrium pair $\left(\parameters,\multipliers\right)$ where $\parameters$
  is a random variable supported on at most $\numconstraints+1$ elements of
  $\Parameters$, and $\multipliers$ is non-random.
\end{lemma}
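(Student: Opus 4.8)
The plan is to build the equilibrium in three moves: obtain a mixed Nash equilibrium by a minimax argument, collapse the $\multipliers$-player to a single non-random point using linearity of $\lagrangian$ in $\multipliers$, and then thin the $\parameters$-player's distribution down to $\numconstraints+1$ atoms with Carath\'eodory's theorem in $\R^{\numconstraints}$.

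First I would let the $\parameters$-player mix over the set $\Delta\left(\Parameters\right)$ of Borel probability measures on $\Parameters$, taken with the weak-$*$ topology; since $\Parameters$ is compact Hausdorff, $\Delta\left(\Parameters\right)$ is convex and weak-$*$ compact. Writing $w(p) \defeq \left( \expectation_{p}\left[\constraint{1}\right], \dots, \expectation_{p}\left[\constraint{\numconstraints}\right] \right) \in \R^{\numconstraints}$, the mixed payoff is $F\left(p, \multipliers\right) \defeq \expectation_{\parameters \sim p}\left[ \lagrangian\left(\parameters, \multipliers\right) \right] = \expectation_{p}\left[\objective\right] + \inner{\multipliers}{w(p)}$, which is affine in $p$, affine in $\multipliers$, and continuous in each argument (continuity in $p$ uses continuity of the $\constraint{i}$). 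Because $F$ is affine--affine on compact convex domains, Sion's minimax theorem applies and, since the inner min and max are attained, yields a saddle point $\left(p^*, \multipliers^*\right)$ with $\multipliers^* \in \Multipliers$ a single point and $F\left(p^*, \multipliers\right) \le F\left(p^*, \multipliers^*\right) \le F\left(p, \multipliers^*\right)$ for all $p \in \Delta\left(\Parameters\right)$ and $\multipliers \in \Multipliers$. (Alternatively one may invoke Glicksberg's existence theorem for continuous games on compact Hausdorff strategy spaces and then replace the $\multipliers$-distribution by its mean, which is legal because $F$ is linear in $\multipliers$.)

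Next I would extract the two best-response conditions. The right-hand inequality forces $p^*$ to minimize $\expectation_{p}\left[\lagrangian\left(\cdot, \multipliers^*\right)\right]$, hence to be supported on the compact set $M \defeq \argmin_{\parameters \in \Parameters} \lagrangian\left(\parameters, \multipliers^*\right)$. The left-hand inequality says $\multipliers^*$ maximizes $\multipliers \mapsto F\left(p^*, \multipliers\right) = \expectation_{p^*}\left[\objective\right] + \inner{\multipliers}{w(p^*)}$ over $\Multipliers$. The crucial observation is that this second condition depends on $p^*$ \emph{only} through the $\numconstraints$-dimensional vector $v \defeq w(p^*)$. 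Setting $\Phi \defeq \left(\constraint{1}, \dots, \constraint{\numconstraints}\right) : M \to \R^{\numconstraints}$, the vector $v = \int_M \Phi \, dp^*$ is the barycenter of $\Phi_* p^*$ and therefore lies in the convex hull of the compact set $\Phi(M)$. Carath\'eodory's theorem in $\R^{\numconstraints}$ then expresses $v = \sum_{j=1}^{\numconstraints+1} a_j \, \Phi\left(\parameters_j\right)$ with weights $a_j \ge 0$ summing to one and points $\parameters_j \in M$, and I would take $q \defeq \sum_{j=1}^{\numconstraints+1} a_j \, \delta_{\parameters_j}$, a measure on at most $\numconstraints+1$ atoms. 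Since every $\parameters_j \in M$, the measure $q$ is a best response to $\multipliers^*$; and since $w(q) = v = w(p^*)$, the map $\multipliers \mapsto F\left(q, \multipliers\right)$ differs from $\multipliers \mapsto F\left(p^*, \multipliers\right)$ by an additive constant, so it is maximized at the same $\multipliers^*$. Thus $\left(q, \multipliers^*\right)$ is a mixed Nash equilibrium with $q$ supported on at most $\numconstraints+1$ points and $\multipliers^*$ non-random, as claimed.

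I expect the last verification to be the main obstacle to get right, namely guaranteeing that thinning the support of $p^*$ does not destroy the $\multipliers$-player's optimality. The tempting shortcut---collapsing $p^*$ to a point mass at a single minimizer of $\lagrangian\left(\cdot, \multipliers^*\right)$---would in general change the moment vector $v$ and break the left-hand inequality, so $\multipliers^*$ would cease to be a best response. Matching exactly the $\numconstraints$ constraint moments $w(\cdot)$ (rather than the whole measure) is precisely what lets Carath\'eodory reduce the support while keeping both equilibrium conditions intact, and it is this moment count that produces the bound $\numconstraints+1$ rather than anything larger.
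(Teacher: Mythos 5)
Your proof is correct and follows essentially the same route as the paper's (\thmref{sparse-equilibrium} in \appref{sparsity}): establish a mixed equilibrium, de-randomize the $\lambda$-player using linearity of the Lagrangian in $\lambda$, and then shrink the $\theta$-player's support via Carath\'eodory's theorem applied to the $m$-dimensional vector of expected constraint values, keeping the atoms inside the best-response set $M$ so that both equilibrium conditions survive. The only tool-level differences are that you invoke Sion's minimax theorem (valid here because the Lagrangian game is zero-sum, with $\Multipliers$ convex) where the paper uses Glicksberg's theorem---which it needs since its general theorem also covers the non-zero-sum proxy-Lagrangian---and that you apply Carath\'eodory directly to $\mathrm{conv}\left(\Phi\left(M\right)\right)$ where the paper decomposes the same barycenter over extreme points of the closed convex hull of the set of $\lambda$-gradients, which for the Lagrangian is exactly the set of constraint-evaluation vectors.
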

\begin{proof}
  Follows from \thmref{sparse-equilibrium} in \appref{sparsity}.
\end{proof}
We do not content ourselves with merely having shown the existence of such an equilibrium. Thankfully, we can re-formulate the problem of finding the optimal
$\epsilon$-feasible mixture of the $\parameters^{(t)}$s as a linear program
(LP) that can be solved to \emph{shrink} the support set to $\numconstraints+1$ solutions. We must first evaluate
the objective and constraint functions for every $\parameters^{(t)}$, yielding
a $T$-dimensional vector of objective function values, and $\numconstraints$
such vectors of constraint function evaluations, which are then used to specify
the LP:
\begin{lem}{sparse-linear-program}
  Let $\parameters^{(1)},\parameters^{(2)},\dots,\parameters^{(T)} \in
  \Parameters$ be a sequence of $T$ ``candidate solutions'' of
  \eqref{constrained-problem}.
  Define $\vec{\objective},\vec{\constraint{i}} \in \R^T$ such that
  $\left(\vec{\objective}\right)_t = \objective\left(\parameters^{(t)}\right)$
  and $\left(\vec{\constraint{i}}\right)_t =
  \constraint{i}\left(\parameters^{(t)}\right)$ for
  $i\in\indices{\numconstraints}$, and consider the linear program:
  \switchreptheorem{
    \begin{align*}
      \min_{p \in \Delta^T} \; & \inner{p}{\vec{\objective}} \\
      \suchthat & \forall i \in \indices{\numconstraints} .
      \inner{p}{\vec{\constraint{i}}} \le \epsilon
    \end{align*}
  }{
    \begin{equation*}
      \min_{p \in \Delta^T} \inner{p}{\vec{\objective}} \;\;\;\;
      \suchthat \forall i \in \indices{\numconstraints} .
      \inner{p}{\vec{\constraint{i}}} \le \epsilon
    \end{equation*}
  }
  where $\Delta^T$ is the $T$-dimensional simplex. Then every vertex $p^*$ of
  the feasible region---in particular an optimal one---has at most
  $\numconstraints^* + 1 \le \numconstraints + 1$ nonzero elements, where
  $\numconstraints^*$ is the number of active
  $\inner{p^*}{\vec{\constraint{i}}} \le \epsilon$ constraints.
\end{lem}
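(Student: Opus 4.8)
The plan is to recognize this as a standard sparsity fact about vertices (extreme points) of a polyhedron, essentially the basic-feasible-solution bound underlying the fundamental theorem of linear programming. First I would write the feasible region explicitly as a polytope in $\R^T$,
\[
P \defeq \left\{ p \in \R^T : \sum_{t=1}^T p_t = 1,\; p_t \ge 0 \;\text{for all }t,\; \inner{p}{\vec{\constraint{i}}} \le \epsilon \;\text{for all } i \in \indices{\numconstraints} \right\},
\]
noting that $P \subseteq \Delta^T$ is compact. Since the objective $\inner{p}{\vec{\objective}}$ is linear and $P$ is compact and (by assumption) nonempty, the minimum is attained, and a linear function over a polytope attains its optimum at a vertex; so an optimal vertex $p^*$ exists, and it suffices to bound the support of an arbitrary vertex.

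The core step is the vertex characterization: a point $p^*$ of a polyhedron in $\R^T$ is a vertex if and only if the normal vectors of the constraints active at $p^*$ span $\R^T$, equivalently $p^*$ is the unique solution of the linear system obtained by turning every active constraint into an equality. I would then simply count the constraints active at $p^*$. These are: the single equality $\sum_t p_t = 1$; the $\numconstraints^*$ rate constraints $\inner{p^*}{\vec{\constraint{i}}} = \epsilon$ that happen to be tight; and the nonnegativity constraints $p^*_t = 0$ that are tight, of which there are exactly $z \defeq \abs{\{ t : p^*_t = 0 \}}$. Because the active normals must span $\R^T$, their number is at least $T$, giving $1 + \numconstraints^* + z \ge T$.

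Rearranging yields $z \ge T - 1 - \numconstraints^*$, so the number of nonzero entries of $p^*$ is
\[
T - z \le 1 + \numconstraints^* \le 1 + \numconstraints,
\]
which is precisely the claimed bound, the last inequality holding because $\numconstraints^* \le \numconstraints$ trivially. I expect the only delicate point to be the active-set bookkeeping: one must count the equality row, the tight rate rows, and the $z$ tight nonnegativity rows without double counting, and apply the vertex criterion in the correct ambient dimension $T$. This is clean because the $z$ standard-basis rows $e_t^\top$ for the zero coordinates are automatically linearly independent, so the rank lower bound $1 + \numconstraints^* + z \ge T$ is exactly the necessary condition for $p^*$ to be a vertex. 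No analytic estimates are needed; the entire content is this dimension count together with the existence of an optimal vertex for a bounded, feasible LP.
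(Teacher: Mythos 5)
Your proof is correct and is essentially the paper's own argument: both rest on the fact that a vertex of a polyhedron in $\R^T$ must have at least $T$ active constraints, and then count the sum-to-one constraint, the $\numconstraints^*$ tight functional constraints, and the tight nonnegativity constraints to conclude that at most $\numconstraints^*+1$ coordinates are nonzero. The extra detail you supply (the spanning characterization of vertices and the existence of an optimal vertex via compactness) only makes explicit what the paper leaves implicit.
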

\begin{prf}{sparse-linear-program}
  The linear program contains not only the $\numconstraints$ explicit
  linearized functional constraints, but also, since $p \in \Delta^T$, the $T$
  nonnegativity constraints $p_t \ge 0$, and the sum-to-one constraint
  $\sum_{t=1}^T p_t = 1$.

  Since $p$ is $T$-dimensional, every vertex $p^*$ of the feasible region must
  include $T$ active constraints. Letting $\numconstraints^* \le
  \numconstraints$ be the number of active linearized functional constraints,
  and accounting for the sum-to-one constraint, it follows that at least $T -
  \numconstraints^* - 1$ nonnegativity constraints are active, implying that
  $p^*$ contains at most $\numconstraints^* + 1$ nonzero elements.
\end{prf}

This lemma suggests a two-phase approach to actually finding the $\numconstraints+1$ stochastic solution. In the first phase, apply \algref{oracle-lagrangian}, yielding a sequence of iterates for which
the uniform distribution over the $\parameters^{(t)}$s is approximately
feasible and optimal. Then apply the procedure of \lemref{sparse-linear-program}
to find the \emph{best} distribution over these iterates, which in particular
can be no worse than the uniform distribution, and is supported on at most
$\numconstraints+1$ iterates. 

\subsection{Proxy Constraints and a Non-Zero Sum Game}\label{sec:introduction:non-zero-sum}

Most real-world machine learning implementations use first-order methods (even on non-convex problems, \eg DNNs). To use such a method, however, one must have gradients, and gradients are unavailable for 
rate constraints (as in \eqref{loss}): due to the indicators in the rate constraint expression (\eqref{rateConstraint}), the constraint functions are piecewise-constant, so their gradients are zero almost everywhere, and a gradient-based method cannot be expected to succeed.
In general, for constrained optimization problems in the form of \eqref{constrained-problem}, non-differentiable constraints arise naturally when one wishes to constrain counts or proportions.

The obvious solution is to use a surrogate. For example, we might consider replacing the indicators defining a rate with sigmoids, and then optimizing the Lagrangian. This solves the differentiability problem, but introduces a new one: a (mixed) Nash equilibrium would correspond to a solution satisfying the sigmoid-relaxed constraints, instead of the \emph{actual} constraints.
Interestingly, it turns out that we can seek to satisfy the original un-relaxed constraints, even while using a surrogate. Our proposal is motivated by the observation that, while differentiating the Lagrangian (\eqref{lagrangian}) \wrt $\parameters$ requires differentiating the constraint functions $\constraint{i}\left(\parameters\right)$, to differentiate it \wrt $\multipliers$ we only need to \emph{evaluate} them. Hence, a surrogate is only necessary for the $\parameters$-player; the $\multipliers$-player can continue to use the original constraint functions.

We refer to a surrogate that is used by only one of the two players as a ``proxy'', and introduce the notion of ``proxy constraints'' by taking $\proxyconstraint{i}\left(\parameters\right)$ to be a sufficiently-smooth upper bound on $\constraint{i}\left(\parameters\right)$ for $i\in\indices{\numconstraints}$, and formulating two functions that we call ``proxy-Lagrangians'':
\begin{align}
  \label{eq:proxy-lagrangian}
  \mathcal{L}_{\theta}(\theta, \lambda) & \stackrel{\triangle}{=}
  \lambda_1 g_0(\theta) + \sum_{i=1}^{m} \lambda_{i+1}
  \tilde{g}_I(\theta)\\
  \notag
  \mathcal{L}_{\lambda}(\theta, \lambda) & \stackrel{\triangle}{=}
  \sum_{i=1}^{m} \lambda_{i+1} g_i(\theta)
\end{align}
where we restrict $\Lambda$ to be the $(m+1)$-dimensional simplex $\Delta^{m+1}$. The $\parameters$-player seeks to minimize $\lagrangian_{\parameters}\left(\parameters, \multipliers\right)$, while the $\multipliers$-player seeks to maximize $\lagrangian_{\multipliers}\left(\parameters, \multipliers\right)$. Notice that the $\proxyconstraint{i}$s are \emph{only} used by the $\parameters$-player. Intuitively, the $\multipliers$-player chooses how much to weigh the proxy constraint functions, but---and this is the key to our proposal---does so in such a way as to satisfy the \emph{original} constraints.

Viewed as a two-player game, what we have changed is that now the $\theta$ and $\lambda$  players each have their own payoff functions $\mathcal{L}_{\theta}(\theta, \lambda)$ and $\mathcal{L}_{\lambda}(\theta, \lambda)$ respectively, making the game {\it non-zero sum}. 
Finding a Nash equilibrium of a non-zero-sum game is much more difficult than for a zero-sum game---in fact, it's PPAD-complete even in the finite setting~\citep{Chen:2006}. We will present a procedure which approximates a \emph{weaker} type of equilibrium: instead of converging to a Nash equilibrium, it converges to a new solution concept, which we call a {\it semi-coarse correlated equilibrium}. Despite being weaker than a Nash equlibrium, we show that it still corresponds to a nearly-optimal and nearly-feasible solution to constrained optimization in expectation.

The proxy-Lagrangian formulation leads to a tighter approximation than the popular approach of using a surrogate for \emph{both} players, as has been  proposed \eg for Neyman-Pearson classification~\citep{Davenport:2010,Bottou:2011}, and AUC optimization~\citep{Eban:2017}. These proposals optimize a simpler problem (a zero-sum game), but one that is a worse reflection of the true goal. In the experimental section, we will provide evidence that the proposed proxy-Lagrangian formulation can provide higher accuracy while still satisying the constraints. This is especially important when the rate constraints express real-world restrictions on how the learned model is permitted to behave. For example, if we require an 80\% threshold in terms of the \emph{number of positive predictions}, we would like that and not a relaxation of this.

\subsection{Proxy-Lagrangian Equilibrium}\label{sec:proxy-lagrangian-equilibrium}

For the proxy-Lagrangian game (\eqref{proxy-lagrangian}), we cannot expect to
find a Nash equilibrium, at least not efficiently, since it is non-zero-sum.
However, the analogous result to
\thmref{lagrangian-suboptimality-and-feasibility} requires a \emph{weaker} type
of equilibrium: a joint distribution over $\Theta$ and $\Lambda$ \wrt
which the $\theta$-player can only make a negligible improvement compared to
the best constant strategy, and the $\lambda$-player compared to the best
action-swapping strategy (this is a type of $\Phi$-correlated
equilibrium~\citep{Rakhlin:2011}). We call this semi-coarse-correlated equilibrium. We present our theorem showing the achievability of this type of equilibrium, then we present \algref{oracle-proxy-lagrangian} to satisfy the theorem. 
\begin{theorem}
  \label{thm:proxy-lagrangian-suboptimality-and-feasibility}
  Define
  %
  %
  $\Matrixmultipliers$
  as the set of all left-stochastic $\left(\numconstraints + 1\right) \times
  \left(\numconstraints + 1\right)$ matrices, $\Multipliers \stackrel{\triangle}{=}
  \Delta^{\numconstraints+1}$ as the
  $\left(\numconstraints+1\right)$-dimensional simplex, and assume that each
  $\proxyconstraint{i}$ upper bounds the corresponding $\constraint{i}$.
  Let $\parameters^{(1)},\dots,\parameters^{(T)} \in \Parameters$ and
  $\multipliers^{(1)},\dots,\multipliers^{(T)} \in \Multipliers$ be sequences
  %
  %
  satisfying:
  \begin{align*}
    \frac{1}{T} \sum_{t=1}^T \lagrangian_{\parameters}\left( \parameters^{(t)},
    \multipliers^{(t)} \right) - \inf_{\parameters^* \in \Parameters}
    \frac{1}{T} \sum_{t=1}^T \lagrangian_{\parameters}\left( \parameters^*,
    \multipliers^{(t)} \right) \le& \epsilon_{\parameters} \\
    \max_{\matrixmultipliers^* \in \Matrixmultipliers} \frac{1}{T} \sum_{t=1}^T
    \lagrangian_{\multipliers}\left( \parameters^{(t)}, \matrixmultipliers^*
    \multipliers^{(t)} \right) - \frac{1}{T} \sum_{t=1}^T
    \lagrangian_{\multipliers}\left( \parameters^{(t)}, \multipliers^{(t)}
    \right) \le& \epsilon_{\multipliers}
  \end{align*}
  Define $\bar{\parameters}$ as a random variable for which $\bar{\parameters}
  = \parameters^{(t)}$ with probability $\multipliers^{(t)}_1 / \sum_{s=1}^T
  \multipliers^{(s)}_1$, and let $\bar{\multipliers} \stackrel{\triangle}{=} \left(\sum_{t=1}^T
  \multipliers^{(t)}\right) / T$.
  Then $\bar{\parameters}$ is nearly-optimal and nearly-feasible in expectation:
  \begin{equation}
 \label{eq:thm:proxy-lagrangian-suboptimality-and-feasibility:suboptimality}
    \expectation_{\bar{\parameters}}\left[
    \objective\left(\bar{\parameters}\right) \right] \le \inf_{\parameters^*
    \in \Parameters : \forall i .
    \proxyconstraint{i}\left(\parameters^*\right) \le 0} \objective\left(
    \parameters^* \right) + \frac{\epsilon_{\parameters} +
    \epsilon_{\multipliers}}{\bar{\multipliers}_1}
  \end{equation}
  and, 
    \begin{equation}    \label{eq:thm:proxy-lagrangian-suboptimality-and-feasibility:feasibility}
    \max_{i \in \indices{\numconstraints}}
    \expectation_{\bar{\parameters}}\left[
    \constraint{i}\left(\bar{\parameters}\right) \right] \le
    \frac{\epsilon_{\multipliers}}{\bar{\multipliers}_1}
  \end{equation}
  Additionally, if there exists a $\parameters' \in \Parameters$ that satisfies
  all of the proxy constraints with margin $\gamma$ (\ie
  $\proxyconstraint{i}\left(\parameters'\right) \le -\gamma$ for all
  $i\in\indices{\numconstraints}$), then:
  \begin{equation*}
    \bar{\multipliers}_1 \ge \frac{\gamma - \epsilon_{\parameters} -
    \epsilon_{\multipliers}}{\gamma + \bound{\objective}}
  \end{equation*}
  where $\bound{\objective} \ge \sup_{\parameters \in \Parameters}
  \objective\left(\parameters\right) - \inf_{\parameters \in \Parameters}
  \objective\left(\parameters\right)$ is a bound on the range of the objective
  function $\objective$.
\end{theorem}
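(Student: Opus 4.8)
The plan is to extract the conclusions from the two regret inequalities by substituting three carefully chosen competitors, and then to rewrite everything in terms of the $\multipliers_1$-weighted distribution defining $\bar{\parameters}$. Three facts drive the argument: (i) each proxy upper bounds the truth, $\proxyconstraint{i} \ge \constraint{i}$; (ii) because $\multipliers^{(t)} \in \Delta^{\numconstraints+1}$, we have $\sum_{i=1}^{\numconstraints} \multipliers^{(t)}_{i+1} = 1 - \multipliers^{(t)}_1$; and (iii) for any $\phi$, $\expectation_{\bar{\parameters}}[\phi(\bar{\parameters})] = \bigl( \frac{1}{T}\sum_t \multipliers^{(t)}_1 \phi(\parameters^{(t)}) \bigr) / \bar{\multipliers}_1$ by the definitions of $\bar{\parameters}$ and $\bar{\multipliers}$.

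\textbf{Feasibility.} I would first establish \eqref{thm:proxy-lagrangian-suboptimality-and-feasibility:feasibility}. For a fixed constraint index $i$, feed into the $\multipliers$-player's swap-regret bound the matrix $\matrixmultipliers^* \in \Matrixmultipliers$ that moves all mass from coordinate $1$ onto coordinate $i+1$ and acts as the identity elsewhere. Since $\lagrangian_{\multipliers}$ weights coordinate $j+1$ by $\constraint{j}$ and ignores coordinate $1$, this swap changes the round-$t$ payoff by exactly $\multipliers^{(t)}_1 \constraint{i}(\parameters^{(t)})$; the swap-regret bound therefore reads $\frac{1}{T}\sum_t \multipliers^{(t)}_1 \constraint{i}(\parameters^{(t)}) \le \epsilon_{\multipliers}$, and dividing by $\bar{\multipliers}_1$ using fact (iii) yields $\expectation_{\bar{\parameters}}[\constraint{i}(\bar{\parameters})] \le \epsilon_{\multipliers}/\bar{\multipliers}_1$.

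\textbf{Suboptimality and margin.} Both remaining bounds start from the $\parameters$-player's external-regret inequality against a fixed competitor and then lower-bound the iterate-side proxy term $\frac{1}{T}\sum_t \sum_i \multipliers^{(t)}_{i+1}\proxyconstraint{i}(\parameters^{(t)})$. By fact (i) this term dominates $\frac{1}{T}\sum_t \lagrangian_{\multipliers}(\parameters^{(t)},\multipliers^{(t)})$, which I lower-bound by $-\epsilon_{\multipliers}$ by applying swap regret to the matrix $\matrixmultipliers^*$ that collapses all mass onto coordinate $1$ (so the swapped payoff is identically zero). For \eqref{thm:proxy-lagrangian-suboptimality-and-feasibility:suboptimality}, I take the competitor $\parameters^*$ to be proxy-feasible, so $\sum_i \multipliers^{(t)}_{i+1}\proxyconstraint{i}(\parameters^*) \le 0$; substituting, dividing by $\bar{\multipliers}_1$, and taking the infimum over such $\parameters^*$ gives the claim. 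For the margin bound I instead use the competitor $\parameters'$ with $\proxyconstraint{i}(\parameters') \le -\gamma$, so fact (ii) makes its proxy term at most $-\gamma(1-\multipliers^{(t)}_1)$; combining the competitor-side upper bound $\bar{\multipliers}_1 \objective(\parameters') - \gamma(1-\bar{\multipliers}_1)$ with the iterate-side lower bound $\bar{\multipliers}_1 \inf_{\parameters} \objective(\parameters) - \epsilon_{\multipliers}$, rearranging, and bounding $\objective(\parameters') - \inf_{\parameters}\objective(\parameters) \le \bound{\objective}$ isolates $\bar{\multipliers}_1 \ge (\gamma - \epsilon_{\parameters} - \epsilon_{\multipliers})/(\gamma + \bound{\objective})$.

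The main obstacle, and the conceptual core of the theorem, is identifying which left-stochastic matrices to supply to the swap-regret bound and recognizing that \emph{swap} regret (rather than merely external/coarse regret) is exactly what the $\multipliers$-player needs: the per-constraint feasibility estimate requires relocating mass specifically from the objective coordinate $1$ to each constraint coordinate $i+1$, a genuine coordinate swap that a comparison against constant strategies cannot produce. Everything else is bookkeeping that hinges on the $\multipliers_1$-reweighting built into $\bar{\parameters}$, which is engineered precisely so that these swapped payoffs reproduce the desired $\multipliers_1$-weighted averages of the constraints and the objective.
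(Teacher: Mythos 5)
Your proposal is correct and follows essentially the same route as the paper's proof: the swap matrix that collapses all mass onto coordinate $1$ gives the lower bound $-\epsilon_{\multipliers}$ on the iterate-side proxy term, the $\parameters$-player's external-regret bound against a proxy-feasible (respectively, margin-$\gamma$) competitor yields the suboptimality bound (respectively, the lower bound on $\bar{\multipliers}_1$), and the $\multipliers_1$-reweighting in the definition of $\bar{\parameters}$ converts the weighted sums into expectations. The only cosmetic difference is in the feasibility step, where you test one explicit swap matrix per constraint (moving mass from coordinate $1$ to coordinate $i+1$, identity elsewhere) rather than the paper's column-wise decomposition of the full maximum over left-stochastic matrices; both arguments reduce to $\frac{1}{T}\sum_{t=1}^T \multipliers^{(t)}_1 \constraint{i}\left(\parameters^{(t)}\right) \le \epsilon_{\multipliers}$ for each $i$.
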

\begin{proof}
  This is a special case of \thmref{proxy-lagrangian-suboptimality} and
  \lemref{proxy-lagrangian-feasibility} in \appref{suboptimality}.
\end{proof}
Notice that
\eqref{thm:proxy-lagrangian-suboptimality-and-feasibility:feasibility} guarantees feasibility \wrt the original constraints,  while \eqref{thm:proxy-lagrangian-suboptimality-and-feasibility:suboptimality} shows that the solution minimizes the objective approximately  as well as the best solution that's feasible \wrt the \emph{proxy} constraints. Hence, the guarantee for minimizing the objective
is no better than what we would have obtained if we took  $\constraint{i}
\stackrel{\triangle}{=} \proxyconstraint{i}$ for all $i \in \indices{\numconstraints}$, and
optimized the Lagrangian as in \secref{lagrangian}.
However, because the
feasible region \wrt the original constraints is larger (perhaps significantly
so) than that \wrt the proxy constraints, the proxy-Lagrangian approach has
more ``room'' to find a better solution in practice (this is demonstrated in the experiments).

One key difference between this result and
\thmref{lagrangian-suboptimality-and-feasibility} is that the $\Radius$
parameter is absent. Instead, its role, and that of
$\norm{\bar{\multipliers}}_{1}$, is played by the first coordinate of
$\bar{\multipliers}$. Inspection of \eqref{proxy-lagrangian} reveals that, if
one or more of the constraints are violated, then the $\multipliers$-player
would prefer the corresponding entries in $\multipliers$ to be higher, which in turn causes $\multipliers_1$ to become closer to $0$ from our procedures. Likewise, if they are satisfied (with
some margin), then it would prefer the entries after the first in $\multipliers$ to be $0$ which causes $\multipliers_1$ to be one in our procedures. In other words,
the first coordinate of $\multipliers^{(t)}$ encodes the
$\multipliers$-player's belief about the feasibility of $\parameters^{(t)}$,
for which reason $\parameters^{(t)}$ is weighted by $\multipliers^{(t)}_1$ in
the density defining $\bar{\parameters}$.
%
\begin{algorithm*}[t]

\begin{pseudocode}
\codename $\mbox{OracleProxyLagrangian}\left( \lagrangian_{\parameters}, \lagrangian_{\multipliers} : \Parameters \times \Delta^{\numconstraints+1} \rightarrow \R, \oracle : \left(\Parameters \rightarrow \R\right) \rightarrow \Parameters, T \in \N, \eta_{\multipliers} \in \R_+ \right)$: \\
\codeline Initialize $\matrixmultipliers^{(1)} \in \R^{\left(\numconstraints + 1\right) \times \left(\numconstraints + 1\right)}$ with $\matrixmultipliers_{i,j} = 1 / \left(\numconstraints+1\right)$ \\
\codeline For $t \in \indices{T}$: \\
\codeline \> Let $\multipliers^{(t)} = \pi\left( \matrixmultipliers^{(t)}\right)$ \codecomment{Stationary distribution of $\matrixmultipliers^{(t)}$} \\
\codeline \> Let $\parameters^{(t)} = \oracle\left( \lagrangian_{\parameters}\left(\cdot,\multipliers^{(t)}\right) \right)$ \codecomment{Oracle optimization} \\
\codeline \> Let $\stochasticgrad^{(t)}_{\multipliers}$ be a gradient of $\lagrangian_{\multipliers}\left(\parameters^{(t)},\multipliers^{(t)}\right)$ \wrt $\multipliers$ \\
\codeline \> Update $\tilde{\matrixmultipliers}^{(t+1)} = \matrixmultipliers^{(t)} \elementwiseproduct \elementwiseexp\left( \eta_{\multipliers} \stochasticgrad^{(t)}_{\multipliers} \left( \multipliers^{(t)} \right)^T \right)$ \codecomment{$\elementwiseproduct$ and $\elementwiseexp$ are element-wise} \\
\codeline \> Project $\matrixmultipliers^{(t+1)}_{:,i} = \tilde{\matrixmultipliers}^{(t+1)}_{:,i} / \norm{\tilde{\matrixmultipliers}^{(t+1)}_{:,i}}_1$ for $i\in\indices{\numconstraints+1}$ \codecomment{Column-wise projection}  \\
\codeline Return $\parameters^{(1)},\dots,\parameters^{(T)}$ and $\multipliers^{(1)},\dots,\multipliers^{(T)}$
\end{pseudocode}

\caption{
  Optimizes the proxy-Lagrangian formulation (\eqref{proxy-lagrangian}) in
  the non-convex setting via the use of an approximate Bayesian optimization
  oracle $\oracle$ (\defref{oracle}, but with $\proxyconstraint{i}$s instead of
  $\constraint{i}$s in the linear combination defining $f$) for the
  $\parameters$-player, with the $\multipliers$-player minimizing swap regret.
  The $\pi(\matrixmultipliers)$ operation on line $3$ results in a stationary
  distribution of $\matrixmultipliers$ (\ie a $\multipliers \in \Multipliers$
  such that $\matrixmultipliers \multipliers = \multipliers$, which can be
  derived from the top eigenvector).
}

\label{alg:oracle-proxy-lagrangian}

\end{algorithm*}

\begin{algorithm*}[t]

\begin{pseudocode}
\codename $\mbox{StochasticProxyLagrangian}\left(\lagrangian_{\parameters}, \lagrangian_{\multipliers} : \Parameters \times \Delta^{\numconstraints+1} \rightarrow \R, T \in \N, \eta_{\parameters}, \eta_{\multipliers} \in \R_+ \right)$: \\
\codeline Initialize $\parameters^{(1)} = 0$ \codecomment{Assumes $0 \in \Parameters$} \\
\codeline Initialize $\matrixmultipliers^{(1)} \in \R^{\left(\numconstraints + 1\right) \times \left(\numconstraints + 1\right)}$ with $\matrixmultipliers_{i,j} = 1 / \left(\numconstraints+1\right)$ \\
\codeline For $t \in \indices{T}$: \\
\codeline \> Let $\multipliers^{(t)} = \pi\left( \matrixmultipliers^{(t)}\right)$ \codecomment{Stationary distribution of $\matrixmultipliers^{(t)}$} \\
\codeline \> Let $\stochasticsubgrad^{(t)}_{\parameters}$ be a stochastic subgradient of $\lagrangian_{\parameters}\left(\parameters^{(t)},\multipliers^{(t)}\right)$ \wrt $\parameters$ \\
\codeline \> Let $\stochasticgrad^{(t)}_{\multipliers}$ be a stochastic gradient of $\lagrangian_{\multipliers}\left(\parameters^{(t)},\multipliers^{(t)}\right)$ \wrt $\multipliers$ \\
\codeline \> Update $\parameters^{(t+1)} = \Pi_{\Parameters}\left( \parameters^{(t)} - \eta_{\parameters} \stochasticsubgrad^{(t)}_{\parameters} \right)$ \codecomment{Projected SGD update} \\
\codeline \> Update $\tilde{\matrixmultipliers}^{(t+1)} = \matrixmultipliers^{(t)} \elementwiseproduct \elementwiseexp\left( \eta_{\multipliers} \stochasticgrad^{(t)}_{\multipliers} \left( \multipliers^{(t)} \right)^T \right)$ \codecomment{$\elementwiseproduct$ and $\elementwiseexp$ are element-wise} \\
\codeline \> Project $\matrixmultipliers^{(t+1)}_{:,i} = \tilde{\matrixmultipliers}^{(t+1)}_{:,i} / \norm{\tilde{\matrixmultipliers}^{(t+1)}_{:,i}}_1$ for $i\in\indices{\numconstraints+1}$ \codecomment{Column-wise projection} \\
\codeline Return $\parameters^{(1)},\dots,\parameters^{(T)}$ and $\multipliers^{(1)},\dots,\multipliers^{(T)}$
\end{pseudocode}

\caption{
  Optimizes the proxy-Lagrangian formulation (\eqref{proxy-lagrangian}) in
  the convex setting, with the $\parameters$-player minimizing external regret,
  and the $\multipliers$-player minimizing swap regret.
  The $\pi(\matrixmultipliers)$ operation on line $4$ outputs the stationary
  distribution of $\matrixmultipliers$ (that is, a $\multipliers \in \Multipliers$
  such that $\matrixmultipliers \multipliers = \multipliers$) which can be
  derived from the top eigenvector.
  The function $\Pi_{\Parameters}$ projects its argument onto $\Parameters$
  \wrt the Euclidean norm.
}

\label{alg:stochastic-proxy-lagrangian}

\end{algorithm*}

\subsection{Proxy-Lagrangian Optimization Algorithm}\label{sec:proxy-lagrangian-algorithm}

To optimize the proxy-Lagrangian formulation, we present \algref{oracle-proxy-lagrangian}, which is motivated by the observation that,
while \thmref{proxy-lagrangian-suboptimality-and-feasibility} only requires
that the $\parameters^{(t)}$ sequence suffer low external regret \wrt
$\lagrangian_{\parameters}\left(\cdot, \multipliers^{(t)}\right)$, the
condition on the $\multipliers^{(t)}$ sequence is stronger, requiring it to
suffer low \emph{swap regret}~\citep{Blum:2007} \wrt $\lagrangian_{\multipliers}\left(\parameters^{(t)}, \cdot\right)$.

Hence, the $\parameters$-player uses the oracle to minimize external regret, while the
$\multipliers$-player uses a swap-regret minimization algorithm of the type
proposed by \citet{Gordon:2008}, yielding the convergence guarantee:
\begin{lem}{oracle-proxy-lagrangian}
  \ifshowproofs
  \textbf{(\algref{oracle-proxy-lagrangian})}
  \fi
  Suppose that $\Matrixmultipliers$ and $\Multipliers$ are as in
  \thmref{proxy-lagrangian-suboptimality-and-feasibility}, and define the upper
  bound $\bound{\stochasticgrad} \ge \max_{t \in \indices{T}}
  \norm{\stochasticgrad_{\multipliers}^{(t)}}_{\infty}$.

  If we run \algref{oracle-proxy-lagrangian} with the step size
  $\eta_{\multipliers} \defeq \sqrt{ \left(\numconstraints+1\right) \ln
  \left(\numconstraints+1\right) / T \bound{\stochasticgrad}^2 }$, then the
  result satisfies satisfies the conditions of
  \thmref{proxy-lagrangian-suboptimality-and-feasibility} for:
  \begin{align*}
    \epsilon_{\parameters} =& \approximation \\
    \epsilon_{\multipliers} =& 2 \bound{\stochasticgrad} \sqrt{ \frac{
    \left(\numconstraints+1\right) \ln \left(\numconstraints+1\right) }{T} }
  \end{align*}
  where $\approximation$ is the error associated with the oracle $\oracle$.
\end{lem}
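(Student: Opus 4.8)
The plan is to verify, term by term, that the two regret conditions required by \thmref{proxy-lagrangian-suboptimality-and-feasibility} hold with the stated $\epsilon_{\parameters}$ and $\epsilon_{\multipliers}$, handling the two players separately and then choosing $\eta_{\multipliers}$ to balance the two terms in the swap-regret bound. The $\parameters$-player's external-regret condition is immediate from the oracle. At each round $\parameters^{(t)} = \oracle(\lagrangian_{\parameters}(\cdot,\multipliers^{(t)}))$, and since $\lagrangian_{\parameters}(\cdot,\multipliers^{(t)})$ is a nonnegative linear combination of $\objective$ and the $\proxyconstraint{i}$s, \defref{oracle} gives $\lagrangian_{\parameters}(\parameters^{(t)},\multipliers^{(t)}) \le \inf_{\parameters^*\in\Parameters}\lagrangian_{\parameters}(\parameters^*,\multipliers^{(t)}) + \approximation$ for every $t$. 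Averaging over $t$ and using $\frac{1}{T}\sum_t \inf_{\parameters^*}\lagrangian_{\parameters}(\parameters^*,\multipliers^{(t)}) \le \inf_{\parameters^*}\frac{1}{T}\sum_t\lagrangian_{\parameters}(\parameters^*,\multipliers^{(t)})$ yields the first condition exactly, with $\epsilon_{\parameters} = \approximation$.

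The substantive work is the $\multipliers$-player's swap-regret condition. I would observe that the update on $\matrixmultipliers$ is precisely the swap-regret meta-algorithm of \citet{Gordon:2008}: each column $i$ of $\matrixmultipliers^{(t)}$ runs an independent exponentiated-gradient (multiplicative-weights) update over $\Delta^{\numconstraints+1}$, fed the gain vector $\multipliers_i^{(t)}\stochasticgrad_{\multipliers}^{(t)}$, while the master plays the stationary distribution $\multipliers^{(t)} = \pi(\matrixmultipliers^{(t)})$ satisfying $\matrixmultipliers^{(t)}\multipliers^{(t)} = \multipliers^{(t)}$. Since $\lagrangian_{\multipliers}$ is linear in $\multipliers$ with gradient $\stochasticgrad_{\multipliers}^{(t)}$, I would expand $\lagrangian_{\multipliers}(\parameters^{(t)},\matrixmultipliers^*\multipliers^{(t)}) = \inner{\matrixmultipliers^*\multipliers^{(t)}}{\stochasticgrad_{\multipliers}^{(t)}} = \sum_i \multipliers_i^{(t)}\inner{\matrixmultipliers^*_{:,i}}{\stochasticgrad_{\multipliers}^{(t)}}$ and use stationarity to write $\multipliers^{(t)} = \sum_i\multipliers_i^{(t)}\matrixmultipliers^{(t)}_{:,i}$. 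This decomposes the total swap regret into a sum over the $\numconstraints+1$ columns of the per-column external regrets, each measured against the scaled gain sequence $\{\multipliers_i^{(t)}\stochasticgrad_{\multipliers}^{(t)}\}_t$.

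It then remains to bound each column's external regret and sum. The standard exponentiated-gradient regret bound over the $(\numconstraints+1)$-simplex, of the form $\ln(\numconstraints+1)/\eta_{\multipliers} + \eta_{\multipliers}\sum_t\norm{g^{(t)}}_{\infty}^2$, applied to column $i$ with $g^{(t)} = \multipliers_i^{(t)}\stochasticgrad_{\multipliers}^{(t)}$ gives a regret of at most $\ln(\numconstraints+1)/\eta_{\multipliers} + \eta_{\multipliers}\bound{\stochasticgrad}^2\sum_t (\multipliers_i^{(t)})^2$. Summing over $i$ and using that $\multipliers^{(t)}\in\Delta^{\numconstraints+1}$ forces $\sum_i (\multipliers_i^{(t)})^2 \le \sum_i \multipliers_i^{(t)} = 1$, the total swap regret is at most $(\numconstraints+1)\ln(\numconstraints+1)/\eta_{\multipliers} + \eta_{\multipliers}\bound{\stochasticgrad}^2 T$. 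Dividing by $T$ and substituting $\eta_{\multipliers} = \sqrt{(\numconstraints+1)\ln(\numconstraints+1)/(T\bound{\stochasticgrad}^2)}$ equalizes the two terms, producing $\epsilon_{\multipliers} = 2\bound{\stochasticgrad}\sqrt{(\numconstraints+1)\ln(\numconstraints+1)/T}$, exactly as claimed.

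The main obstacle is the middle step: correctly instantiating the Gordon et al. swap-regret reduction in this non-zero-sum, linear-payoff setting and verifying that the swap regret splits \emph{exactly} into the $\numconstraints+1$ column external regrets through the stationarity identity $\matrixmultipliers^{(t)}\multipliers^{(t)} = \multipliers^{(t)}$. The accompanying subtlety is tracking the $\multipliers_i^{(t)}$ scaling of the gain vectors so that the quadratic terms collapse via $\sum_i (\multipliers_i^{(t)})^2 \le 1$ to yield a factor of $\numconstraints+1$ rather than $(\numconstraints+1)^2$; getting this collapse right is precisely what matches the claimed constant and step size.
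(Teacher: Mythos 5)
Your proof is correct and reaches the stated constants, and at the top level it follows the same strategy as the paper: the $\parameters$-player's condition with $\epsilon_{\parameters} = \approximation$ comes directly from \defref{oracle} (exactly as you argue, including the exchange of average and infimum), and the $\multipliers$-player's condition is the swap-regret guarantee of the column-wise multiplicative-weights update with the stationary-distribution master. Where you genuinely differ is in how that swap-regret bound is carried out. The paper's own proof of this lemma is a two-line citation of its \lemref{internal-regret}, whose proof lifts the whole update to a \emph{single} mirror-descent instance over the set of left-stochastic matrices (\corref{matrix-multiplicative}), with the column-summed negative entropy as distance-generating function, strong convexity \wrt the $L_{1,2}$ matrix norm via column-wise Pinsker, and the rank-one supergradient $\stochasticgrad^{(t)}_{\multipliers}\left(\multipliers^{(t)}\right)^T$; the key cancellation there is $\norm{\stochasticgrad^{(t)}_{\multipliers}\left(\multipliers^{(t)}\right)^T}_{\infty,2} = \norm{\stochasticgrad^{(t)}_{\multipliers}}_{\infty}\norm{\multipliers^{(t)}}_2 \le \norm{\stochasticgrad^{(t)}_{\multipliers}}_{\infty}$. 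You instead keep the $\numconstraints+1$ columns separate: using linearity and the stationarity identity $\matrixmultipliers^{(t)}\multipliers^{(t)} = \multipliers^{(t)}$, you split the swap regret into per-column external regrets against the scaled gains $\multipliers_i^{(t)}\stochasticgrad^{(t)}_{\multipliers}$, bound each by a simplex exponentiated-gradient bound, and collapse with $\sum_i \left(\multipliers_i^{(t)}\right)^2 \le 1$ --- which is the same fact as the paper's norm collapse, written coordinate-wise. Both are valid accountings of the Blum--Mansour/Gordon reduction and yield the identical $\epsilon_{\multipliers}$; yours is more elementary and self-contained, while the paper's matrix-level packaging is built for reuse (the same corollary powers the stochastic variant, \lemref{stochastic-internal-regret}). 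One caution: the per-column bound you quote, of the form $\ln\left(\numconstraints+1\right)/\eta_{\multipliers} + \eta_{\multipliers}\sum_t \norm{g^{(t)}}_\infty^2$, should be justified via entropic mirror descent (it is a loosening of the bound with $\eta_{\multipliers}/2$ in place of $\eta_{\multipliers}$), which holds for any step size; the Hedge-style second-order derivation of that same form requires $\eta_{\multipliers}\norm{g^{(t)}}_\infty \le 1$, which your chosen step size does not guarantee when $T$ is small.
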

\begin{prf}{oracle-proxy-lagrangian}
  Applying \lemref{internal-regret} to the optimization over $\multipliers$
  (with $\matrixmultipliersize \defeq \numconstraints + 1$) gives:
  \begin{equation*}
    \frac{1}{T} \sum_{t=1}^T \lagrangian_{\multipliers}\left(
    \parameters^{(t)}, \matrixmultipliers^* \multipliers^{(t)} \right) -
    \frac{1}{T} \sum_{t=1}^T \lagrangian_{\multipliers}\left(
    \parameters^{(t)}, \multipliers^{(t)} \right) \le 2
    \bound{\stochasticgrad} \sqrt{ \frac{ \left(\numconstraints+1\right) \ln
    \left(\numconstraints+1\right) }{T} }
  \end{equation*}
  By the definition of $\oracle$ (\defref{oracle}):
  \begin{equation*}
    \frac{1}{T} \sum_{t=1}^T \lagrangian_{\parameters}\left( \parameters^{(t)},
    \multipliers^{(t)} \right) - \inf_{\parameters^* \in \Parameters}
    \frac{1}{T} \sum_{t=1}^T \lagrangian_{\parameters}\left( \parameters^*,
    \multipliers^{(t)} \right) \le \approximation
  \end{equation*}
  Using the definitions of $\bar{\parameters}$ and $\bar{\multipliers}$ yields
  the claimed result.
\end{prf}

\subsection{Practical Stochastic Proxy-Lagrangian Algorithm}\label{sec:practical-proxy-lagrangian-algorithm}


\algref{stochastic-proxy-lagrangian} is designed for the convex setting (except
for the $\constraint{i}$s), thus we can safely use SGD for the $\parameters$-updates instead of the oracle and enjoy a more practical procedure. 
We stress that this is a considerable improvement over previous Lagrangian methods in the convex setting, as they require both the loss and constraints to be convex in order to attain optimality and feasibility guarantees. Here, while we assume convexity of the objective and proxy-constraints, the \emph{original} constraints do not need to be convex, but we are still able to prove similar guarantees.

\begin{lem}{stochastic-proxy-lagrangian}
  \ifshowproofs
  \textbf{(\algref{stochastic-proxy-lagrangian})}
  \fi
  Suppose that $\Parameters$ is a compact convex set, $\Matrixmultipliers$ and
  $\Multipliers$ are as in
  \thmref{proxy-lagrangian-suboptimality-and-feasibility}, and that the
  objective and proxy constraint functions
  $\objective,\proxyconstraint{1},\dots,\proxyconstraint{\numconstraints}$ are
  convex (but not $\constraint{1},\dots,\constraint{\numconstraints}$). Define
  the three upper bounds $\bound{\Parameters} \ge \max_{\parameters \in
  \Parameters} \norm{\parameters}_2$, $\bound{\stochasticsubgrad} \ge \max_{t
  \in \indices{T}} \norm{\stochasticsubgrad_{\parameters}^{(t)}}_2$, and
  $\bound{\stochasticgrad} \ge \max_{t \in \indices{T}}
  \norm{\stochasticgrad_{\multipliers}^{(t)}}_{\infty}$.

  If we run \algref{stochastic-proxy-lagrangian} with the step sizes
  $\eta_{\parameters} \defeq \bound{\Parameters} / \bound{\stochasticsubgrad}
  \sqrt{2T}$ and $\eta_{\multipliers} \defeq \sqrt{
  \left(\numconstraints+1\right) \ln \left(\numconstraints+1\right) / T
  \bound{\stochasticgrad}^2 }$, then the result satisfies the conditions of
  \thmref{proxy-lagrangian-suboptimality-and-feasibility} for:
  \switchreptheorem{
    \begin{align*}
      \epsilon_{\parameters} =& 2 \bound{\Parameters} \bound{\stochasticsubgrad}
      \sqrt{ \frac{ 1 + 16 \ln\frac{2}{\delta} }{T} } \\
      \epsilon_{\multipliers} =& 2 \bound{\stochasticgrad} \sqrt{ \frac{ 2
      \left(\numconstraints+1\right) \ln \left(\numconstraints+1\right) \left( 1
      + 16 \ln\frac{2}{\delta}\right) }{T} }
    \end{align*}
  }{
    \begin{equation*}
      \epsilon_{\parameters} = 2 \bound{\Parameters} \bound{\stochasticsubgrad}
      \sqrt{ \frac{ 1 + 16 \ln\frac{2}{\delta} }{T} }
      \;\;\;\; \mathrm{and} \;\;\;\;
      \epsilon_{\multipliers} = 2 \bound{\stochasticgrad} \sqrt{ \frac{ 2
      \left(\numconstraints+1\right) \ln \left(\numconstraints+1\right) \left( 1
      + 16 \ln\frac{2}{\delta}\right) }{T} }
    \end{equation*}
  }
  with probability $1-\delta$ over the draws of the stochastic
  (sub)gradients.
\end{lem}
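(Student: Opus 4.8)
The plan is to verify the two regret conditions required by \thmref{proxy-lagrangian-suboptimality-and-feasibility}—low external regret for the $\parameters$-player and low swap regret for the $\multipliers$-player—and then invoke that theorem exactly as in the proof of \lemref{oracle-proxy-lagrangian}. The only substantive difference from the oracle case is that both players now act on \emph{stochastic} (sub)gradients rather than exact ones, so each deterministic regret bound must be converted into a high-probability bound, and the two conversions combined by a union bound at level $\delta/2$ apiece (this is the source of the $\ln(2/\delta)$ term).

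For the $\parameters$-player, I would first apply the online-gradient-descent regret bound \corref{sgd} to the observed subgradients $\stochasticsubgrad_{\parameters}^{(t)}$, which, with $\eta_{\parameters} \defeq \bound{\Parameters} / \bound{\stochasticsubgrad}\sqrt{2T}$, controls $\sum_{t} \inner{\stochasticsubgrad_{\parameters}^{(t)}}{\parameters^{(t)} - \parameters^*}$ by $\bound{\Parameters}\bound{\stochasticsubgrad}\sqrt{2T}$ \emph{uniformly} over $\parameters^* \in \Parameters$. Convexity of $\objective$ and the $\proxyconstraint{i}$ (hence of $\lagrangian_{\parameters}(\cdot,\multipliers^{(t)})$, since $\multipliers \in \Delta^{\numconstraints+1}$ has nonnegative entries) then upper-bounds the true per-round suboptimality $\lagrangian_{\parameters}(\parameters^{(t)},\multipliers^{(t)}) - \lagrangian_{\parameters}(\parameters^*,\multipliers^{(t)})$ by $\inner{g^{(t)}}{\parameters^{(t)} - \parameters^*}$, where $g^{(t)}$ is a true subgradient of which $\stochasticsubgrad_{\parameters}^{(t)}$ is an unbiased estimate. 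Writing $g^{(t)} = \stochasticsubgrad_{\parameters}^{(t)} + (g^{(t)} - \stochasticsubgrad_{\parameters}^{(t)})$ splits the summed regret into the deterministic term plus a martingale-difference sum $\sum_t \inner{g^{(t)} - \stochasticsubgrad_{\parameters}^{(t)}}{\parameters^{(t)} - \parameters^*}$ whose increments are bounded by $2\bound{\Parameters}\cdot 2\bound{\stochasticsubgrad}$ via the diameter of $\Parameters$. Azuma--Hoeffding bounds this by $O(\bound{\Parameters}\bound{\stochasticsubgrad}\sqrt{T\ln(2/\delta)})$, and combining with the $\sqrt{2T}$ regret via $(\sqrt{a}+\sqrt{b})^2 \le 2(a+b)$ yields the stated $\epsilon_{\parameters}$.

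For the $\multipliers$-player I would proceed identically, applying \lemref{internal-regret} (with $\matrixmultipliersize \defeq \numconstraints+1$) to the observed gradients $\stochasticgrad_{\multipliers}^{(t)}$ to obtain deterministic swap regret $2\bound{\stochasticgrad}\sqrt{(\numconstraints+1)\ln(\numconstraints+1)/T}$, then passing from $\stochasticgrad_{\multipliers}^{(t)}$ to the exact gradient of $\lagrangian_{\multipliers}(\parameters^{(t)},\cdot)$ by a second martingale concentration. Here no convexity is required and the step is cleaner, since $\lagrangian_{\multipliers}$ is \emph{linear} in $\multipliers$ (the possibly non-convex $\constraint{i}$ enter only through evaluation), so the swap-regret comparison $\lagrangian_{\multipliers}(\parameters^{(t)}, \matrixmultipliers^*\multipliers^{(t)}) - \lagrangian_{\multipliers}(\parameters^{(t)},\multipliers^{(t)})$ is already exactly linear in the gradient vector. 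The same AM-GM bookkeeping produces the extra $\sqrt{2(1+16\ln(2/\delta))}$ factor relative to \lemref{oracle-proxy-lagrangian}, giving $\epsilon_{\multipliers}$.

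The main obstacle is the high-probability conversion against a \emph{data-dependent} comparator: the conditions in \thmref{proxy-lagrangian-suboptimality-and-feasibility} contain $\inf_{\parameters^*}$ and $\max_{\matrixmultipliers^*}$, so the optimal comparator depends on the entire random trajectory and is not adapted to the filtration, which breaks the zero-conditional-mean property needed for a naive scalar Azuma bound. I would handle this by isolating the comparator: the deterministic regret inequalities of \corref{sgd} and \lemref{internal-regret} already hold \emph{simultaneously} for all comparators, so only the zero-mean noise $\sum_t (g^{(t)} - \stochasticsubgrad_{\parameters}^{(t)})$ (and its $\multipliers$-analogue) must be controlled; its inner product with an arbitrary comparator is bounded by $\bound{\Parameters}$ times the norm of a vector martingale, which concentrates dimension-free by a Pinelis-type inequality, while the comparator-free piece $\sum_t\inner{g^{(t)} - \stochasticsubgrad_{\parameters}^{(t)}}{\parameters^{(t)}}$ is a genuine scalar martingale. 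Everything else—substituting $\epsilon_{\parameters}$ and $\epsilon_{\multipliers}$ into \thmref{proxy-lagrangian-suboptimality-and-feasibility} and reading off optimality and feasibility—is identical to the oracle case and routine.
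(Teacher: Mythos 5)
Your proposal is correct and follows essentially the same route as the paper: the paper's proof simply applies \corref{stochastic-sgd} to the $\parameters$-player and \lemref{stochastic-internal-regret} (with $\matrixmultipliersize = \numconstraints+1$) to the $\multipliers$-player---both of which are themselves proved by exactly your ``deterministic regret on the observed (sub)gradients plus an Azuma--Hoeffding martingale correction'' argument, via \thmref{stochastic-mirror}---and then union-bounds by taking $\delta = 2\delta'$. Your extra care about the data-dependent comparators ($\inf_{\parameters^*}$ and $\max_{\matrixmultipliers^*}$), handled by separating a comparator-free scalar martingale from a vector-martingale norm term, is a refinement the paper's own chain of lemmas (which apply Hoeffding--Azuma with a fixed reference point) passes over silently, so it strengthens rather than diverges from the published argument.
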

\begin{prf}{stochastic-proxy-lagrangian}
  Applying \corref{stochastic-sgd} to the optimization over $\parameters$, and
  \lemref{stochastic-internal-regret} to that over $\multipliers$ (with
  $\matrixmultipliersize \defeq \numconstraints + 1$), gives that with
  probability $1-2\delta'$ over the draws of the stochastic
  (sub)gradients:
  \begin{align*}
    \frac{1}{T} \sum_{t=1}^T \lagrangian_{\parameters}\left( \parameters^{(t)},
    \multipliers^{(t)} \right) - \frac{1}{T} \sum_{t=1}^T
    \lagrangian_{\parameters}\left( \parameters^*, \multipliers^{(t)} \right)
    \le& 2 \bound{\Parameters} \bound{\stochasticsubgrad} \sqrt{ \frac{ 1 + 16
    \ln\frac{1}{\delta'} }{T} } \\
    \frac{1}{T} \sum_{t=1}^T \lagrangian_{\multipliers}\left(
    \parameters^{(t)}, \matrixmultipliers^* \multipliers^{(t)} \right) -
    \frac{1}{T} \sum_{t=1}^T \lagrangian_{\multipliers}\left(
    \parameters^{(t)}, \multipliers^{(t)} \right) \le& 2
    \bound{\stochasticgrad} \sqrt{ \frac{ 2 \left(\numconstraints+1\right)
    \ln \left(\numconstraints+1\right) \left( 1 + 16
    \ln\frac{1}{\delta'}\right) }{T} }
  \end{align*}
  Taking $\delta=2\delta'$, and using the definitions of $\bar{\parameters}$
  and $\bar{\multipliers}$, yields the claimed result.
\end{prf}

\subsection{Shrinking the Stochastic Proxy Lagrangian Solution}\label{sec:proxy-lagrangian:shrinking}

%
Like \algref{oracle-lagrangian}, \algrefs{oracle-proxy-lagrangian}{stochastic-proxy-lagrangian} return a stochastic solutions with support on $T$ discrete solutions. Again, we show that we can find just as good a stochastic solution with minimal support on $m+1$ discrete solutions. 

It turns out that the same existence result that we provided for the Lagrangian
game (\lemref{sparse-lagrangian})---of a \emph{Nash} equilibrium---holds for
the proxy-Lagrangian (this is \lemref{sparse-proxy-lagrangian} in
\appref{sparsity}).
Furthermore, the exact same linear programming procedure of
\lemref{sparse-linear-program} can be applied (with the $\vec{\constraint{i}}$s
being defined in terms of the \emph{original}---not proxy---constraints) to
yield a solution with support size $\numconstraints+1$, and works equally well.
This is easy to verify: since $\bar{\parameters}$, as defined in
\thmref{proxy-lagrangian-suboptimality-and-feasibility}, is a distribution over
the $\parameters^{(t)}$s, and is therefore feasible for the LP, the \emph{best}
distribution over the iterates will be at least as good.

\section{Experiments}\label{sec:experiments}

We illustrate the broad applicability of rate constraints and investigate how well different optimization strategies perform.   We use the experiments to investigate the following questions: 

\noindent\textbf{Do rate constraints help in practice?}
\begin{itemize}
    \item Can we effectively solve the rate-constrained optimization problem?
    \item Can we get good results \emph{at test time} by training with rate constraints?
    \item Do rate constraints interact well with other types of constraints (\eg data-independent shape constraints)?
\end{itemize}

\noindent\textbf{Does the proxy-Lagrangian better solve the constrained optimization problem?}
\begin{itemize}
    \item Does simply using a hinge surrogate for both players over-constrain?
    \item Does the proposed proxy-Lagrangian formulation result in better solutions?
    \item With the proxy-Lagrangian, is it necessary in practice for the $\lambda$-player to minimize the swap regret or does simply minimizing the external regret work just as well?
\end{itemize}

\noindent\textbf{Do we really need stochastic classifiers?}
\begin{itemize}
    \item Do the iterates oscillate due to non-existence of an equilibrium in the non-convex setting, causing the last iterate to sometimes be very bad?
    \item Does the proposed sparsely supported $m$-stochastic classifier work at least as well in practice as the $T$-stochastic classifier?
    \item Does the \emph{best} iterate perform as well as the stochastic classifiers?
\end{itemize}

To investigate these questions, we compared twelve optimization algorithms for each of seven datasets. \tabref{datasets} lists the three benchmark and four real-world datasets we used, each  randomly split into train, validation and test sets. We experimented with seven different rate constraints and monotonicity constraints \citep{Groeneboom:2014} as described in \tabref{expConstraints} and the following subsections. The last column of \tabref{expConstraints} states whether the classifier had access to information about the different datasets used in the constraints, for example, if there were ten constraints defined on ten different countries, was there a features in the feature vector $x$ that specified which country the example belonged to?

More details about each dataset and the chosen constraints are given in the following subsections.

As listed in \tabref{datasets}, we performed the experiments on linear models and two types of nonlinear models: standard two-layer ReLU neural nets (NN), and two-layer calibrated ensemble of random tiny lattices (RTLs) \citep{Canini:2016}. 

The rest of this section delves deeper into experimental details and result tables. Then \secref{results}  discusses the results and how they provide positive and negative evidence for the above research questions -- the reader may prefer to skip to \secref{results} and only consult the following experimental details as needed.

\subsection{TensorFlow Implementation}
Our experiments were all run using TensorFlow. We have already open-sourced our implementation of Lagrangian and proxy-Lagrangian optimization at
\url{https://github.com/tensorflow/tensorflow/tree/r1.10/tensorflow/contrib/constrained_optimization}.  (Note to Reviewers: by November 2018 we plan to also open-source a user-friendly API for this package, specifically for rate constraints.)

The lattice models were implemented with the open-source Tensor Flow Lattice package, and consist of learned one-dimensional piecewise linear feature transformations followed by an ensemble of interpolated look-up tables (aka lattices).  All model parameters are jointly trained. For more details on lattice models see \citet{Gupta:2016},\citet{Canini:2016},\citet{You:2017}.  Lattice models can be efficiently constrained for \emph{partial monotonicity} shape constraints, where the term \emph{partial} refers to the practitioner specifying which features can only have a positive (or negative) impact on $f(x)$. To produce the desired partial monotoncity, a large number of data-independent linear inequality constraints are needed, each constraining a pair of model parameters. In the Tensor Flow Lattice package, these monotonicity shape constraints are handled by a projection after each minibatch of stochastic gradients, see \citet{You:2017} for more details.

\subsection{Hyperparameter Optimization}
For each of the different datasets, we fix the number of loops and model architecture ahead of time to perform well for the unconstrained problem. For the presented results, we validate the ADAM learning rate. Then for each of the twelve compared optimizations, we validate the two ADAM learning rates, one for optimizing the model parameters $\theta$, and the other for optimizing the constraints parameters $\lambda$. All ADAM learning rates were varied by powers of 10 around the usual default of ADAM learning rate of $0.001$.

The usual strategy of choosing hyperparameters that score best on the validation set is not satisfying in the constrained optimization setting, because now there are two metrics of interest: accuracy and constraint violation, and the appropriate trade-off between them may be problem dependent. One solution researchers turn to is to side-step the issue of choosing one set of hyperparameters, and instead present the Pareto frontier of results over many hyperparameters on the test set. While certainly valuable in a research setting, we must be mindful that in practice one cannot see the Pareto frontier on the test set, and must make a choice for hyperparameters based only on the training and validation sets (as is standard). 

For our experiments, we investigate the practical setting in which one must choose one set of hyperparameters on which to evaluate the test set. For that, we need a heuristic to choose the best hyperparameters based only on the training and validation data. We analyzed a number of such heuristics that differently balance the validation accuracy and constraint violation, and were unable to find any heuristic that was perfect, but settled on the following strategy that has some nice properties. For any set $\beta$ of hyperparameter choices, let  $\textrm{LossRank}(\beta)$ be the rank ($1 \ldots, B$) of the validation loss using a model trained with hyperparameter set $\beta$, with $\textrm{LossRank}(\beta)= 1$ corresponding to the smallest loss, and let  $\textrm{WorstConstraintRank}(\beta)$ be the rank ($1 \ldots, B$) of the maximum constraint violation on the validation set. Then choose the hyperparameter set $\beta$ that satisfies:
\begin{align}\label{eq:heuristic}
\argmin_{\beta} \: \max \left\{ \textrm{LossRank}(\beta), \textrm{WorstConstraintRank}(\beta) \right\}, 
\end{align}
with ties broken by the minimizing the validation loss.

This strategy chooses the hyperparameter set that has both low loss and small constraint violations, and guarantees that no other hyperparameter set choice would have both better validation accuracy and smaller constraint violations.

\begin{table}
\centering
  \caption{Datasets and Model Types Used in Experiments}
  \label{tab:datasets}
  \begin{tabular}{lrrrrlrll}
 \toprule
Dataset & Features & Train & Valid & Test & Model Type & \# Model \\
 & &  &  &  &  & Parameters \\
\midrule
Bank Marketing & 60 & 31,647 & 4,521 & 9,042 & Linear & 61 \\
Adult  & 122  & 34,189 & 4,884 & 9,768 & Linear  & 123 \\
COMPAS & 31 & 4,320 & 612 & 1,225 & 2 Layer NN & 10 hidden units \\
Business Entity  & 37 & 11,560 & 3,856 & 3,856 & 2 Layer NN   & 16 hidden units \\
Thresholding & 7 & 70,874 & 10,125 & 20,250 & 2 Layer NN & 32 hidden units\\
Map Intent & 32 & 420,000 & 60,000 & 120,000 & RTL & 93,600\\
Filtering & 16 & 1,282,532 & 183,219 & 366,440 & RTL & 3,305 \\
  \bottomrule
\end{tabular}
\end{table}

\begin{table}
\centering
  \caption{Constraints Used in Experiments}
  \label{tab:expConstraints}
  \begin{tabular}{lll}
 \toprule
Dataset & Constraints (\# of constraints) & Constraint Group in $x$?\\
\midrule
Bank Marketing  & Demographic Parity (5) & Y  \\
Adult  &  Equal Opportunity (4) &  Y  \\
COMPAS  & Equal Opportunity (4) &  Y   \\
Business Entity Res. & Minimum Recall (18) and Equal Accuracy (1) & Y \\
Thresholding & Steering Examples Min Acc. (1) & N  \\
Map Intent & No Worse Off (10), Monotonicity (148,800) & Y \\ 
Filtering & Loss-only Churn (11), Monotonicity (9,740) & Y \\
  \bottomrule
\end{tabular}
\end{table}

\subsection{Algorithms Tested}
We experiment with four groups of algorithms:
\begin{enumerate}
\item {\bf Unconstrained:} the model is trained without any constraints.
\item {\bf Hinge:} We use the common approach of using a hinge relaxation of the constraints in place of the actual constraints in the Lagrangian. This approach refers to that of \algref{stochastic-lagrangian}.
\item {\bf 0-1 swap:} This refers to \algref{stochastic-proxy-lagrangian}, which directly uses the 0-1 constraint in the proxy-Lagrangian, the $\lambda$-player minimizes swap-regret and the $\theta$-player minimizes external regret. 
\item {\bf 0-1 ext:} This refers to \algref{proxy-additive-ext} training the non-zero-sum game where $\theta$ player minimizes the original Lagrangian but the $\lambda$-player minimizes external-regret on the Lagrangian with the original constraints replaced by the proxy constraints. This is the ``obvious'' non-zero-sum analogue of the Lagrangian, but does not enjoy the theoretical guarantees of the proxy-Lagrangian. This is used as a comparison to 0-1 swap to see whether minimizing external regret (instead of the more complex swap regret) suffices in practice.
\end{enumerate}
\begin{algorithm*}[t]

\begin{pseudocode}

\codename $\mbox{ProxyAdditiveExternalLagrangian}\left( \Radius \in \R_+, g_0 : \Theta \rightarrow \mathbb{R}, \{g_i\}_{i=1}^m, \{\widetilde{g_i}\}_{i=1}^m, T \in \N, \eta_{\parameters}, \eta_{\multipliers} \in \R_+ \right)$: \\
\codeline Initialize $\parameters^{(1)} = 0$, $\multipliers^{(1)} = 0$ \codecomment{Assumes $0 \in \Parameters$} \\
\codeline For $t \in \indices{T}$: \\
\codeline \> Let $\stochasticsubgrad^{(t)}_{\parameters}$ be a stochastic subgradient of $g_0(\parameters^{(t)}) + \sum_{i=1}^m \multipliers^{(t)}_i g_i(\theta)$ \wrt $\parameters$ \\
\codeline \> Let $\stochasticgrad^{(t)}_{\multipliers}$ be a stochastic gradient of  $g_0(\parameters^{(t)}) + \sum_{i=1}^m \multipliers^{(t)}_i \widetilde{g}_i(\theta)$ \wrt $\multipliers$ \\
\codeline \> Update $\parameters^{(t+1)} = \Pi_{\Parameters}\left( \parameters^{(t)} - \eta_{\parameters} \stochasticsubgrad^{(t)}_{\parameters} \right)$ \codecomment{Projected SGD updates \dots} \\
\codeline \> Update $\multipliers^{(t+1)} = \Pi_{\Multipliers}\left( \multipliers^{(t)} + \eta_{\multipliers} \stochasticgrad^{(t)}_{\multipliers} \right)$ \codecomment{\;\;\;\;\dots} \\
\codeline Return $\parameters^{(1)},\dots,\parameters^{(T)}$ and $\multipliers^{(1)},\dots,\multipliers^{(T)}$
\end{pseudocode}

\caption{
  Optimizes the Lagrangian formulation \emph{with} proxy constraints. Like the proxy-Lagrangian, this is a non-zero-sum game, but unlike the proxy-Lagrangian, we have no theoretical justification for it. That said, it makes intuitive sense, and works well in practice.
  The $\lambda$-player optimizes based on the proxy-constraints and the $\theta$-player optimizes based on the original constraints.
  The parameter $\Radius$ is the radius of the Lagrange multiplier space
  $\Multipliers \defeq \left\{ \multipliers \in \R_+^{\numconstraints} :
  \norm{\multipliers}_1 \le \Radius \right\}$, and the functions
  $\Pi_{\Parameters}$ and $\Pi_{\Multipliers}$ project their arguments onto
  $\Parameters$ and $\Multipliers$ (respectively) \wrt the Euclidean norm.
  $\{g_i\}_{i=1}^m, \{\widetilde{g_i}\}_{i=1}^m$ are respectively the original constraints and proxy-constraints.
}

\label{alg:proxy-additive-ext}

\end{algorithm*}

Then, for each constrained optimization technique, we show the results for the following four solution types:
\begin{enumerate}
    \item {\bf T-stoch}: the stochastic solution that is the uniform distribution over the $T$ iterates $\theta^{(1)},...,\theta^{(T)}$.
    \item {\bf m-stoch}: the stochastic solution obtained by applying the``shrinking" technique to the  $T$-stoch solution on the training set, which will have support on at most $m+1$ deterministic solutions.
    \item {\bf Last}: the last iterate (i.e. $\theta^{(T)}$).
    \item {\bf Best}: the ``best" iterate out of all $T$ iterates $\theta^{(1)},...,\theta^{(T)}$, where ``best" is chosen by the heuristic given in \eqref{heuristic}  applied on the training set. 
\end{enumerate}

We note that in the non-convex proxy-Lagrangian setting, the 0-1 swap algorithm's $T$-stoch or $m$-stoch solutions come with theoretical guarantees if we replace the SGD with the approximate optimization oracle. In contrast, the 0-1 ext algorithm has no such guarantees, but is simpler. Similarly, in the non-convex setting, the deterministic solutions will not have any guarantees, but are even simpler. 

\subsection{Bank Marketing}


\begin{table*}
  \caption{Bank Marketing Experiment Results}
  \label{tab:bank_marketing}
  \begin{tabular}{lrrrrrr}
 \toprule
Algorithm & Train Err. & Valid Err. & Test Err. & Train Vio. & Valid Vio. &  Test Vio. \\
\midrule
Unconstrained  & 0.0948  & 0.0935 & 0.0937 & 0.0202 &0.0220 & 0.0152 \\
\midrule
Hinge $m$-stoch.& 0.0955 & 0.0954 & 0.0949 & 0 & -0.0008 & -0.0030\\
Hinge $T$-stoch. & 0.1109 & 0.1114 & 0.1121 & -0.0177 & -0.0181 & -0.0179\\
Hinge Best & 0.0964 & 0.0969 & 0.0955 & -0.0032 & -0.0045 & -0.0047\\
Hinge Last  & 0.1122 & 0.1129 & 0.114 & -0.02 & -0.02 & -0.02\\
\midrule
0-1 swap. $m$-stoch. & 0.0939 & 0.0943 & 0.0951 & 0 & -0.0005 & 0.0019\\
0-1 swap. $T$-stoch. & 0.0963 & 0.0955 & 0.0947 & 0.0004 & -0.0003 & -0.0031\\
0-1 swap. Best  & 0.0936 & 0.0935 & 0.0932 & -0.0004 & -0.0009 & -0.0041\\
0-1 swap. Last  & 0.0963 & 0.0957 & 0.0954 & -0.0007 & -0.001 & -0.0035\\
\midrule
0-1 ext. $m$-stoch.   & 0.0946 & 0.0952 & 0.0946 & 0 & -0.001 & -0.0024\\
0-1 ext. $T$-stoch. & 0.1083 & 0.1087 & 0.1085 & -0.0135 & -0.0146 & -0.0139\\
0-1 ext. Best  & 0.0963 & 0.0964 & 0.0953 & -0.0021 & -0.0016 & -0.0056\\
0-1 ext. Last & 0.1029 & 0.1032 & 0.101 & -0.0046 & -0.0072 & -0.0056\\
  \bottomrule
\end{tabular}
\end{table*}

The Bank Marketing UCI benchmark dataset \citep{Lichman:2013} classifier predicts whether someone will sign up for the bank product being marketed.  This dataset was used to test improving statistical parity for a linear model in \citet{Zafar:2015} but with only one protected group based on age. We similarly use a linear model and age as a protected feature, but create $5$ protected groups based on the five training set quantiles of age.  We add a statistical parity rate constraint for each of the five age quantiles with an additive slack of $p = 2\%$:
\begin{equation*}
    p^+(D_k) \leq  p^+(D)) - .02\%,
\end{equation*}
where $D_k$ are the training examples from the $k$th protected group for $k = 1, 2, \ldots, 5$, and $D$ are all the training examples. 

The results can  be found in \tabref{bank_marketing}. We note that the Hinge Last solution is a \emph{degenerate} solution in that it always predicts the a priori more probable class. 


\subsection{Adult}



\begin{table}
  \caption{Adult Experiment Results}
  \label{tab:adult}
  \begin{tabular}{lrrrrrr}
 \toprule
Algorithm & Train Err. & Valid Err. & Test Err. & Train Vio. & Valid Vio. &  Test Vio. \\
\midrule
Unconstrained  & 0.1421  & 0.1348 & 0.1428 & 0.0803 & 0.0604 & 0.0555\\
\midrule
Hinge $m$-stoch.& 0.1431 & 0.1348 & 0.1442 & 0 & -0.0088 & 0.0025\\
Hinge $T$-stoch. & 0.1462 & 0.1394 & 0.1481 & -0.0409 & -0.0372 & -0.0436\\
Hinge Best & 0.1424 & 0.1333 & 0.1447 & -0.028 & -0.0154 & -0.0317\\
Hinge Last & 0.1532 & 0.1490 & 0.1551 & -0.0174 & -0.0217 & -0.0254\\
\midrule
0-1 swap. $m$-stoch. & 0.1431 & 0.1349 & 0.1432 & 0.0176 & 0.0023 & 0.0559\\
0-1 swap. $T$-stoch.  & 0.1428 & 0.1365 & 0.1436 & 0.0054 & 0.0354 & 0.0285\\
0-1 swap. Best & 0.1426 & 0.1354 & 0.1440 & -0.0016 & 0.0140 & 0.0154\\
0-1 swap. Last  & 0.1436 & 0.1358 & 0.1443 & 0.0069 & 0.0248 & 0.0221\\
\midrule
0-1 ext. $m$-stoch. & 0.1418 & 0.1348 & 0.1432 & 0 & -0.0019 & 0.0059\\
0-1 ext. $T$-stoch. & 0.1441 & 0.1369 & 0.1447 & 0.0034 & 0.022 & 0.0174\\
0-1 ext. Best & 0.1420 & 0.1348 & 0.1432 & -0.0374 & -0.0333 & -0.0015\\
0-1 ext. Last & 0.1436 & 0.1358 & 0.1448 & -0.0116 & 0.0078 & 0.0028\\
  \bottomrule

\end{tabular}
\end{table}

We used the benchmark  Adult income UCI dataset~\citep{Lichman:2013}. The goal is to predict whether someone makes more than $50$k per year, and also do well at the \emph{equal opportunity} fairness metric. We used four protected groups: two race-based (Black or White) and two gender-based (Male or Female). We preprocessed the dataset consistent with \citet{Zafar:2015} and \citet{Goh:2016}. \citet{Goh:2016} showed that by explicitly constraining the difference in coverage and using a \emph{linear} model, they could achieve higher $p$ fairness and better accuracy than earlier work using correlation constraints of \citet{Zafar:2015} by up to $0.5\%$ on this dataset.

For these experiments, we added four rate constraints to the training to impose \emph{equal opportunity} at $95\%$, that is for each of the protected groups  (Black, White, Female and Male) the constraints force the
classifier's coverage (the proportion classified positive) on the positively labeled examples for each protected group to be at least $95 \%$ of the overall coverage on the positively labeled examples:
\begin{equation}\label{eq:adultSlack}
    p^+(D_k[y=1]) \geq 0.95\cdot  p^+(D[y=1]),
\end{equation}
where $D_k$ are the training examples from the $k$th protected group for $k = 1, 2, \ldots, 4$, and $D$ are all the training examples. 

We use a linear model. The results can be found in \tabref{adult}.

\subsection{COMPAS}


\begin{table}
  \caption{COMPAS Experiment Results}
  \label{tab:compas}
  \begin{tabular}{lrrrrrr}
 \toprule
Algorithm & Train Err. & Valid Err. & Test Err. & Train Vio. & Valid Vio. &  Test Vio. \\
\midrule
Unconstrained  & 0.3056  &  0.3160 & 0.3109 & 0.1151 & 0.2143 & 0.1082\\
\midrule
Hinge $m$-stoch.  & 0.3711 & 0.3744 & 0.3676 & 0 & 0.0395 & 0.0284\\
Hinge $T$-stoch.  & 0.2880 & 0.3387 & 0.3198 & 0.1093 & 0.1779 & 0.0917\\
Hinge Best & 0.2840 & 0.3322 & 0.3223 & 0.0803 & 0.1262 & 0.0800\\
Hinge Last  & 0.2882 & 0.3322 & 0.3231 & 0.1275 & 0.1968 & 0.0996\\
\midrule
0-1 swap. $m$-stoch. & 0.3132 & 0.3015 & 0.3174 & 0.0004 & 0.0851 & 0.0111\\
0-1 swap. $T$-stoch.  & 0.2968 & 0.3208 & 0.3219 & 0.0257 & 0.1286 & 0.0547\\
0-1 swap. Best   & 0.3009 & 0.3096 & 0.3125 & 0.0281 & 0.1084 & 0.0356\\
0-1 swap. Last   & 0.3023 & 0.3096 & 0.3158 & 0.0412 & 0.1153 & 0.0480\\
\midrule
0-1 ext. $m$-stoch. & 0.3145 & 0.3080 & 0.3146 & 0 & 0.0813 & 0.0147\\
0-1 ext. $T$-stoch.& 0.2990 & 0.3128 & 0.3086 & 0.0323 & 0.1154 & 0.0321\\
0-1 ext. Best & 0.3106 & 0.3160 & 0.3101 & -0.0069 & 0.0797 & -0.0085\\
0-1 ext. Last & 0.2935 & 0.3160 & 0.3125 & 0.0330 & 0.1231 & 0.0325\\
  \bottomrule
\end{tabular}
\end{table}

The positive label in the ProPublica’s COMPAS recidivism data is a prediction the person will re-offend. The goal is to predict recidivism with fairness constraints and we preprocess this dataset in a similar manner as in the Adult dataset and the protected groups are also similar: two race-based (Black and White) and two gender-based (Male and Female). The classifier we use is a $2$ layer neural network with $10$ hidden units.

In this experiment, the goals are quite similar to that of the Adult experiment. Our protected groups are again two races (Black and White) and two genders (Male and Female) and the goal is to constrain equal opportunity such that no group is unfairly getting targeted. However, instead of expressing the constraint with multiplicative slack as in the Adult experiments, we expressed it as an additive slack of $5\%$:
\begin{equation*}
    p^+(D_k[y=1]) \leq  p^+(D[y=1]) + .05\%,
\end{equation*}
where $D_k$ are the training examples from the $k$th protected group for $k = 1, 2, \ldots, 4$, and $D$ are all the training examples.   That is, the positive prediction rate of the positively labeled examples for each protected class can exceed that of the overall dataset by at most $5\%$.

The results are shown in \tabref{compas}.

\subsection{Business Entity Resolution}
In this entity resolution problem from Google, the task is to classify whether a pair of business descriptions describe the same real business. Features include measures of similarity of the two business titles, phone numbers, and so on. We add two types of constraints to the training.  First, the data is global, and for each of the 16 most frequent countries, we imposed a minimum recall rate constraint:
\begin{equation*}
    p^+(D_k[y=1]) \geq 95\%,
\end{equation*}
where $D_k$ are the training examples from the $k$th country for $k = 1, 2, \ldots, 16$. It is also known whether each example is a chain business or not. We impose the same minimum recall rate constraint on chain business examples and non-chain business examples. Additionally, we add an equal accuracy constraint that the accuracy on not-chain businesses should not be worse than the accuracy on chain businesses by more than ten percent, as a proxy to making sure large and small businesses receive similar performance from the model:
\begin{align*}
\frac{c^+(D_{\textrm{notCh}}[y=1]) + c^-(D_{\textrm{notCh}}[y=-1])}{| D_{\textrm{notCh}} |} 
 \geq \frac{ c^+(D_{\textrm{ch}}[y=1]) + c^-(D_{\textrm{ch}}[y=-1])}{|D_{\textrm{ch}}|}  - 0.1,
\end{align*}
where ch is an abbreviation for chain.

We ran this experiment a two-layer neural network, the results are shown in \tabref{bizNN}). In the top row, one sees that the unconstrained model has a very high maximum constraint violations, because it is very difficult to achieve $95\%$ recall for all regions. 

\begin{table*}
  \caption{Business Entity Resolution Experiment Results: 2 Layer NN}
  \label{tab:bizNN}
  \begin{tabular}{lrrrrrr}
 \toprule
Algorithm & Train Err. & Valid Err. & Test Err. & Train Vio. & Valid Vio. &  Test Vio. \\
\midrule
Unconstrained  & 0.1223  & 0.1505 & 0.1520 & 0.1727 & 0.2172 & 0.2357\\
\midrule
Hinge $m$-stoch. & 0.2405 & 0.2509 & 0.2535 & 0 & 0.0341 & 0.0282\\
Hinge $T$-stoch.& 0.3308 & 0.3351 & 0.3446 & -0.0258 & 0.0196 & -0.0082\\
Hinge Best  & 0.2657 & 0.2720 & 0.2786 & -0.0083 & 0.0437 & 0.0026\\
Hinge Last   & 0.2483 & 0.2624 & 0.2617 & -0.0175 & 0.0125 & 0.0421\\
\midrule
0-1 swap. $m$-stoch.   & 0.1751 & 0.1953 & 0.1983 & 0 & 0.0745 & 0.0898\\
0-1 swap. $T$-stoch.  & 0.1506 & 0.1749 & 0.1760 & 0.0950 & 0.1427 & 0.1933\\
0-1 swap. Best  & 0.1407 & 0.1687 & 0.1696 & 0.0681 & 0.1224 & 0.1706\\
0-1 swap. Last   & 0.1699 & 0.1910 & 0.1927 & 0.0252 & 0.0864 & 0.0846\\
\midrule
0-1 ext. $m$-stoch. & 0.1891 & 0.2060 & 0.2063 & 0 & 0.0741 & 0.0752\\
0-1 ext. $T$-stoch.  & 0.1934 & 0.2082 & 0.2092 & 0.0011 & 0.0652 & 0.0770\\
0-1 ext. Best  & 0.1889 & 0.2053 & 0.2049 & 0.0026 & 0.0750 & 0.0750\\
0-1 ext. Last  & 0.1968 & 0.2118 & 0.2130 & 0.0008 & 0.0594 & 0.0750\\
  \bottomrule
\end{tabular}
\end{table*}


\subsection{Thresholding}
For this Google problem, a ranked list of tens or hundreds of business results is given for a specific query (e.g. [coffee near me]), and the task is to threshold the list to return only the results worth showing a user. To do this efficiently in the production setting, a binary classifier decides if the 2nd result is worth keeping, and if its decision is positive, continues down the ranked list, and once a result is classified as not worth keeping all lower-ranked results are discarded. For simplicity, all examples are treated as independent even if they originally came from the same ranked list. We use a $2$ layer neural network with $32$ hidden units as the classifier.

A medium-size labeled set is available with labels that are known to be noisy, and the label noise is not zero-mean and not homogenous across the feature space. That set is broken uniformly and randomly into train/validation/test sets.  

We also have an auxiliary independent set of $1,814$ \emph{steering examples} (see \secref{steering}) which were more carefully labeled by expert labelers, and were actively sampled to pinpoint key types of problems. If one only uses the steering examples (ignoring the noisy labeled data), previous experiments have shown that one can stably achieve a $33\%$ cross-validation error rate on the steering examples. The goal is to have a model that gets that $33\%$ error on the steering examples, but also works as well as possible on the larger noisy data. 

The top row of \tabref{thresholding} shows simply training on the noisy data produces a error rate of $35\%$ on the noisy test data, but violates our goal of $33\%$ error on the steering examples by $3\%$ (that is, it has error rate $36\%$ on the steering examples).

The other extreme of training only on the steering examples is also unsatisfying: as reported in the second row of \tabref{thresholding} that overfits the steering examples and performs poorly on the large (noisy) test set with an error rate of $39\%$, because the steering example set is too small and does not cover the entire feature space. 

For the rest of the rows in \tabref{thresholding}, we train on the noisy data with a minimum accuracy rate constraint for $67\%$ accuracy on the steering examples:  
\begin{align*}
\frac{c^+(D_{\textrm{steering}}[y=1]) + c^-(D_{\textrm{steering}}[y=-1])}{ | D_{\textrm{steering}} | } \geq 0.67.
\end{align*}

All of the  different optimizations find essentially feasible solutions, with many able to achieve the same or better test set performance as the unconstrained training (top row).  


\begin{table*}
  \caption{Thresholding Experiment Results}
  \label{tab:thresholding}
  \begin{tabular}{lrrrr}
 \toprule
Algorithm & Train Err. & Valid Err. & Test Err. & Steering Violation \\
\midrule
Unconstrained  & 0.3595  &  0.3491 & 0.3538 & 0.0316 \\
Unconstrained Trained on Steering & 0.3909 & 0.3924 & 0.3930 & -0.0456 \\
\midrule
Hinge $m$-stoch. & 0.3601 & 0.3512 & 0.3582 & 0  \\
Hinge $T$-stoch. & 0.3635 & 0.3558 & 0.3594 & -0.0037  \\
Hinge Best  & 0.3606 & 0.3509 & 0.3560 & -0.0031 \\
Hinge Last   & 0.3621 & 0.3542 & 0.3594 & -0.0003 \\
\midrule
0-1 swap. $m$-stoch. & 0.3574 & 0.3500 & 0.3557 & -0.0025 \\
0-1 swap. $T$-stoch.  & 0.3593 & 0.3513 & 0.3551 & 0.0010  \\
0-1 swap. Best  & 0.3561 & 0.3484 & 0.3532 & -0.0020  \\
0-1 swap. Last  & 0.3584 & 0.3497 & 0.3543 & -0.0020 \\
\midrule
0-1 ext. $m$-stoch. & 0.3605 & 0.3504 & 0.3568 & -0.0009 \\
0-1 ext. $T$-stoch. & 0.3602 & 0.352 & 0.3553 & 0.0010\\
0-1 ext. Best  & 0.3569 & 0.3486 & 0.3515 & -0.0014 \\
0-1 ext. Last  & 0.3579 & 0.3500 & 0.3539 & -0.0009 \\
  \bottomrule
\end{tabular}
\end{table*}

\subsection{Map Intent}
For this Google problem, the task is to classify whether a query is seeking a result on a map. For example, the query [coffee near me] would be labeled positive, while [coffee health benefits] would be labeled negative. We add ten rate constraints for ten regions that constrain the new model training to be at least as accurate as the production classifier is for each of those ten regions. 

\begin{align*}
\frac{c^+(D_{\textrm{region}}[y=1]) + c^-(D_{\textrm{region}}[y=-1])}{ | D_{\textrm{region}} | }  \geq \kappa_{\textrm{region}},
\end{align*}
where $\kappa_{\textrm{region}}$ is the accuracy of the production classifier for that region. The feature vector $x$ includes ten Bool features that indicate if $x$ belongs to one of these ten regions (some examples do not belong to any of the ten regions).

Thirty-two dense and categorical features are available. We train an RTL model that is an ensemble of 300 lattices, where each lattice acts on 8 of the 32 features, with shared calibrators, and the lattices are interpolated using multilinear interpolation, all implemented using the TensorFlow Lattice package. We enforce monotonicity constraints on 28 of the 32 features, resulting in an additional 148,800 constraints (each one is a linear inequality constraint on a pair of model parameters) applied during training; see \citet{You:2017}, \citet{Canini:2016}, \citet{Gupta:2016} for more technical details.

\begin{table*}
  \caption{Map Intent Experiment Results}
  \label{tab:trigger}
  \begin{tabular}{lrrrrrr}
 \toprule
Algorithm & Train Err. & Valid Err. & Test Err. & Train Vio. & Valid Vio. &  Test Vio. \\
\midrule
Unconstrained  & 0.3093  & 0.3122 & 0.3104 & 0.0187 & 0.0162 & 0.0319 \\
\midrule
Hinge $m$-stoch. & 0.3130 & 0.3129 & 0.3124 & 0.0182 & 0.0176 & 0.0313\\
Hinge $T$-stoch. & 0.3096 & 0.3136 & 0.3106 & 0.0194 & 0.0197 & 0.0210\\
Hinge Best  & 0.3056 & 0.3131 & 0.3104 & 0.0172 & 0.0194 & 0.0247\\
Hinge Last   & 0.3058 & 0.3130 & 0.3099 & 0.0177 & 0.0189 & 0.0220\\
\midrule
0-1 swap. $m$-stoch.  & 0.2949 & 0.3002 & 0.2997 & -0.0003 & 0.0025 & 0.0176\\
0-1 swap. $T$-stoch.   & 0.3004 & 0.3022 & 0.3024 & 0.0022 & 0.0061 & 0.0204\\
0-1 swap. Best  & 0.2949 & 0.3002 & 0.2997 & -0.0003 & 0.0025 & 0.0176\\
0-1 swap. Last  & 0.2953 & 0.3004 & 0.3002 & 0.0013 & 0.0034 & 0.0192\\
\midrule
0-1 ext. $m$-stoch.  & 0.3069 & 0.3115 & 0.3101 & 0.0094 & 0.0144 & 0.0231\\
0-1 ext. $T$-stoch.   & 0.3101 & 0.3121 & 0.3107 & 0.0132 & 0.0157 & 0.0243\\
0-1 ext. Best  & 0.3069 & 0.3115 & 0.3101 & 0.0094 & 0.0144 & 0.0231\\
0-1 ext. Last  & 0.3071 & 0.3111 & 0.3103 & 0.0096 & 0.0140 & 0.0242\\
  \bottomrule
\end{tabular}
\end{table*}

\subsection{Filtering}
For this Google problem, the task is to classify whether a candidate result for a query should be immediately discarded as too irrelevant to be worth showing to a user. For this problem we take as given a base classifier $h$, and the goal is to maximize accuracy with minimal loss-only churn (see \secref{churn} for details). The baseclassifier $h$ was trained as a regression model to minimize mean squared error with respect to a real-valued label on $[-1,1]$, but then used as a classifier by thresholding the model's estimates at 0.0. Here we use the same training data, but we pre-threshold the real-valued training examples to form binary classification labels, then train the new classifier to minimize the classification error rate. We add ten loss-only churn rate constraints to individually restrict the loss-only churn with respect to the production model for each of ten mutually-exclusive geographic regions to less than $5\%$:

\begin{align*}
\frac{c^+(D_{\textrm{region}}[y=-1, h=-1]) + c^-(D_{\textrm{region}}[y=1, h=1])}{ | D_{\textrm{region}}[h=y] | }  \leq 0.05.
\end{align*}
That is, we ask that no more than five percent of the base classifier's wins are lost for each of the ten regions. The feature vector $x$ includes ten Bool features that indicate if $x$ belongs to one of these ten regions (some examples do not belong to any of the ten regions).

Both the production regression model $h$ and the new classifier $f(x)$ use the same model architecture: both are RTL models that are an ensemble of 50 lattices, where each lattice acts on 6 of 16 continuous-valued features, each feature is calibrated by a monotonic piecewise linear transform that is shared across the lattices, the lattices are interpolated using multilinear interpolation, all model parameters trained jointly using the TensorFlow Lattice package.  We enforce monotonicity constraints on 14 of the 16 features, resulting in an additional 9,740 constraints applied during training (each of these is simply a linear inequality constraint on a pair of model parameters); see \citet{You:2017}, \citet{Canini:2016}, \citet{Gupta:2016} for more technical details. 

The production classifier $h$ had a test error rate of $39.72\%$. As hoped, by training specifically for this classification task, the new classifier $f(x)$ achieves lower test error rates: as low as $27.61\%$ for the unconstrained training. However, the high test constraint violation of $32.27\%$ (measured as the maximum violation over the ten regions) shows that the new unconstrained classifier loses a large number of the wins the base classifier had for  at least one of the ten countries considered.

\begin{table*}
  \caption{Filtering Experiment Results}
  \label{tab:filtering}
  \begin{tabular}{lrrrrrr}
 \toprule
Algorithm & Train Err. & Valid Err. & Test Err. & Train Vio. & Valid Vio. &  Test Vio. \\
\midrule
Unconstrained  & 0.2747  & 0.2723 & 0.2761 & 0.3164 & 0.3107 & 0.3227 \\
\midrule
Hinge $m$-stoch. & 0.3363 & 0.3362 & 0.3369 & 0 & -0.0023 & -0.0012\\
Hinge $T$-stoch. & 0.3658 & 0.3656 & 0.3665 & -0.0297 & -0.0262 & -0.0243\\
Hinge Best  & 0.3404 & 0.3403 & 0.3409 & -0.0075 & -0.0080 & -0.0068\\
Hinge Last   & 0.3622 & 0.3618 & 0.3630 & -0.0239 & -0.0239 & -0.0242\\
\midrule
0-1 swap. $m$-stoch.  & 0.3230 & 0.3231 & 0.3239 & 0 & 0.0071 & 0.0130\\
0-1 swap. $T$-stoch.   & 0.3205 & 0.3208 & 0.3217 & 0.0096 & 0.0192 & 0.0227\\
0-1 swap. Best & 0.3175 & 0.3178 & 0.3186 & 0.0081 & 0.0116 & 0.0156\\
0-1 swap. Last  & 0.3185 & 0.3189 & 0.3195 & 0.0112 & 0.0146 & 0.0118\\
\midrule
0-1 ext. $m$-stoch.  & 0.3231 & 0.3234 & 0.3243 & 0 & 0.0048 & 0.0065\\
0-1 ext. $T$-stoch.   & 0.3300 & 0.3302 & 0.3309 & 0.0004 & 0.0008 & 0.0014\\
0-1 ext. Best  & 0.3180 & 0.3179 & 0.3190 & 0.0079 & 0.0116 & 0.0138\\
0-1 ext. Last  & 0.3268 & 0.3272 & 0.3278 & 0.0021 & 0.0055 & 0.0087\\
  \bottomrule
\end{tabular}
\end{table*}

\section{Discussion of Experimental Results}\label{sec:results}
Now that we have presented the experimental results, we return to discuss the experimental and theoretical evidence for and against the hypotheses and questions posed at the beginning of \secref{experiments}.

\subsection{Do Rate Constraints Help in Practice?}
Yes, overall the experiments show rate constraints are are a useful machine learning tool. Let us consider some more specific questions.

\subsubsection{Can we effectively solve the rate-constrained optimization problem?} \label{sec:LastIsBad}
Yes, but the optimization algorithm does matter. Note here we are asking whether the optimization problem is well-solved, and thus we focus on the \emph{training} error and the \emph{training} violation.

The good news is that compared to unconstrained (top row in result tables) the 0-1 swap regret $m$-stochastic optimization (row 6 in result tables) consistently across all experiments did produce lower training constraint violations while still achieving reasonable training error compared with the unconstrained. (Recall that each $m$-stochastic solves a linear program that sparsifies the corresponding $T$-stochastic such that the constraints are exactly satisfied if the $T$-stochastic solution is feasible, so it is by design that the $m$-stochastic solution train constraint violation is exactly $0.0$ for many of the experiments).  For Adult (see \tabref{adult}), the train error is only $.001$ worse, but the train violation drops from $.0803$ to $.0176$. For Bank Marketing (see \tabref{bank_marketing}), the train error is slightly better for 0-1 swap $m$-stochastic, and the train violation drops from $.0202$ to $0.0$. Similarly for COMPAS (see \tabref{compas}), the 0-1 swap $m$-stochastic has slightly higher training error but drops the train constraint violation from 0.1151 to almost zero. For Business Entity Resolution (\tabref{bizNN}), the training error does increase with 0-1 swap $m$-stochastic, but it is a reasonable price to pay in training accuracy for the huge reduction of the worst case equal-accuracy or min-recall constraint violation from 0.1727 to 0.0.   For the Thresholding problem (\tabref{thresholding}), the 0-1 swap $m$-stochastic is again slightly better on training error and effectively reduces the constraint violation to 0.0, and similarly for the Map Intent experiment (\tabref{trigger}), the training error is lower and the training constraint violation is lower. For Filtering (\tabref{filtering}), the training error for 0-1 swap regret $m$-stochastic did go up significantly from $0.2747$ to $0.3230$, but the unconstrained training violation was horrendous at $0.3164$ whereas the $m$-stochastic found a feasible solution.  In conclusion for all experiments run, we found the 0-1 swap regret $m$-stochastic did a good or reasonable job at the optimization problem of minimizing training error and satisfying the constraints on the training set. 

In contrast, one can see that using the baseline strategy of approximating all indicators with the hinge throughout the optimization can provide poor or even worse results than the unconstrained. For example, on the Map Intent experiment (see \tabref{trigger}), the hinge $T$-stochastic solution manages to have slightly both worse training error and worse training constraint violation than the unconstrained.  The other hinge optimizations are also un-compelling in this experiment. In contrast, the swap regret optimizations consistently find good solutions with lower training error and roughly zero training constraint violations. This is a challenging optimization problem because there are ten rate constraints on ten regions of differing sizes. 

The baseline strategy of simply taking the last iterate often does a good job at solving the constrained problem, but sometimes is worse at optimizing the constrained problem than the unconstrained solver!  For example, on COMPAS (see \tabref{compas}) the Hinge Last training violation is actually bigger than the unconstrained training violation. While Hinge Last does achieve slightly better training error, it hasn't achieve better validation error (or test error), so we don't believe this was simply an unlucky validation of hyperparameter choice.  For more details on why last iterate can perform badly, see \secref{LastOscillates}. 

While theory dictates a stochastic solution is necessary for guarantees, in practice the $T$-stochastic solutions can be quite poor, for example the $T$-stochastic solution on Map Intent (\tabref{tab:trigger}) the Hinge $T$-stochastic solution is worse than unconstrained on both training error and training constraint violation. This may be due to bad early iterates, which would be diluted with a longer run time. Compared to the $T$-stochastic solutions, the $m$-stochastic solutions are always better on training error and never more violating, as designed. 

The best iterate is by definition always at least as good as the last iterate on the training error and/or training violation. For all three optimization strategies (hinge, 0-1 swap regret, 0-1 external regret), it manages to consistently produce solutions that are better than the unconstrained in terms of training violations and have reasonable or good training errors. 

\subsubsection{Can we get good test results by training with rate constraints?}
Yes, mostly. The $m$-stochastic and best iterate solutions do result in lower test violations and reasonable test errors for six of the seven experiments. However, for Adult (\tabref{adult}), the 0-1 swap $m$-stochastic failed to produce lower test violation nor lower test error than the unconstrained, despite having much lower training and validation violations. Sadly, the good training and validation performance simply did not generalize to the test set.  This case is hard in part because the Black constraint in the Adult dataset is based on a relatively small sample:  only 345 positive training examples, 42 positive validation examples, and 179 positive test examples.

Overall, small constraint datasets can lead to poor generalization that can significantly hurt the overall metrics. The worst generalization happened with the Business Entity Resolution, where training violations for the proxy-Lagrangian methods ranged from $[0-.095]$, but the test violations ranged from $[0.075-0.19]$. For that experiment, the hinge solutions generalized better, but at the cost of much higher test errors.  Business Entity is a particularly hard problem because there are 16 constraints on different regions, some of which have very small datasets, and just like  training a model, there is a greater risk of poor generalization if the datasets used in the constraints are small.

For the larger datasets (Map Intent in Table \ref{tab:trigger} and Filtering in Table \ref{tab:filtering}), the classifier performance was much more similar on training and test sets. 

For a further discussion of generalization for rate-constraints, with some theoretical results and practical strategies, see \citet{Cotter:generalization}.

\subsubsection{Do rate constraints interact well with other types of constraints?}\label{sec:shapeConstraints}

We did not see any problems from combining rate constraints with shape constraints. For Map Intent (\tabref{trigger}), both 0-1 optimization strategies worked reasonably, with the 0-1 swap regret producing attractive solutions that both notably lowered training error and satisfied the constraints on the traning set.  This shows that the addition of the 148,800 pairwise monotonicity constraints did not cause a problem in optimizing the rate constrained problem. Similarly, for the Filtering (\tabref{filtering}), the addition of the 9,740 sparse linear inequality constraints to enforce monotonicity did not keep the optimizers from satisfying the rate constraints.

\subsection{Does the Proxy-Lagrangian Better Solve the Constrained Optimization Problem?}
We break this question into a few specific questions.

\subsubsection{Does simply using hinge surrogate for both players overconstrain?} 
We hypothesized that using the hinge loss as a convex relaxation to the 0-1 indicators in the rate constraints would cause the constrained optimization to find overly-constrained solutions at the cost of more training accuracy than needed to satisfy the constraints. This was not as large an effect as we expected. However, it can be seen in the Business Entity Resolution (\tabref{bizNN}) experiment where the hinge training violations are negative and the training errors are relatively high, whereas the 0-1 $m$-stochastic solutions crisply achieve the constraint with much lower training errors.

\subsubsection{Does the proxy-Lagrangian formulation result in better solutions?} 
In most experiments, there were trade-offs between test constraint violation and test accuracy which make it difficult to compare the hinge solutions to the proxy-Lagrangian solutions (denoted 0-1 in the tables) on the test metrics. 

On the training metrics, there is stronger evidence the $0-1$ $m$-stochastic optimization is in fact doing a better job solving the optimization problem than the hinge $m$-stochastic. For 7 of the 7 experiments, the $0-1$ ext. $m$-stochastic produced both lower train error and lower train violation than the Hinge $m$-stochastic solution. This was also true for 5 out of the 7 experiments for the $0-1$ swap $m$-stochastic, and the solutions were close for the remaining 2 experiments.

In the case of Map Intent (\tabref{trigger}), we clearly see that the Hinge solutions perform worse in both accuracy and fairness constraints than the 0-1 proxy-Lagrangian procedures on both training and testing. In the case of Thresholding, we see that the Hinge procedures seem to do slightly worse in final accuracy at the cost of over-constraining. We see that in Business Entity Resolution (\tabref{bizNN}), the Hinge procedures attain significantly higher errors than the other methods but do attain better constraint satisfaction on testing.
Thus, even though proxy-Lagrangian formulation may seem better on a few of the datasets, this effect was not seen consistently across the remaining datasets and thus, the question of whether the proxy-Lagrangian attains better solutions in practice remains inconclusive.

\subsubsection{Is minimizing swap regret necessary, or does external regret suffice?}
Our theoretical results show that in the proxy-Lagrangian setting, the appropriate type of equilibrium (i.e. semi-coarse correlated equilibrium) has optimality and feasibility guarantees for the original constrained optimization problem.
In order to attain such an equilibrium, we needed the $\lambda$-player to minimize swap-regret (while the $\theta$-player minimizes the classic external regret). However, minimizing swap-regret involves a more complicated procedure. We used the strategy of \citet{Gordon:2008}, who showed that any external regret minimizing procedure can be turned into one that minimizes swap regret by a meta-algorithm which runs $m$ copies of the procedure. We questioned whether it would be just as good in practice to use the simpler external-regret minimizing procedure, which still leads to a coarse-correlated equilibrium (which is a {\it weaker} notion than semi-coarse correlated equilibrium). 

Comparing the swap regret to the external regret for the same solution type ($m$-stochastic/$T$-stochastic/best/last), the external regret usually ends up with a solution with slightly lower test violations but slightly higher test error. The only exception was the Map Intent experiment in which  the swap-regret solutions were both considerably more accurate and better at satisfying the constraints. 
In conclusion, we have not seen experimental evidence that the extra complexity of swap regret is warranted in practice, though it may sometimes produce notably worse results (as in Map Intent).

\subsection{Do We Really Need Stochastic Classifiers?}
Next, we investigate some specific questions regarding the necessity of stochastic solutions over a deterministic classifier.

\begin{figure}[t]
\includegraphics[width=0.45\textwidth]{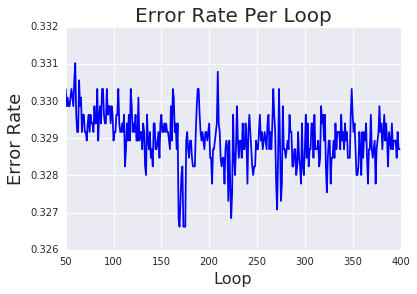}
\includegraphics[width=0.45\textwidth]{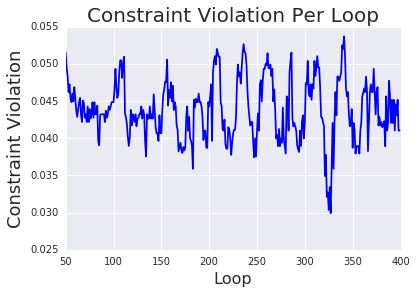}
\centering
\caption{\label{fig:oscillation} The plots for the errors and constraint violations for each iteration during training on the COMPAS dataset. The oscillation due to the conflicting goals of accuracy and constraints suggest that there may be no pure equilibrium to converge to in the non-convex setting.}
\end{figure}

\subsubsection{Do the iterates oscillate in the non-convex setting?} \label{sec:LastOscillates}
As noted in \secref{LastIsBad}, simply taking the last iterate can produce worse constraint violations to the optimization problem then solving the unconstrained problem. \figref{oscillation} plots the error and constraints for each of the iterates on the COMPAS dataset which shows such oscillation. This suggests that, as we showed in \secref{twoplayergame}, the phenomenon of the non-convex Lagrangian having no pure Nash equilibrium to which it can converge, may occur in practice.

\subsubsection{Does $m$-stochastic beat $T$-stochastic?}
Our theoretical results guarantee that the $m$-stochastic solution (which is obtained  through solving a simple LP on the $T$-stochastic iterates) will be no worse than the $T$-stochastic solution by forcing the $m$-stochastic solution to be at least as feasible as the $T$ stochastic solution, while having no worse error (at least on the training set). Our hope is therefore that our ``shrinking'' procedure will find better solutions on \emph{test} data.

We see consistently across datasets as well as optimization techniques that the $m$-stochastic is  indeed better than the $T$-stochastic in  terms of both error and constraint violation on training. Part of this effect may be due to the fact that many of the iterates of the $T$-stochastic perform poorly, for example the early iterates before our procedures are able to get to reasonable solutions. Or during phase-transitions if there is oscillation between satisfying constraints and satisfying error. Fortunately, the shrinking procedure seems to be able to choose a good re-weighting of the $T$-stochastic solution in order to attain well-performing final results. 

We also see that in the vast majority of situations, the test performance for the $m$-stochastic either surpasses that of the $T$-stochastic, or there is an accuracy-fairness trade-off between the two (and hence, not straightforward to compare the two).

\subsubsection{Does the best iterate perform as well as the stochastic classifiers?}

We have already established that a stochastic solution is needed in theory (\secref{twoplayergame}). However, stochastic solutions are unappealing in practice: they take more memory, are harder to test and debug due to their inherent randomness, and a randomized decision may feel less \emph{fair} in certain contexts (even if the outcomes statistically improve the desired fairness metric).   Here, we ask if a stochastic solution is needed \emph{in practice}, based on test metrics. 

First, we compare the 0-1 swap regret $m$-stochastic solution, which is our theoretically preferred stochastic solution, to the 0-1 swap regret best iterate. The 0-1 swap  best iterate is never a strictly worse choice than the 0-1 swap  $m$-stochastic.  In some cases the $m$-stochastic solution puts all or most of its weight on the best iterate---for example, for the Map Intent problem (\tabref{trigger}) the two solutions are identical. In other experiments the solutions differ but both achieve reasonable different trade-offs of test error and test violation, for example on the Thresholding problem (\tabref{thresholding}) and COMPAS (see \tabref{compas}), the best iterate has a lower test error, but a higher test constraint violation. 

Comparing the $m$-stochastic solution and best iterate solution for the hinge optimization and the 0-1 external regret optimization similarly suggests that much of the time the best iterate works just as well in practice.

\subsubsection{Does best iterate perform better in practice than last iterate?}

We have established that using the best iterate works well in practice. Now we discuss how much better best is than simply taking the last iterate. In fact, the last iterate is strictly worse at test metrics than the best iterate for 4 of the 7 experiments: Bank, Thresholding, Adult, and Compass; and the two solutions are similar for the other three experiments. 

If there were truly oscillations seen in practice, then the last iterate could be highly unstable and could produce undesirable solutions.  In practice, the strongest evidence for last being a risky choice is Hinge Last on COMPAS, where test error went up from 0.3109 to 0.3231, and training violation only went down from 0.1082 to 0.0996.

Overall, the experimental results suggest that the best iterate should be preferable to the last iterate.


\section{Conclusions, Advice to Practitioners, and Open Questions} \label{sec:conclusions}
In this paper, we provide the most comprehensive study to-date of training classifiers with a broad array of rate constraints, with new theoretical, algorithmic, and experimental results as well as practical insights and guidance for using rate constraints to solve real-world problems.   Next, we provide some conclusions, specifically draw out our best advice to practitioners, and note some open questions. 

\subsection{Advice to Practitioners: How To Train Classifiers with Rate Constraints}

Based on our experiments, our advice to practitioners is to optimize the rate-constrained training using either our proposed proxy-Lagrangian formulation (0-1 swap regret $m$-stochastic), \emph{or} the easier alternative of optimizing a non-zero-sum variant of the normal Lagrangian formulation (0-1 external regret best iterate). 

The 0-1 external regret best iterate optimization procedure is simple: when optimizing the model parameters $\theta$ use stochastic gradient descent as usual with a hinge relaxation of the indicators in the constraints, and when optimizing the Lagrange multipliers $\lambda$ use stochastic gradient descent, but do not relax the indicators in the rate constraints. One downside though is the \emph{best iterate} requires storing all the candidate iterates on the Pareto Frontier during training, in order to rank them by the training objective and training error at the end, and in the worst case that could be all candidate iterates. However one can control the number of candidate iterates. Simply taking the last iterate may also yield reasonable results, but we saw a number of situations where the last iterate performed strictly worse under all metrics compared to the best iterate. 

We caution against relaxing the indicators for both the $\theta$-player and $\lambda$-player. It is hardly simpler than the 0-1 external regret best iterate optimization, and experimentally generally (but not always) produced worse test results, sometimes notably worse. 

\subsection{More Experimental Conclusions}
The clearest experimental finding is that treating the optimization as a non-zero-sum two-player game where the $\lambda$-player does not relax the indicators in the rate constraints (notated as 0-1 in the experimental tables) does generally help, both in finding a better solution to the optimization problem (i.e. train metrics), and in practice (i.e. test metrics).  Another fairly clear experimental finding is that the $T$-stochastic solution can effectively be sparsified to an $m$-stochastic solution, generally with improved metrics. 

While the $T$-stochastic solution has better theoretical guarantees than any of our deterministic solutions, especially for large $T$, in practice we found the deterministic best iterate generally worked better than the $T$-stochastic solution.  Other comparisons were more cloudy, see Section \ref{sec:results} for details. 

\subsection{Advice to Practitioners: Plan To Overfit the Constraints}

A key issue with using rate constraints is \emph{generalization}: satisfying
the constraints on the training examples does not necessarily mean
that they will be satisfied on new test sets, and the generalization may be worse if the test examples are drawn from a different distribution. In expressing the rate constraints, one should add in some slack to account for generalization issues, especially if the constraints are optimized on small datasets.

\subsection{Open Questions on Generalization}

In some specific applications, the post-training correction approach using a separate \emph{validation} dataset of \citet{Woodworth:2017} can improve generalization performance.
More generally, in \citet{Cotter:generalization}, we extend the ideas of this paper with a focus on generalization. We show that providing different \emph{datasets} to the two players, instead of (or in addition to) different constraint functions, can theoretically and practically improve generalization. 

\subsection{Advice to Practitioners: How the Constraints Are Specified Matters}

Though not explored in this paper's experiments, we have learned that in practice, how one specifies the datasets and slack in a rate constraint are very important -  see Section \ref{sec:bestpractices} for details.

\subsection{Open Questions on Nonlinear Rate Constraints}

We have limited our focus to rate constraints that can be written as in \eqref{rateConstraint}, that is linear non-negative combinations of the positive and negative classification rates on datasets. It remains an open question how well the presented techniques would work for nonlinear rate constraints both theoretically and experimentally, and whether other strategies would be needed. We touched on these issues in Section \ref{sec:goals} in our discussion of win-loss ratio and precision, but did not present experimental results with nonlinear rate constraints.  

\subsection{Some Open Theoretical Questions}

One open question is how tight our optimality and feasibility guarantees are for our procedures in the following aspects:
\begin{itemize}
    \item The dependence on the number of iterations $T$ for our guarantees is $O\left(\sqrt{\frac{1}{T}}\right)$. This rate is an artifact of our usage of regret-minimization procedures, but it could be improved through a number of possible techniques, such as variance reduction~\egcite{Johnson:2013}, or by making stronger assumptions (\eg strong convexity and/or smoothness).
    \item The dependence on $m$, the number of constraints, is  $O(\sqrt{m\log m})$, which also comes from the regret-minimization procedures. This is because the $\lambda$-player essentially chooses a distribution over $m+1$ actions and this dependence on the number of arms is tight in the context of regret-minimization, but the question remains of whether there are situations where this could be improved upon for constrained optimization for either feasibility or optimality.
    \item Our results also have a dependence on the model complexity in both feasibility and optimality guarantees. This may be undesirable in models with a large number of parameters, such as modern neural networks. We explored the question of whether we can improve upon this dependence further in follow-up work of \cite{Cotter:generalization}, which improves the feasibility guarantee. However, further investigation is required to either establish matching lower bounds and/or obtaining tighter results.
\end{itemize}


\bibliographystyle{plainnat}
\bibliography{main}

\newpage
\clearpage
\appendix
\showproofstrue

\section{Proofs of Sub\{optimality,feasibility\} Guarantees}\label{app:suboptimality}

\begin{thm}{lagrangian-suboptimality}
  \textbf{(Lagrangian Sub\{optimality,feasibility\})}
  Define $\Multipliers = \left\{ \multipliers \in \R_+^{\numconstraints} :
  \norm{\multipliers}_p \le \Radius \right\}$, and consider the Lagrangian of
  \eqref{constrained-problem} (\eqref{lagrangian}).
  Suppose that $\parameters \in \Parameters$ and $\multipliers \in
  \Multipliers$ are random variables such that:
  \begin{equation}
    \replabel{eq:thm:lagrangian-suboptimality:epsilon}
    \max_{\multipliers^* \in \Multipliers} \expectation_{\parameters}\left[
    \lagrangian\left( \parameters, \multipliers^* \right) \right] -
    \inf_{\parameters^* \in \Parameters} \expectation_{\multipliers}\left[
    \lagrangian\left( \parameters^*, \multipliers \right) \right] \le \epsilon
  \end{equation}
  \ie $\parameters,\multipliers$ is an $\epsilon$-approximate Nash equilibrium.
  Then $\parameters$ is $\epsilon$-suboptimal:
  \begin{equation*}
    \expectation_{\parameters}\left[ \objective\left(\parameters\right) \right]
    \le \inf_{\parameters^* \in \Parameters : \forall i \in
    \indices{\numconstraints} . \constraint{i}\left(\parameters^*\right) \le 0}
    \objective\left(\parameters^*\right) + \epsilon
  \end{equation*}
  Furthermore, if $\multipliers$ is in the interior of $\Multipliers$, in the
  sense that $\norm{\bar{\multipliers}}_p < \Radius$ where $\bar{\multipliers}
  \defeq \expectation_{\multipliers}\left[ \multipliers \right]$, then
  $\parameters$ is $\epsilon / \left(\Radius -
  \norm{\bar{\multipliers}}_p\right)$-feasible:
  \begin{equation*}
    \norm{\left( \expectation_{\parameters} \left[
    \constraint{:}\left(\parameters\right) \right] \right)_+}_q \le
    \frac{\epsilon}{\Radius - \norm{\bar{\multipliers}}_p}
  \end{equation*}
  where $\constraint{:}\left(\parameters\right)$ is the
  $\numconstraints$-dimensional vector of constraint evaluations, and
  $\left(\cdot\right)_+$ takes the positive part of its argument, so that
  $\norm{\left( \expectation_{\parameters} \left[
  \constraint{:}\left(\parameters\right) \right] \right)_+}_q$ is the $q$-norm
  of the vector of expected constraint violations.
\end{thm}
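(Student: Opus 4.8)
The plan is to exploit the $\epsilon$-approximate Nash condition \eqref{thm:lagrangian-suboptimality:epsilon} by choosing convenient competitors for each player and reading off the two conclusions separately. Throughout, I write $v \defeq \expectation_{\parameters}\left[\constraint{:}\left(\parameters\right)\right]$ for the vector of expected constraint violations, set $\bar{\multipliers} \defeq \expectation_{\multipliers}\left[\multipliers\right]$, and let $q$ be the exponent conjugate to $p$.

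For suboptimality, I would lower-bound the first (max) term by plugging in the competitor $\multipliers^* = 0 \in \Multipliers$, which gives $\max_{\multipliers^*}\expectation_{\parameters}\left[\lagrangian\left(\parameters,\multipliers^*\right)\right] \ge \expectation_{\parameters}\left[\objective\left(\parameters\right)\right]$. For the second (inf) term, I would plug in any $\parameters^*$ feasible for \eqref{constrained-problem}: since $\constraint{i}\left(\parameters^*\right)\le 0$ and $\multipliers \ge 0$, the linearity of $\lagrangian$ in $\multipliers$ yields $\expectation_{\multipliers}\left[\lagrangian\left(\parameters^*,\multipliers\right)\right] = \objective\left(\parameters^*\right) + \inner{\bar{\multipliers}}{\constraint{:}\left(\parameters^*\right)} \le \objective\left(\parameters^*\right)$, so the infimum is no larger. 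Substituting both bounds into \eqref{thm:lagrangian-suboptimality:epsilon} and taking the infimum over feasible $\parameters^*$ produces the claimed $\epsilon$-suboptimality.

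For feasibility, the key is to evaluate both terms more sharply. Because $\lagrangian$ is affine in $\multipliers$, I would first compute $\max_{\multipliers^*\in\Multipliers}\expectation_{\parameters}\left[\lagrangian\left(\parameters,\multipliers^*\right)\right] = \expectation_{\parameters}\left[\objective\left(\parameters\right)\right] + \max_{\multipliers^*\in\Multipliers}\inner{\multipliers^*}{v}$, and since $\Multipliers$ imposes $\multipliers^* \ge 0$ together with $\norm{\multipliers^*}_p \le \Radius$, the dual-norm characterization gives $\max_{\multipliers^*\in\Multipliers}\inner{\multipliers^*}{v} = \Radius\,\norm{\left(v\right)_+}_q$ — only the positive part of $v$ survives the nonnegativity restriction, which is exactly where $\left(v\right)_+$ enters. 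Next I would upper-bound the inf term by passing to an expectation: since an infimum over deterministic points is at most an average, $\inf_{\parameters'}\expectation_{\multipliers}\left[\lagrangian\left(\parameters',\multipliers\right)\right] = \inf_{\parameters'}\lagrangian\left(\parameters',\bar{\multipliers}\right) \le \expectation_{\parameters}\left[\lagrangian\left(\parameters,\bar{\multipliers}\right)\right] = \expectation_{\parameters}\left[\objective\left(\parameters\right)\right] + \inner{\bar{\multipliers}}{v}$.

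Combining these two evaluations inside \eqref{thm:lagrangian-suboptimality:epsilon} and cancelling the common $\expectation_{\parameters}\left[\objective\left(\parameters\right)\right]$ leaves $\Radius\,\norm{\left(v\right)_+}_q \le \inner{\bar{\multipliers}}{v} + \epsilon$. I would finish with H\"older's inequality together with $\bar{\multipliers}\ge 0$: $\inner{\bar{\multipliers}}{v} \le \inner{\bar{\multipliers}}{\left(v\right)_+} \le \norm{\bar{\multipliers}}_p\,\norm{\left(v\right)_+}_q$, so that $\left(\Radius - \norm{\bar{\multipliers}}_p\right)\norm{\left(v\right)_+}_q \le \epsilon$, which rearranges to the stated bound whenever $\norm{\bar{\multipliers}}_p < \Radius$. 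The main obstacle is this feasibility half, specifically the two-sided sharp evaluation of the game value: forcing the nonnegativity of $\Multipliers$ to produce $\left(v\right)_+$ rather than $v$ in the dual norm, and recognizing that bounding the infimum by the $\parameters$-expectation is precisely what introduces $\bar{\multipliers}$ and lets the objective terms cancel. Everything else is routine.
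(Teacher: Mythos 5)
Your proposal is correct and follows essentially the same route as the paper's own proof: for suboptimality you pick the competitors $\multipliers^* = 0$ and a feasible $\parameters^*$, and for feasibility you bound the infimum by the expectation at $\bar{\multipliers}$ so the objective terms cancel, then apply the dual-norm characterization over the nonnegative orthant together with H\"older's inequality. The only difference is presentational: you make explicit two steps the paper leaves terse, namely that ``choosing $\parameters^* = \parameters$'' really means bounding an infimum over deterministic points by an expectation, and that the nonnegativity constraint in $\Multipliers$ is what turns the dual norm of $v$ into that of $\left(v\right)_+$.
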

\begin{prf}{lagrangian-suboptimality}
  First notice that $\lagrangian$ is linear in $\multipliers$, so:
  \begin{equation}
    \label{eq:thm:lagrangian-suboptimality:equilibrium}
    \max_{\multipliers^* \in \Multipliers} \expectation_{\parameters}\left[
    \lagrangian\left( \parameters, \multipliers^* \right) \right] -
    \inf_{\parameters^* \in \Parameters} \lagrangian\left( \parameters^*,
    \bar{\multipliers} \right) \le \epsilon
  \end{equation}

  \begin{titled-paragraph}{Optimality}
    Choose $\parameters^*$ to be the optimal \emph{feasible} solution in
    \eqref{thm:lagrangian-suboptimality:equilibrium}, so that
    $\constraint{i}\left(\parameters^*\right) \le 0$ for all
    $i\in\indices{\numconstraints}$, and also choose $\multipliers^* = 0$, which
    combined with the definition of $\lagrangian$
    (\eqref{lagrangian}) gives that:
    \begin{equation*}
      \expectation_{\parameters}\left[ \objective\left(\parameters\right) \right]
      - \objective\left(\parameters^*\right) \le \epsilon
    \end{equation*}
    which is the optimality claim.
  \end{titled-paragraph}

  \begin{titled-paragraph}{Feasibility}
    Choose $\parameters^* = \parameters$ in
    \eqref{thm:lagrangian-suboptimality:equilibrium}. By the definition of $\lagrangian$
    (\eqref{lagrangian}):
    \begin{equation*}
      \max_{\multipliers^* \in \Multipliers} \sum_{i=1}^{\numconstraints}
      \multipliers_i^* \expectation_{\parameters} \left[
      \constraint{i}\left(\parameters\right) \right] -
      \sum_{i=1}^{\numconstraints} \bar{\multipliers}_i
      \expectation_{\parameters} \left[ \constraint{i}\left(\parameters\right)
      \right] \le \epsilon
    \end{equation*}
    Then by the definition of a dual norm, H\"older's inequality, and the
    assumption that $\norm{\bar{\multipliers}}_p < \Radius$:
    \begin{equation*}
      \Radius \norm{\left( \expectation_{\parameters} \left[
      \constraint{:}\left(\parameters\right) \right] \right)_+}_q - \norm{\bar{\multipliers}}_p
      \norm{ \left( \expectation_{\parameters} \left[
      \constraint{:}\left(\parameters\right) \right] \right)_+ }_q \le \epsilon
    \end{equation*}
    Rearranging terms gives the feasibility claim.
  \end{titled-paragraph}
\end{prf}

\begin{lem}{lagrangian-feasibility}
  In the context of \thmref{lagrangian-suboptimality}, suppose that there
  exists a $\parameters' \in \Parameters$ that satisfies all of the
  constraints, and does so with $q$-norm margin $\margin$, \ie
  $\constraint{i}\left(\parameters'\right) \le 0$ for all $i \in
  \indices{\numconstraints}$ and
  $\norm{\constraint{:}\left(\parameters'\right)}_q \ge \margin$. Then:
  \begin{equation*}
    \norm{ \bar{\multipliers} }_p \le \frac{\epsilon +
    \bound{\objective}}{\gamma}
  \end{equation*}
  where $\bound{\objective} \ge \sup_{\parameters \in \Parameters}
  \objective\left(\parameters\right) - \inf_{\parameters \in \Parameters}
  \objective\left(\parameters\right)$ is a bound on the range of the objective
  function $\objective$.
\end{lem}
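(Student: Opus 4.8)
The plan is to read the claimed bound directly off the $\epsilon$-approximate Nash equilibrium of \thmref{lagrangian-suboptimality}, using the feasible margin point $\parameters'$ as the comparator for the $\parameters$-player and the all-zeros multiplier as the comparator for the $\multipliers$-player. As in the body of that proof, I would first use the linearity of $\lagrangian$ in $\multipliers$ to replace the distribution over $\multipliers$ by its mean $\bar{\multipliers} \defeq \expectation_{\multipliers}\left[\multipliers\right]$. The equilibrium gap then bounds, for every $\parameters^* \in \Parameters$ and every $\multipliers^* \in \Multipliers$,
\[
  \expectation_{\parameters}\left[ \lagrangian\left(\parameters, \multipliers^*\right)\right] - \lagrangian\left(\parameters^*, \bar{\multipliers}\right) \le \epsilon ,
\]
since the first term is dominated by the max over $\multipliers^*$ and the second bounds the inf over $\parameters^*$ from above.

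Second, I would specialize this to the two extreme choices that isolate $\bar{\multipliers}$. Taking $\multipliers^* = 0$ collapses the left-hand term to $\expectation_{\parameters}\left[\objective\left(\parameters\right)\right]$, because $\lagrangian(\parameters,0) = \objective(\parameters)$; taking $\parameters^* = \parameters'$ on the right and expanding $\lagrangian(\parameters',\bar{\multipliers}) = \objective(\parameters') + \inner{\bar{\multipliers}}{\constraint{:}(\parameters')}$, then rearranging, yields
\[
  \inner{\bar{\multipliers}}{-\constraint{:}\left(\parameters'\right)} \le \epsilon + \objective\left(\parameters'\right) - \expectation_{\parameters}\left[\objective\left(\parameters\right)\right] \le \epsilon + \bound{\objective},
\]
where the final inequality is the definition of $\bound{\objective}$ as a bound on the range of $\objective$. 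Both factors in this inner product are coordinatewise nonnegative: $\bar{\multipliers} \ge 0$ because $\multipliers \in \Multipliers \subseteq \R_+^{\numconstraints}$, and $-\constraint{:}(\parameters') \ge 0$ because $\parameters'$ is feasible, so the inner product is itself a nonnegative quantity controlled by $\epsilon + \bound{\objective}$.

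Third — and this is the crux — I would convert this weighted-constraint bound into a bound on $\norm{\bar{\multipliers}}_p$ using the $q$-norm margin hypothesis on $\parameters'$. The margin ensures that $-\constraint{i}(\parameters') \ge \gamma$ for every $i$, so the nonnegative vector $-\constraint{:}(\parameters')$ is bounded below entrywise by $\gamma$; paired with $\bar{\multipliers} \ge 0$ this gives
\[
  \inner{\bar{\multipliers}}{-\constraint{:}\left(\parameters'\right)} \ge \gamma \sum_{i=1}^{\numconstraints} \bar{\multipliers}_i = \gamma \norm{\bar{\multipliers}}_1 \ge \gamma \norm{\bar{\multipliers}}_p ,
\]
the last inequality because $\norm{\cdot}_1 \ge \norm{\cdot}_p$ for all $p \ge 1$. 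Chaining this lower bound against the upper bound from the previous step gives $\gamma \norm{\bar{\multipliers}}_p \le \epsilon + \bound{\objective}$, which is exactly the claim after dividing through by $\gamma$.

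I expect the third step to be the one requiring the most care: it is where the margin condition $\norm{\constraint{:}(\parameters')}_q \ge \gamma$ must be cashed out into an entrywise lower bound strong enough to dominate the dual norm $\norm{\bar{\multipliers}}_p$. The two ingredients that make this work are the sign structure ($\bar{\multipliers} \ge 0$ together with $\constraint{:}(\parameters') \le 0$, which fixes the sign of the inner product) and the monotonicity $\norm{\cdot}_1 \ge \norm{\cdot}_p$, which lets the $1$-norm of the multipliers that arises naturally from the margin control the desired $p$-norm. By contrast, the first two steps are routine rearrangements of the equilibrium inequality already assembled in \thmref{lagrangian-suboptimality}, so no new machinery is needed there.
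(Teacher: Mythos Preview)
Your overall approach mirrors the paper's exactly: plug $\multipliers^* = 0$ and $\parameters^* = \parameters'$ into the equilibrium inequality, rearrange to isolate $\inner{\bar{\multipliers}}{-\constraint{:}(\parameters')}$, bound the objective difference by $\bound{\objective}$, and then use the margin to get $\inner{\bar{\multipliers}}{-\constraint{:}(\parameters')} \ge \gamma \norm{\bar{\multipliers}}_p$.

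There is, however, a genuine gap in your third step. You write that the margin condition $\norm{\constraint{:}(\parameters')}_q \ge \gamma$ ``ensures that $-\constraint{i}(\parameters') \ge \gamma$ for every $i$,'' but this implication is false: a $q$-norm lower bound on a vector does not yield entrywise lower bounds. For instance, with $\numconstraints = 2$, take $\constraint{1}(\parameters') = -\gamma$ and $\constraint{2}(\parameters') = 0$; then $\norm{\constraint{:}(\parameters')}_q = \gamma$ for every $q$, yet choosing $\bar{\multipliers} = (0,1)$ gives $\inner{\bar{\multipliers}}{-\constraint{:}(\parameters')} = 0 < \gamma = \gamma \norm{\bar{\multipliers}}_p$. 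So the inequality you need simply fails under the stated hypothesis.

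The paper's own proof makes the same leap without justification, and indeed the hypothesis as written in the lemma appears too weak for the conclusion. The intended hypothesis is the entrywise one, $\constraint{i}(\parameters') \le -\gamma$ for all $i$, which is exactly how the margin is phrased in the specialized \thmref{lagrangian-suboptimality-and-feasibility}. Under that entrywise condition your chain $\inner{\bar{\multipliers}}{-\constraint{:}(\parameters')} \ge \gamma \norm{\bar{\multipliers}}_1 \ge \gamma \norm{\bar{\multipliers}}_p$ is correct, and the proof goes through verbatim. So your outline is right once you take the entrywise margin as the working hypothesis rather than trying to derive it from the $q$-norm condition.
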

\begin{prf}{lagrangian-feasibility}
  Starting from \eqref{thm:lagrangian-suboptimality:epsilon} (in
  \thmref{lagrangian-suboptimality}), and choosing $\parameters^* =
  \parameters'$ and $\multipliers^* = 0$:
  \begin{align*}
    \epsilon \ge& \expectation_{\parameters}\left[
    \objective\left(\parameters\right) \right] -
    \expectation_{\multipliers}\left[ \objective\left(\parameters'\right) +
    \sum_{i=1}^{\numconstraints} \multipliers_i
    \constraint{i}\left(\parameters'\right) \right] \\
    \epsilon \ge& \expectation_{\parameters}\left[
    \objective\left(\parameters\right) - \inf_{\parameters' \in \Parameters}
    \objective\left(\parameters'\right) \right] - \left(
    \objective\left(\parameters'\right) - \inf_{\parameters' \in \Parameters}
    \objective\left(\parameters'\right) \right) + \gamma \norm{
    \bar{\multipliers} }_p \\
    \epsilon \ge& - \bound{\objective} + \gamma \norm{ \bar{\multipliers} }_p
  \end{align*}
  Solving for $\norm{ \bar{\multipliers} }_p$ yields the claim.
\end{prf}

\begin{thm}{proxy-lagrangian-suboptimality}
  \textbf{(Proxy-Lagrangian Sub\{optimality,feasibility\})}
  Let 
  $$\Matrixmultipliers \defeq \left\{ \matrixmultipliers \in
  \R^{\left(m+1\right)\times\left(m+1\right)} : \forall i \in
  \indices{\numconstraints + 1} . \matrixmultipliers_{:, i} \in
  \Delta^{\numconstraints + 1} \right\}$$ be the set of all left-stochastic
  $\left(\numconstraints + 1\right) \times \left(\numconstraints + 1\right)$
  matrices, and consider the ``proxy-Lagrangians'' of
  \eqref{constrained-problem} (\eqref{proxy-lagrangian}).
  Suppose that $\parameters \in \Parameters$ and $\multipliers \in
  \Multipliers$ are jointly distributed random variables such that:
  \begin{align}
    \replabel{eq:thm:proxy-lagrangian-suboptimality:equilibrium}
    \expectation_{\parameters,\multipliers}\left[
    \lagrangian_{\parameters}\left( \parameters, \multipliers \right) \right] -
    \inf_{\parameters^* \in \Parameters} \expectation_{\multipliers}\left[
    \lagrangian_{\parameters}\left( \parameters^*, \multipliers \right) \right]
    \le& \epsilon_{\parameters} \\
    \notag \max_{\matrixmultipliers^* \in \Matrixmultipliers}
    \expectation_{\parameters,\multipliers}\left[
    \lagrangian_{\multipliers}\left( \parameters, \matrixmultipliers^*
    \multipliers \right) \right] -
    \expectation_{\parameters,\multipliers}\left[
    \lagrangian_{\multipliers}\left( \parameters, \multipliers \right) \right]
    \le& \epsilon_{\multipliers}
  \end{align}
  \TODO{this is cumbersome}
  Define $\bar{\multipliers} \defeq \expectation_{\multipliers}\left[
  \multipliers \right]$, let $\left( \Omega, \mathcal{F}, P \right)$ be the
  probability space, and define a random variable $\bar{\parameters}$ such
  that:
  \begin{equation*}
    \probability\left\{ \bar{\parameters} \in S \right\} = \frac{
    \int_{\parameters^{-1}\left(S\right)} \multipliers_1\left(x\right)
    dP\left(x\right) }{ \int_{\Omega} \multipliers_1\left(x\right)
    dP\left(x\right) }
  \end{equation*}
  In words, $\bar{\parameters}$ is a version of $\parameters$ that has been
  resampled with $\multipliers_1$ being treated as an importance weight. In
  particular $\expectation_{\bar{\parameters}}\left[
  f\left(\bar{\parameters}\right) \right] =
  \expectation_{\parameters,\multipliers}\left[ \multipliers_1
  f\left(\parameters\right) \right] / \bar{\multipliers}_1$ for any
  $f:\Parameters \rightarrow \R$.
  Then $\bar{\parameters}$ is nearly-optimal:
  \begin{equation*}
    \expectation_{\bar{\parameters}}\left[
    \objective\left(\bar{\parameters}\right) \right] \le \inf_{\parameters^*
    \in \Parameters : \forall i \in \indices{\numconstraints} .
    \proxyconstraint{i}\left(\parameters^*\right) \le 0} \objective\left(
    \parameters^* \right) + \frac{\epsilon_{\parameters} +
    \epsilon_{\multipliers}}{\bar{\multipliers}_1}
  \end{equation*}
  and nearly-feasible:
  \begin{equation*}
    \norm{ \left( \expectation_{\bar{\parameters}}\left[
    \constraint{:}\left(\bar{\parameters}\right) \right] \right)_+ }_{\infty}
    \le \frac{\epsilon_{\multipliers}}{\bar{\multipliers}_1}
  \end{equation*}
  %
  %
  Notice the optimality inequality is weaker than it may appear, since the
  comparator in this equation is \emph{not} the optimal solution \wrt the
  constraints $\constraint{i}$, but rather \wrt the \emph{proxy} constraints
  $\proxyconstraint{i}$.
\end{thm}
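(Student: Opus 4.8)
The plan is to establish the feasibility and optimality claims separately. In each case the idea is to probe the $\multipliers$-player's swap-regret guarantee against a single carefully chosen left-stochastic matrix $\matrixmultipliers^* \in \Matrixmultipliers$, extract a scalar inequality, and then translate it into a statement about the importance-weighted variable $\bar{\parameters}$ using the identity $\expectation_{\bar{\parameters}}[f(\bar{\parameters})] = \expectation_{\parameters,\multipliers}[\multipliers_1 f(\parameters)]/\bar{\multipliers}_1$ supplied in the statement. The external-regret guarantee of the $\parameters$-player is only needed for optimality.

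For feasibility, fix a constraint index $k$ and take the matrix $\matrixmultipliers^{(k)}$ that routes all mass from the first (objective) coordinate into coordinate $k+1$ and leaves every other column equal to the identity, so $\matrixmultipliers^{(k)}\multipliers = \multipliers + \multipliers_1 e_{k+1} - \multipliers_1 e_1$. Since $\lagrangian_{\multipliers}$ does not see the first coordinate, this gives $\lagrangian_{\multipliers}(\parameters, \matrixmultipliers^{(k)}\multipliers) = \lagrangian_{\multipliers}(\parameters,\multipliers) + \multipliers_1 \constraint{k}(\parameters)$. Substituting $\matrixmultipliers^{(k)}$ into the swap-regret bound collapses its left-hand side to $\expectation_{\parameters,\multipliers}[\multipliers_1 \constraint{k}(\parameters)] \le \epsilon_{\multipliers}$; dividing by $\bar{\multipliers}_1$ and maximizing over $k$ yields $\norm{(\expectation_{\bar{\parameters}}[\constraint{:}(\bar{\parameters})])_+}_{\infty} \le \epsilon_{\multipliers}/\bar{\multipliers}_1$.

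For optimality, I would first invoke the $\parameters$-player's external-regret guarantee against an arbitrary \emph{proxy-feasible} comparator $\parameters^*$ (with $\proxyconstraint{i}(\parameters^*)\le 0$ for all $i$). Linearity of $\lagrangian_{\parameters}$ in $\multipliers$ gives $\expectation_{\multipliers}[\lagrangian_{\parameters}(\parameters^*,\multipliers)] = \bar{\multipliers}_1 \objective(\parameters^*) + \sum_i \bar{\multipliers}_{i+1}\proxyconstraint{i}(\parameters^*) \le \bar{\multipliers}_1 \objective(\parameters^*)$, so taking the infimum over proxy-feasible $\parameters^*$ yields $\expectation_{\parameters,\multipliers}[\lagrangian_{\parameters}(\parameters,\multipliers)] \le \bar{\multipliers}_1 \inf_{\proxyconstraint{i}\le 0}\objective + \epsilon_{\parameters}$. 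Next I would lower-bound the proxy-penalty term: applying swap regret to the matrix that collapses \emph{all} constraint coordinates onto the first coordinate (so $\matrixmultipliers^*\multipliers = e_1$ and $\lagrangian_{\multipliers}(\parameters,e_1)=0$) gives $\expectation_{\parameters,\multipliers}[\sum_i \multipliers_{i+1}\constraint{i}(\parameters)] \ge -\epsilon_{\multipliers}$, and because $\proxyconstraint{i}\ge \constraint{i}$ with $\multipliers_{i+1}\ge 0$ the same lower bound holds with $\proxyconstraint{i}$ in place of $\constraint{i}$. Writing $\expectation[\multipliers_1 \objective(\parameters)] = \expectation[\lagrangian_{\parameters}(\parameters,\multipliers)] - \expectation[\sum_i \multipliers_{i+1}\proxyconstraint{i}(\parameters)]$ and combining the two displays bounds $\expectation[\multipliers_1\objective(\parameters)]$ by $\bar{\multipliers}_1 \inf_{\proxyconstraint{i}\le 0}\objective + \epsilon_{\parameters} + \epsilon_{\multipliers}$; dividing by $\bar{\multipliers}_1$ and applying the importance-weighting identity gives the optimality claim.

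I expect the main obstacle to be the bookkeeping over which constraint functions enter each inequality: the swap-regret guarantee is phrased through $\lagrangian_{\multipliers}$ in terms of the \emph{original} $\constraint{i}$, whereas the optimality comparator and the penalty term in $\lagrangian_{\parameters}$ involve the \emph{proxy} $\proxyconstraint{i}$. The bridge is one-directional: the upper-bound property $\proxyconstraint{i}\ge\constraint{i}$ turns a lower bound on $\sum_i \multipliers_{i+1}\constraint{i}$ into a lower bound on $\sum_i \multipliers_{i+1}\proxyconstraint{i}$, which is exactly the direction required, and this asymmetry is precisely why feasibility is certified against the true constraints while optimality is certified only against the proxy-feasible set. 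A secondary point needing care is checking that each constructed remapping is genuinely left-stochastic and respects $\multipliers \in \Delta^{\numconstraints+1}$ as claimed.
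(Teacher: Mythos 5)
Your proposal is correct, and its optimality half coincides with the paper's own proof: both use the swap matrix with all-ones first row to obtain $\expectation_{\parameters,\multipliers}\left[\sum_{i=1}^{\numconstraints}\multipliers_{i+1}\proxyconstraint{i}\left(\parameters\right)\right] \ge -\epsilon_{\multipliers}$ (passing from $\constraint{i}$ to $\proxyconstraint{i}$ via the upper-bound property and $\multipliers_{i+1}\ge 0$), combine this with the $\parameters$-player's external-regret condition evaluated at a proxy-feasible comparator, and divide by $\bar{\multipliers}_1$. Where you diverge is the feasibility half. The paper evaluates the swap-regret maximum exactly: writing $\lagrangian_{\multipliers}\left(\parameters,\multipliers\right)=\inner{\multipliers}{\ell_{:}\left(\parameters\right)}$ with $\ell_1 \equiv 0$, it decomposes $\max_{\matrixmultipliers^*\in\Matrixmultipliers}\expectation\left[\lagrangian_{\multipliers}\left(\parameters,\matrixmultipliers^*\multipliers\right)\right]$ column-by-column into $\sum_{i=1}^{\numconstraints+1}\max_{j}\expectation\left[\multipliers_i\ell_j\left(\parameters\right)\right]$, upper-bounds the subtracted term by the same sum restricted to $i\ge 2$, and lets the sums collapse to $\max_j \expectation\left[\multipliers_1\ell_j\left(\parameters\right)\right]\le\epsilon_{\multipliers}$. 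You instead probe the guarantee with one explicit deviation per constraint (first column $e_{k+1}$, all other columns the identity), so the subtracted term cancels exactly and $\expectation\left[\multipliers_1\constraint{k}\left(\parameters\right)\right]\le\epsilon_{\multipliers}$ falls out directly; this isolates precisely the column-$1$ contribution that drives the paper's bound. The two mechanisms certify the same inequality, but yours is more elementary --- it never needs the decomposition of the maximum over left-stochastic matrices --- while the paper's makes explicit that the worst-case swap reroutes the objective coordinate's mass onto the most-violated constraint. The one point worth stating explicitly in a final write-up is that $\epsilon_{\multipliers}\ge 0$ necessarily (take $\matrixmultipliers^*$ to be the identity), which is what lets your per-constraint bounds control the positive-part norm $\norm{\left(\expectation_{\bar{\parameters}}\left[\constraint{:}\left(\bar{\parameters}\right)\right]\right)_+}_{\infty}$.
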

\begin{prf}{proxy-lagrangian-suboptimality}
  \begin{titled-paragraph}{Optimality}
    If we choose $M^*$ to be the matrix with its first row being all-one, and
    all other rows being all-zero, then $\lagrangian_{\multipliers}\left(
    \parameters, \matrixmultipliers^* \multipliers \right) = 0$, which shows
    that the first term in the LHS of the second line of
    \eqref{thm:proxy-lagrangian-suboptimality:equilibrium} is nonnegative.
    Hence, $- \expectation_{\parameters,\multipliers}\left[
    \lagrangian_{\multipliers}\left( \parameters, \multipliers \right) \right]
    \le \epsilon_{\multipliers}$, so by the definition of
    $\lagrangian_{\multipliers}$ (\eqref{proxy-lagrangian}), and the fact
    that $\proxyconstraint{i} \ge \constraint{i}$:
    \begin{equation*}
      \expectation_{\parameters,\multipliers}\left[
      \sum_{i=1}^{\numconstraints} \multipliers_{i+1}
      \proxyconstraint{i}\left(\parameters\right) \right] \ge
      -\epsilon_{\multipliers}
    \end{equation*}
    Notice that $\lagrangian_{\parameters}$ is linear in $\multipliers$, so the
    first line of \eqref{thm:proxy-lagrangian-suboptimality:equilibrium},
    combined with the above result and the definition of
    $\lagrangian_{\parameters}$ (\eqref{proxy-lagrangian}) becomes:
    \begin{equation}
      \label{eq:thm:proxy-lagrangian-suboptimality:both-epsilons}
      \expectation_{\parameters,\multipliers}\left[ \multipliers_1
      \objective\left(\parameters\right) \right] - \inf_{\parameters^* \in
      \Parameters} \left( \bar{\multipliers}_1
      \objective\left(\parameters^*\right) + \sum_{i=1}^{\numconstraints}
      \bar{\multipliers}_{i+1} \proxyconstraint{i}\left(\parameters^*\right)
      \right) \le \epsilon_{\parameters} + \epsilon_{\multipliers}
    \end{equation}
    Choose $\parameters^*$ to be the optimal solution that satisfies the
    \emph{proxy} constraints $\tilde{g}$, so that
    $\proxyconstraint{i}\left(\parameters^*\right) \le 0$ for all
    $i\in\indices{\numconstraints}$. Hence:
    \begin{equation*}
      \expectation_{\parameters,\multipliers}\left[ \multipliers_1
      \objective\left(\parameters\right) \right] - \bar{\multipliers}_1
      \objective\left( \parameters^* \right) \le \epsilon_{\parameters} +
      \epsilon_{\multipliers}
    \end{equation*}
    which is the optimality claim.
  \end{titled-paragraph}

  \begin{titled-paragraph}{Feasibility}
    We'll simplify our notation by defining $\ell_1\left(\parameters\right)
    \defeq 0$ and $\ell_{i+1}\left(\parameters\right) \defeq
    \constraint{i}\left(\parameters\right)$ for
    $i\in\indices{\numconstraints}$, so that
    $\lagrangian_{\multipliers}\left(\parameters, \multipliers\right) =
    \inner{\multipliers}{\ell_{:}\left(\parameters\right)}$.
    Consider the first term in the LHS of the second line of 
    \eqref{thm:proxy-lagrangian-suboptimality:equilibrium}:
    \begin{align*}
      \max_{\matrixmultipliers^* \in \Matrixmultipliers}
      \expectation_{\parameters,\multipliers}\left[
      \lagrangian_{\multipliers}\left( \parameters, \matrixmultipliers^*
      \multipliers \right) \right] =&
      \max_{\matrixmultipliers^* \in \Matrixmultipliers}
      \expectation_{\parameters,\multipliers}\left[
      \inner{\matrixmultipliers^* \multipliers}{\ell_{:}\left(\parameters\right)}
      \right] \\
      =& \max_{\matrixmultipliers^* \in \Matrixmultipliers}
      \expectation_{\parameters,\multipliers}\left[
      \sum_{i=1}^{\numconstraints+1}
      \sum_{j=1}^{\numconstraints+1}
      \matrixmultipliers^*_{j,i} \multipliers_i \ell_j\left(\parameters\right)
      \right] \\
      =&
      \sum_{i=1}^{\numconstraints+1}
      \max_{\matrixmultipliers^*_{:,i} \in \Delta^{\numconstraints+1}}
      \sum_{j=1}^{\numconstraints+1}
      \expectation_{\parameters,\multipliers}\left[
      \matrixmultipliers^*_{j,i} \multipliers_i \ell_j\left(\parameters\right)
      \right] \\
      =&
      \sum_{i=1}^{\numconstraints+1}
      \max_{j \in \indices{\numconstraints+1}}
      \expectation_{\parameters,\multipliers}\left[
      \multipliers_i \ell_j\left(\parameters\right)
      \right] \\
    \end{align*}
    where we used the fact that, since $\matrixmultipliers^*$ is
    left-stochastic, each of its columns is a
    $\left(\numconstraints+1\right)$-dimensional multinoulli distribution.
    For the second term in the LHS of the second line of
    \eqref{thm:proxy-lagrangian-suboptimality:equilibrium}, we can use the fact
    that $\ell_1\left(\parameters\right) = 0$:
    \begin{equation*}
      \expectation_{\parameters,\multipliers}\left[
      \sum_{i=2}^{\numconstraints+1} \multipliers_i
      \ell_i\left(\parameters\right) \right] \le \sum_{i=2}^{\numconstraints+1}
      \max_{j \in \indices{\numconstraints+1}}
      \expectation_{\parameters,\multipliers}\left[ \multipliers_i
      \ell_j\left(\parameters\right) \right]
    \end{equation*}
    Plugging these two results into the second line of
    \eqref{thm:proxy-lagrangian-suboptimality:equilibrium}, the two sums
    collapse, leaving:
    \begin{equation*}
      \max_{i \in \indices{\numconstraints+1}}
      \expectation_{\parameters,\multipliers}\left[ \multipliers_1
      \ell_i\left(\parameters\right) \right] \le \epsilon_{\multipliers}
    \end{equation*}
    By the definition of $\ell_i$, and the fact that $\ell_1 = 0$:
    \begin{equation*}
      \norm{ \left( \expectation_{\parameters,\multipliers}\left[
      \multipliers_1 \constraint{:}\left(\parameters\right) \right] \right)_+
      }_{\infty} \le \epsilon_{\multipliers}
    \end{equation*}
    which is the feasibility claim.
  \end{titled-paragraph}
\end{prf}

\begin{lem}{proxy-lagrangian-feasibility}
  In the context of \thmref{proxy-lagrangian-suboptimality}, suppose that there
  exists a $\parameters' \in \Parameters$ that satisfies all of the
  \emph{proxy} constraints with margin $\margin$, \ie
  $\proxyconstraint{i}\left(\parameters'\right) \le -\margin$ for all $i \in
  \indices{\numconstraints}$. Then:
  \begin{equation*}
    \bar{\multipliers}_1 \ge \frac{\gamma - \epsilon_{\parameters} -
    \epsilon_{\multipliers}}{\gamma + \bound{\objective}}
  \end{equation*}
  where $\bound{\objective} \ge \sup_{\parameters \in \Parameters}
  \objective\left(\parameters\right) - \inf_{\parameters \in \Parameters}
  \objective\left(\parameters\right)$ is a bound on the range of the objective
  function $\objective$.
\end{lem}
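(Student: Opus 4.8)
The plan is to mirror the structure of the proof of \lemref{lagrangian-feasibility}, but to start from the combined inequality \eqref{thm:proxy-lagrangian-suboptimality:both-epsilons} that was established midway through the proof of \thmref{proxy-lagrangian-suboptimality}, namely $\expectation_{\parameters,\multipliers}[\multipliers_1\objective(\parameters)] - \inf_{\parameters^* \in \Parameters}(\bar{\multipliers}_1\objective(\parameters^*) + \sum_{i=1}^{\numconstraints}\bar{\multipliers}_{i+1}\proxyconstraint{i}(\parameters^*)) \le \epsilon_{\parameters}+\epsilon_{\multipliers}$. The point is that this inequality already packages both equilibrium conditions together, so the only remaining work is to instantiate the infimum at the margin-feasible point $\parameters'$ and to extract a lower bound on $\bar{\multipliers}_1$.

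First I would bound the infimum above by its value at $\parameters'$, which turns the combined inequality into $\expectation_{\parameters,\multipliers}[\multipliers_1\objective(\parameters)] - \bar{\multipliers}_1\objective(\parameters') - \sum_{i=1}^{\numconstraints}\bar{\multipliers}_{i+1}\proxyconstraint{i}(\parameters') \le \epsilon_{\parameters}+\epsilon_{\multipliers}$. Next I would invoke the margin hypothesis $\proxyconstraint{i}(\parameters') \le -\margin$ together with the simplex identity $\sum_{i=1}^{\numconstraints}\bar{\multipliers}_{i+1} = 1 - \bar{\multipliers}_1$ (valid because $\multipliers \in \Delta^{\numconstraints+1}$, so the coordinates of $\bar{\multipliers}$ also sum to one) to lower-bound the proxy-constraint term as $-\sum_{i=1}^{\numconstraints}\bar{\multipliers}_{i+1}\proxyconstraint{i}(\parameters') \ge \margin(1-\bar{\multipliers}_1)$.

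The remaining objective difference $\expectation_{\parameters,\multipliers}[\multipliers_1\objective(\parameters)] - \bar{\multipliers}_1\objective(\parameters')$ I would control with the range bound: since $\multipliers_1 \ge 0$ pointwise, $\expectation_{\parameters,\multipliers}[\multipliers_1\objective(\parameters)] \ge \bar{\multipliers}_1 \inf_{\parameters}\objective(\parameters)$, while $\bar{\multipliers}_1\objective(\parameters') \le \bar{\multipliers}_1 \sup_{\parameters}\objective(\parameters)$, so the difference is at least $-\bar{\multipliers}_1\bound{\objective}$. Substituting both bounds yields $\margin - \bar{\multipliers}_1(\margin + \bound{\objective}) \le \epsilon_{\parameters}+\epsilon_{\multipliers}$, and solving for $\bar{\multipliers}_1$ gives the claimed inequality.

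The one point requiring care—rather than a genuine obstacle—is the asymmetric treatment of the objective term: the quantity $\expectation_{\parameters,\multipliers}[\multipliers_1\objective(\parameters)]$ couples the random importance weight $\multipliers_1$ to $\objective(\parameters)$, so it must be bounded \emph{below} using the pointwise nonnegativity of $\multipliers_1$ (not by naively factoring the expectation), whereas the comparator $\bar{\multipliers}_1\objective(\parameters')$ is bounded \emph{above}. Keeping these two directions consistent, so that $\bound{\objective}$ enters with the correct sign and a common factor of $\bar{\multipliers}_1$, is the only delicate bookkeeping; everything else is routine rearrangement.
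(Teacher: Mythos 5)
Your proof is correct and follows essentially the same route as the paper's: both start from \eqref{thm:proxy-lagrangian-suboptimality:both-epsilons}, instantiate the infimum at $\parameters'$, use the margin together with $\sum_{i=1}^{\numconstraints}\bar{\multipliers}_{i+1} = 1-\bar{\multipliers}_1$ to extract the $(1-\bar{\multipliers}_1)\margin$ term, and bound the objective difference by $-\bar{\multipliers}_1\bound{\objective}$ before solving for $\bar{\multipliers}_1$. The only cosmetic difference is that the paper centers both objective terms by subtracting $\inf_{\parameters}\objective(\parameters)$ while you bound the two terms separately via $\inf$ and $\sup$; these are the same estimate.
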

\begin{prf}{proxy-lagrangian-feasibility}
  Starting from \eqref{thm:proxy-lagrangian-suboptimality:both-epsilons} (in
  the proof of \thmref{proxy-lagrangian-suboptimality}), and choosing
  $\parameters^* = \parameters'$:
  \begin{equation*}
    \expectation_{\parameters,\multipliers}\left[ \multipliers_1
    \objective\left(\parameters\right) \right] - \left( \bar{\multipliers}_1
    \objective\left(\parameters'\right) + \sum_{i=1}^{\numconstraints}
    \bar{\multipliers}_{i+1} \proxyconstraint{i}\left(\parameters'\right)
    \right) \le \epsilon_{\parameters} + \epsilon_{\multipliers}
  \end{equation*}
  Since $\proxyconstraint{i}\left(\parameters'\right) \le -\margin$ for all $i
  \in \indices{\numconstraints}$:
  \begin{align*}
    \epsilon_{\parameters} + \epsilon_{\multipliers} \ge &
    \expectation_{\parameters,\multipliers}\left[ \multipliers_1
    \objective\left(\parameters\right) \right] - \bar{\multipliers}_1
    \objective\left(\parameters'\right) + \left(1 - \bar{\multipliers}_1\right)
    \gamma \\
    \ge & \expectation_{\parameters,\multipliers}\left[ \multipliers_1 \left(
    \objective\left(\parameters\right) - \inf_{\parameters' \in \Parameters}
    \objective\left(\parameters'\right) \right) \right] - \bar{\multipliers}_1
    \left( \objective\left(\parameters'\right) - \inf_{\parameters' \in
    \Parameters} \objective\left(\parameters'\right) \right) + \left(1 -
    \bar{\multipliers}_1\right) \gamma \\
    \ge & -\bar{\multipliers}_1 \bound{\objective} + \left( 1 -
    \bar{\multipliers}_1 \right) \gamma
  \end{align*}
  Solving for $\bar{\lambda}_1$ yields the claim.
\end{prf}

\section{Proofs of Existence of Sparse Equilibria}\label{app:sparsity}

\begin{thm}{sparse-equilibrium}
  Consider a two player game, played on the compact Hausdorff spaces
  $\Parameters$ and $\Multipliers \subseteq \R^{\numconstraints}$.
  Imagine that the $\parameters$-player wishes to minimize
  $\lagrangian_{\parameters} : \Parameters \times \Multipliers \rightarrow \R$,
  and the $\multipliers$-player wishes to maximize $\lagrangian_{\multipliers}
  : \Parameters \times \Multipliers \rightarrow \R$, with both of these
  functions being continuous in $\parameters$ and linear in $\multipliers$.
  Then there exists a Nash equilibrium $\parameters$, $\multipliers$:
  \begin{align*}
    \expectation_{\parameters}\left[
    \lagrangian_{\parameters}\left(\parameters, \multipliers\right) \right] =&
    \min_{\parameters^* \in \Parameters}
    \lagrangian_{\parameters}\left(\parameters^*, \multipliers\right) \\
    \expectation_{\parameters}\left[
    \lagrangian_{\multipliers}\left(\parameters, \multipliers\right) \right] =&
    \max_{\multipliers^* \in \Multipliers} \expectation_{\parameters}\left[
    \lagrangian_{\multipliers}\left(\parameters, \multipliers^*\right) \right]
  \end{align*}
  where $\parameters$ is a random variable placing nonzero probability mass on
  \emph{at most} $\numconstraints+1$ elements of $\Parameters$, and
  $\multipliers \in \Multipliers$ is non-random.
\end{thm}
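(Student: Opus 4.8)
The plan is to first establish existence of a Nash equilibrium in which the $\multipliers$-player uses a pure strategy and the $\parameters$-player a mixed strategy, and then to \emph{sparsify} the $\parameters$-player's distribution down to $\numconstraints+1$ atoms via Carath\'eodory's theorem, exploiting linearity in $\multipliers$.

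For existence, I would lift the $\parameters$-player to the space $\mathcal{P}(\Parameters)$ of Borel probability measures, which is convex and, since $\Parameters$ is compact Hausdorff, weak-$*$ compact (Banach--Alaoglu together with the Riesz representation of $C(\Parameters)^*$). The $\multipliers$-player stays on $\Multipliers$, which is convex in all the applications (a simplex, or an $\ell_1$-ball intersected with the nonnegative orthant). I would then define the two best-response correspondences $P \mapsto \argmax_{\multipliers \in \Multipliers} \expectation_{\parameters \sim P}\left[\lagrangian_{\multipliers}(\parameters,\multipliers)\right]$ and $\multipliers \mapsto \left\{ P : \expectation_{P}[\lagrangian_{\parameters}(\parameters,\multipliers)] = \min_{\parameters^*}\lagrangian_{\parameters}(\parameters^*,\multipliers)\right\}$, and verify that each has nonempty, convex values and a closed graph: nonemptiness is compactness, convexity of the $\multipliers$-argmax comes from linearity of $\lagrangian_{\multipliers}$ in $\multipliers$ over the convex $\Multipliers$, and upper hemicontinuity follows from joint continuity of the payoffs (continuity in $\parameters$ plus finite-dimensional affine dependence on $\multipliers$ gives joint continuity). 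Applying the Kakutani--Fan--Glicksberg fixed-point theorem to the product map on $\mathcal{P}(\Parameters)\times\Multipliers$ yields a fixed point $(P^*,\multipliers^*)$, i.e.\ a Nash equilibrium with $\multipliers^*$ non-random. I expect this to be the main obstacle, since it requires handling the infinite-dimensional weak-$*$ topology on $\mathcal{P}(\Parameters)$ and checking upper hemicontinuity of the argmin/argmax maps; the rest is comparatively routine.

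For the sparsity step, write $\lagrangian_{\multipliers}(\parameters,\multipliers)=\inner{\phi(\parameters)}{\multipliers}$, where $\phi:\Parameters\to\R^{\numconstraints}$ is the continuous coefficient vector extracted from linearity in $\multipliers$. Let $M \defeq \argmin_{\parameters}\lagrangian_{\parameters}(\parameters,\multipliers^*)$, which is closed and hence compact; because $P^*$ is a best response to $\multipliers^*$, its integrand equals its minimum in expectation and so $P^*$ is supported on $M$. Crucially, the $\multipliers$-player's best-response condition depends on $P^*$ \emph{only} through the moment $u \defeq \expectation_{P^*}[\phi(\parameters)]\in\R^{\numconstraints}$, again by linearity of $\lagrangian_{\multipliers}$ in $\multipliers$. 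As the mean of a measure supported on $M$, the point $u$ lies in the convex hull of the compact set $\phi(M)$, so Carath\'eodory's theorem in $\R^{\numconstraints}$ gives $u = \sum_{i=1}^{\numconstraints+1} p_i\,\phi(\parameters_i)$ with $\parameters_i\in M$ and $p\in\Delta^{\numconstraints+1}$. Setting $\tilde{P} \defeq \sum_{i=1}^{\numconstraints+1} p_i\,\delta_{\parameters_i}$ produces a distribution on at most $\numconstraints+1$ points that (i) is supported on $M$, so it remains a best response to $\multipliers^*$ (each atom attains the minimum of $\lagrangian_{\parameters}(\cdot,\multipliers^*)$), and (ii) has the same moment $u$ as $P^*$, so it leaves the $\multipliers$-player's objective unchanged and keeps $\multipliers^*$ a best response. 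Hence $(\tilde{P},\multipliers^*)$ is the claimed sparse equilibrium, completing the argument.
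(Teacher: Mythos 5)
Your proposal is correct and takes essentially the same route as the paper's proof: the paper also first obtains an equilibrium with a non-random $\lambda$ (by taking a doubly-mixed equilibrium from Glicksberg's theorem and de-randomizing the $\lambda$-player via linearity, where you instead run a Kakutani--Fan--Glicksberg fixed-point argument on $\mathcal{P}(\Theta)\times\Lambda$ to get a pure $\lambda$ directly), observes that the $\theta$-distribution is supported on the argmin set of $\mathcal{L}_{\theta}(\cdot,\lambda)$, and then performs exactly your Carath\'eodory/moment-matching step, applied to the vectors $\nabla_{\lambda}\mathcal{L}_{\lambda}(\theta',\lambda)$ (your $\phi(\theta')$) over that set, so that the shrunk distribution leaves the $\lambda$-player's linear objective unchanged and keeps both players at best response. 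The one caveat you flag---that your existence argument needs $\Lambda$ convex, which the theorem statement does not assume---applies equally to the paper's proof (the mean of the mixed $\lambda$-strategy must lie in $\Lambda$), so it is not a gap relative to the paper.
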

\begin{prf}{sparse-equilibrium}
  There are some extremely similar (and in some ways more general) results than
  this in the game theory
  literature~\egcite{Bohnenblust:1950,Parthasarathy:1975}, but for our
  particular (Lagrangian and proxy-Lagrangian) setting it's possible to provide
  a fairly straightforward proof.

  To begin with, \citet{Glicksburg:1952} gives that there exists a mixed
  strategy in the form of two random variables $\tilde{\parameters}$ and
  $\tilde{\multipliers}$:
  \begin{align*}
    \expectation_{\tilde{\parameters},\tilde{\multipliers}}\left[
    \lagrangian_{\parameters}\left(\tilde{\parameters},
    \tilde{\multipliers}\right) \right] =& \min_{\parameters^* \in \Parameters}
    \expectation_{\tilde{\multipliers}}\left[
    \lagrangian_{\parameters}\left(\parameters^*, \tilde{\multipliers}\right)
    \right] \\
    \expectation_{\tilde{\parameters},\tilde{\multipliers}}\left[
    \lagrangian_{\multipliers}\left(\tilde{\parameters},
    \tilde{\multipliers}\right) \right] =& \max_{\multipliers^* \in
    \Multipliers} \expectation_{\tilde{\parameters}}\left[
    \lagrangian_{\multipliers}\left(\tilde{\parameters}, \multipliers^*\right)
    \right]
  \end{align*}
  Since both functions are linear in $\tilde{\multipliers}$, we can define
  $\multipliers \defeq
  \expectation_{\tilde{\multipliers}}\left[\tilde{\multipliers}\right]$, and
  these conditions become:
  \begin{align*}
    \expectation_{\tilde{\parameters}}\left[
    \lagrangian_{\parameters}\left(\tilde{\parameters}, \multipliers\right)
    \right] =& \min_{\parameters^* \in \Parameters}
    \lagrangian_{\parameters}\left(\parameters^*, \multipliers\right) \defeq
    \ell_{\min} \\
    \expectation_{\tilde{\parameters}}\left[
    \lagrangian_{\multipliers}\left(\tilde{\parameters}, \multipliers\right)
    \right] =& \max_{\multipliers^* \in \Multipliers}
    \expectation_{\tilde{\parameters}}\left[
    \lagrangian_{\multipliers}\left(\tilde{\parameters}, \multipliers^*\right)
    \right]
  \end{align*}
  Let's focus on the first condition. Let $p_{\epsilon} \defeq
  \probability\left\{ \lagrangian_{\parameters}\left(\tilde{\parameters},
  \multipliers\right) \ge \ell_{\min} + \epsilon \right\}$, and notice that
  $p_{1/n}$ must equal zero for any $n \in \left\{1,2,\dots\right\}$ (otherwise
  we would contradict the above), implying by the countable additivity of
  measures that $\probability\left\{
    \lagrangian_{\parameters}\left(\tilde{\parameters}, \multipliers\right) =
    \ell_{\min} \right\} = 1$.
  We therefore assume henceforth, without loss of generality, that the support
  of $\tilde{\parameters}$ consists entirely of minimizers of
  $\lagrangian_{\parameters}\left(\cdot,\multipliers\right)$. Let $S \subseteq
  \Parameters$ be this support set.

  Define $G \defeq \left\{ \grad_{\tilde{\multipliers}}
  \lagrangian_{\multipliers}\left( \parameters', \multipliers \right) :
  \parameters' \in S \right\}$, and take $\bar{G}$ to be the closure of the
  convex hull of $G$.
  Since $\expectation_{\tilde{\parameters}}\left[ \grad_{\tilde{\multipliers}}
  \lagrangian_{\multipliers}\left(\tilde{\parameters},\multipliers\right)
  \right] \in \bar{G} \subseteq \R^{\numconstraints}$, we can write it as
  a convex combination of at most $\numconstraints+1$ extreme points of
  $\bar{G}$, or equivalently of $\numconstraints+1$ elements of $G$.
  Hence, we can take $\parameters$ to be a discrete random variable that places
  nonzero mass on at most $\numconstraints+1$ elements of $S$, and:
  \begin{equation*}
    \expectation_{\parameters}\left[ \grad_{\tilde{\multipliers}}
    \lagrangian_{\multipliers}\left(\parameters,\multipliers\right) \right] =
    \expectation_{\tilde{\parameters}}\left[ \grad_{\tilde{\multipliers}}
    \lagrangian_{\multipliers}\left(\tilde{\parameters},\multipliers\right)
    \right]
  \end{equation*}
  Linearity in $\lambda$ then implies that $\expectation_{\parameters}\left[
  \lagrangian_{\multipliers}\left(\parameters,\cdot\right) \right]$ and
  $\expectation_{\tilde{\parameters}}\left[
  \lagrangian_{\multipliers}\left(\tilde{\parameters},\cdot\right) \right]$ are
  the \emph{same function} (up to a constant), and therefore have the same
  maximizer(s). Correspondingly, $\parameters$ is supported on $S$, which
  contains only minimizers of
  $\lagrangian_{\parameters}\left(\cdot,\multipliers\right)$ by construction.
\end{prf}

\begin{lem}{sparse-proxy-lagrangian}
  \NOTE{It would be nice to avoid the assumption of continuity of the proxy
  constraint functions}
  If $\Parameters$ is a compact Hausdorff space and the objective, constraint
  and proxy constraint functions
  $\objective,\constraint{1},\dots,\constraint{\numconstraints},\proxyconstraint{1},\dots,\proxyconstraint{\numconstraints}$
  are continuous, then the proxy-Lagrangian game (\eqref{proxy-lagrangian})
  has a mixed Nash equilibrium pair $\left(\parameters,\multipliers\right)$
  where $\parameters$ is a random variable supported on at most
  $\numconstraints+1$ elements of $\Parameters$, and $\multipliers$ is
  non-random.
\end{lem}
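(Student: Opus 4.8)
The plan is to reduce this to \thmref{sparse-equilibrium}, which already supplies a sparse mixed Nash equilibrium for any two-player game on compact Hausdorff strategy spaces whose payoffs are continuous in the first argument and linear in the second. First I would check the hypotheses for the proxy-Lagrangian game: $\Parameters$ is compact Hausdorff by assumption, the multiplier space is $\Multipliers = \Delta^{\numconstraints+1}$, which is compact and convex, and inspecting \eqref{proxy-lagrangian} shows that both $\lagrangian_{\parameters}$ and $\lagrangian_{\multipliers}$ are linear in $\multipliers$ and, using the assumed continuity of $\objective,\constraint{1},\dots,\constraint{\numconstraints},\proxyconstraint{1},\dots,\proxyconstraint{\numconstraints}$, jointly continuous. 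Glicksberg's theorem \citep{Glicksburg:1952} then yields a mixed equilibrium $(\tilde{\parameters}, \tilde{\multipliers})$, and because both payoffs are linear in $\multipliers$ I can replace the random $\tilde{\multipliers}$ by the non-random mean $\multipliers \defeq \expectation_{\tilde{\multipliers}}\left[\tilde{\multipliers}\right] \in \Delta^{\numconstraints+1}$ without changing either player's value, exactly as in the proof of \thmref{sparse-equilibrium}.

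Next I would sparsify the $\parameters$-player's strategy, again tracking the argument of \thmref{sparse-equilibrium}. By the same countable-additivity argument (valid since $\lagrangian_{\parameters}(\cdot,\multipliers)$ is continuous), the support $S$ of $\tilde{\parameters}$ may without loss of generality be taken to consist entirely of minimizers of $\lagrangian_{\parameters}(\cdot,\multipliers)$. The $\multipliers$-player's maximization condition depends on the distribution of $\tilde{\parameters}$ only through $\expectation_{\tilde{\parameters}}\left[\grad_{\multipliers}\lagrangian_{\multipliers}(\tilde{\parameters},\multipliers)\right]$, so it suffices to reproduce this expected gradient by a finitely supported distribution over $S$: keeping the support inside $S$ preserves the $\parameters$-player's best-response condition, and matching the expected gradient preserves the $\multipliers$-player's condition by linearity in $\multipliers$.

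The one place where a naive appeal to \thmref{sparse-equilibrium} is too weak, and which I expect to be the crux, is the support bound: since here $\Multipliers \subseteq \R^{\numconstraints+1}$, the theorem as stated only gives $\numconstraints+2$ points. The sharpening is to write $\ell_1(\parameters) \defeq 0$ and $\ell_{i+1}(\parameters) \defeq \constraint{i}(\parameters)$, so that $\lagrangian_{\multipliers}(\parameters,\multipliers) = \inner{\multipliers}{\ell_{:}(\parameters)}$ and the gradient $\grad_{\multipliers}\lagrangian_{\multipliers}(\parameters,\multipliers) = \ell_{:}(\parameters) = (0, \constraint{1}(\parameters), \dots, \constraint{\numconstraints}(\parameters))$ has vanishing first coordinate, precisely because $\objective$ enters $\lagrangian_{\parameters}$ but \emph{not} $\lagrangian_{\multipliers}$. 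All these gradients therefore lie in the $\numconstraints$-dimensional subspace $\{v \in \R^{\numconstraints+1} : v_1 = 0\}$, and applying Carath\'eodory's theorem \emph{within that subspace} expresses the expected gradient as a convex combination of at most $\numconstraints+1$ of them, yielding support size $\numconstraints+1$ rather than $\numconstraints+2$. Finally I would remark (addressing the flagged concern) that the continuity assumption on the $\proxyconstraint{i}$ is used only to make $\lagrangian_{\parameters}$ continuous in $\parameters$, so that $S$ is well-defined and Glicksberg applies.
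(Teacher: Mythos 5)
Your proof is correct, but it fixes the dimension-counting crux by a different mechanism than the paper. The paper's proof never re-opens the argument of \thmref{sparse-equilibrium}: it eliminates the redundant simplex coordinate, writing $\multipliers_1 = 1 - \norm{\tilde{\multipliers}}_1$ and recasting both payoffs as functions of the last $\numconstraints$ coordinates $\tilde{\multipliers} \in \left\{ \tilde{\multipliers} \in \R_+^{\numconstraints} : \norm{\tilde{\multipliers}}_1 \le 1\right\}$, which are still (affine-)linear in $\tilde{\multipliers}$; since this reparametrized multiplier space sits in $\R^{\numconstraints}$, the theorem applies as a black box and directly yields support size $\numconstraints+1$. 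You instead keep the $\left(\numconstraints+1\right)$-dimensional parametrization and exploit a different degeneracy: because $\objective$ is absent from $\lagrangian_{\multipliers}$, every gradient $\grad_{\multipliers}\lagrangian_{\multipliers}\left(\parameters',\multipliers\right) = \left(0, \constraint{1}\left(\parameters'\right), \dots, \constraint{\numconstraints}\left(\parameters'\right)\right)$ lies in the hyperplane $\left\{v \in \R^{\numconstraints+1} : v_1 = 0\right\}$, so Carath\'eodory applied within that $\numconstraints$-dimensional subspace gives the convex combination of at most $\numconstraints+1$ points. The two observations are logically independent: the paper's works for any pair of payoffs affine in $\multipliers$ over the simplex, even if the second player's payoff involved $\objective$; yours works even if $\Multipliers$ were a full-dimensional subset of $\R^{\numconstraints+1}$, provided the $\multipliers$-player's gradients span at most $\numconstraints$ dimensions. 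The trade-off is that the paper's route is a clean citation of the theorem as stated, whereas yours requires either restating \thmref{sparse-equilibrium} with the ambient dimension replaced by the dimension of the affine hull of the gradient set, or reproducing its proof inline, as you do; in exchange your argument makes explicit \emph{why} the proxy-Lagrangian game is special, namely that the non-zero-sum structure removes $\objective$ from the second player's payoff.
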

\begin{prf}{sparse-proxy-lagrangian}
  Applying \thmref{sparse-equilibrium} directly would result in a support size
  of $\numconstraints+2$, rather than the desired $\numconstraints+1$, since
  $\Multipliers$ is $\left(\numconstraints+1\right)$-dimensional.
  Instead, we define $\tilde{\Multipliers} = \left\{ \tilde{\multipliers} \in
  \R_+^{\numconstraints} : \norm{\tilde{\multipliers}}_1 \le 1 \right\}$ as the
  space containing the last $\numconstraints$ coordinates of $\Multipliers$.
  Then we can rewrite the proxy-Lagrangian functions
  $\tilde{\lagrangian}_{\parameters},\tilde{\lagrangian}_{\multipliers} :
  \Parameters \times \tilde{\Multipliers} \rightarrow \R$ as:
  \begin{align*}
    \tilde{\lagrangian}_{\parameters}\left(\parameters,
    \tilde{\multipliers}\right) =& \left(1 -
    \norm{\tilde{\multipliers}}_1\right) \objective\left(\parameters\right) +
    \sum_{i=1}^{\numconstraints} \tilde{\multipliers}_i
    \proxyconstraint{i}\left(\parameters\right) \\
    \tilde{\lagrangian}_{\multipliers}\left(\parameters,
    \tilde{\multipliers}\right) =& \sum_{i=1}^{\numconstraints}
    \tilde{\multipliers}_i \constraint{i}\left(\parameters\right)
  \end{align*}
  These functions are linear in $\tilde{\multipliers}$, which is a
  $\numconstraints$-dimensional space, so the conditions of
  \thmref{sparse-equilibrium} apply, yielding the claimed result.
\end{prf}

\section{Proofs of Convergence Rates}\label{app:convergence}

\subsection{Non-Stochastic One-Player Convergence Rates}

\begin{thm}{mirror}
  \textbf{(Mirror Descent)}
  Let $f_1,f_2,\ldots : \Parameters \rightarrow \R$ be a sequence of convex
  functions that we wish to minimize on a compact convex set $\Parameters$.
  Suppose that the ``distance generating function'' $\Psi : \Parameters
  \rightarrow \R_+$ is nonnegative and $1$-strongly convex \wrt a norm
  $\norm{\cdot}$ with dual norm $\norm{\cdot}_*$.

  Define the step size $\eta = \sqrt{ \bound{\Psi} / T \bound{\subgrad}^2 }$,
  where $\bound{\Psi} \ge \max_{\parameters \in \Parameters}
  \Psi\left(\parameters\right)$ is a uniform upper bound on $\Psi$, and
  $\bound{\subgrad} \ge \norm{\subgrad f_t \left(\parameters^{(t)}\right)}_*$
  is a uniform upper bound on the norms of the subgradients.
  Suppose that we perform $T$ iterations of the following update, starting from
  $\parameters^{(1)} = \argmin_{\parameters \in \Parameters}
  \Psi\left(\parameters\right)$:
  \begin{align*}
    \tilde{\parameters}^{(t+1)} =& \grad \Psi^*\left( \grad \Psi\left(
    \parameters^{(t)} \right) - \eta \subgrad f_t\left( \parameters^{(t)}
    \right) \right) \\
    \parameters^{(t+1)} =& \argmin_{\parameters \in \Parameters}
    \bregman{\Psi}\left( \parameters \mid \tilde{\parameters}^{(t+1)} \right)
  \end{align*}
  where $\subgrad f_t\left(\parameters\right) \in \partial
  f_t(\parameters^{(t)})$ is a subgradient of $f_t$ at $\parameters$, and
  $\bregman{\Psi}\left(\parameters \mid \parameters'\right) \defeq
  \Psi\left(\parameters\right) - \Psi\left(\parameters'\right) - \inner{\grad
  \Psi\left(\parameters'\right)}{\parameters - \parameters'}$ is the Bregman
  divergence associated with $\Psi$. Then:
  \begin{equation*}
    \frac{1}{T} \sum_{t=1}^T f_t\left( \parameters^{(t)} \right) - \frac{1}{T}
    \sum_{t=1}^T f_t\left( \parameters^* \right) \le 2 \bound{\subgrad} \sqrt{
    \frac{\bound{\Psi}}{T} }
  \end{equation*}
  where $\parameters^* \in \Parameters$ is an arbitrary reference vector.
\end{thm}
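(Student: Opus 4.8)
The plan is to prove this standard mirror-descent regret bound by a telescoping-potential argument and then specialize to the stated step size. First I would use convexity of each $f_t$ to replace the true per-round gap by its linearization: since $f_t\left(\parameters^{(t)}\right) - f_t\left(\parameters^*\right) \le \inner{\subgrad f_t\left(\parameters^{(t)}\right)}{\parameters^{(t)} - \parameters^*}$, it suffices to bound $\sum_{t=1}^T \inner{\subgrad f_t\left(\parameters^{(t)}\right)}{\parameters^{(t)} - \parameters^*}$. Everything downstream is then a statement about the iterates and the geometry induced by $\Psi$, not about the $f_t$ themselves.

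The core is a single per-step inequality. Reading off $\grad \Psi\left(\tilde{\parameters}^{(t+1)}\right) = \grad \Psi\left(\parameters^{(t)}\right) - \eta \subgrad f_t\left(\parameters^{(t)}\right)$ from the mirror step, I would apply the three-point identity for Bregman divergences, $\inner{\grad \Psi(y) - \grad \Psi(z)}{x - y} = \bregman{\Psi}(x \mid z) - \bregman{\Psi}(x \mid y) - \bregman{\Psi}(y \mid z)$, with $y = \parameters^{(t)}$, $z = \tilde{\parameters}^{(t+1)}$, $x = \parameters^*$, to obtain $\eta\inner{\subgrad f_t\left(\parameters^{(t)}\right)}{\parameters^{(t)} - \parameters^*} = \bregman{\Psi}\left(\parameters^* \mid \parameters^{(t)}\right) - \bregman{\Psi}\left(\parameters^* \mid \tilde{\parameters}^{(t+1)}\right) + \bregman{\Psi}\left(\parameters^{(t)} \mid \tilde{\parameters}^{(t+1)}\right)$. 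The generalized Pythagorean inequality for the Bregman projection $\parameters^{(t+1)} = \argmin_{\parameters\in\Parameters}\bregman{\Psi}(\parameters \mid \tilde{\parameters}^{(t+1)})$ gives $\bregman{\Psi}\left(\parameters^* \mid \tilde{\parameters}^{(t+1)}\right) \ge \bregman{\Psi}\left(\parameters^* \mid \parameters^{(t+1)}\right)$, so the first two terms become a clean telescoping difference from $\parameters^{(t)}$ to $\parameters^{(t+1)}$. The residual $\bregman{\Psi}\left(\parameters^{(t)} \mid \tilde{\parameters}^{(t+1)}\right)$ I would control via $1$-strong convexity of $\Psi$: adding $\bregman{\Psi}\left(\tilde{\parameters}^{(t+1)} \mid \parameters^{(t)}\right)$ turns the pair into $\eta\inner{\subgrad f_t\left(\parameters^{(t)}\right)}{\parameters^{(t)} - \tilde{\parameters}^{(t+1)}}$, and combining H\"older's inequality with the quadratic lower bound $\bregman{\Psi}\left(\tilde{\parameters}^{(t+1)} \mid \parameters^{(t)}\right) \ge \tfrac12 \norm{\tilde{\parameters}^{(t+1)} - \parameters^{(t)}}^2$ and maximizing over $\norm{\tilde{\parameters}^{(t+1)} - \parameters^{(t)}}$ yields $\bregman{\Psi}\left(\parameters^{(t)} \mid \tilde{\parameters}^{(t+1)}\right) \le \tfrac{\eta^2}{2}\norm{\subgrad f_t\left(\parameters^{(t)}\right)}_*^2 \le \tfrac{\eta^2}{2}\bound{\subgrad}^2$.

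Summing over $t$ and telescoping then gives $\eta\sum_{t=1}^T \inner{\subgrad f_t\left(\parameters^{(t)}\right)}{\parameters^{(t)} - \parameters^*} \le \bregman{\Psi}\left(\parameters^* \mid \parameters^{(1)}\right) + \tfrac{T\eta^2}{2}\bound{\subgrad}^2$, where I discard the final $-\bregman{\Psi}(\parameters^* \mid \parameters^{(T+1)}) \le 0$ by nonnegativity of the divergence. The initial term I would bound by $\bound{\Psi}$: since $\parameters^{(1)}$ minimizes $\Psi$ over $\Parameters$, first-order optimality gives $\inner{\grad \Psi(\parameters^{(1)})}{\parameters^* - \parameters^{(1)}} \ge 0$, hence $\bregman{\Psi}(\parameters^* \mid \parameters^{(1)}) \le \Psi(\parameters^*) - \Psi(\parameters^{(1)}) \le \Psi(\parameters^*) \le \bound{\Psi}$, using $\Psi \ge 0$. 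Dividing by $\eta T$, re-applying convexity, and substituting $\eta = \sqrt{\bound{\Psi}/T\bound{\subgrad}^2}$ balances the two terms and recovers a bound of $\tfrac32 \bound{\subgrad}\sqrt{\bound{\Psi}/T} \le 2\bound{\subgrad}\sqrt{\bound{\Psi}/T}$, as claimed. I expect the main obstacle to be the residual estimate: correctly marrying $1$-strong convexity with the Bregman projection so that the telescoping is clean and the $\tfrac{\eta^2}{2}\bound{\subgrad}^2$ factor emerges with the right constant. The convexity reduction, the telescoping, and the final step-size optimization are routine once that per-step inequality is established.
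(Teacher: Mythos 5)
Your proof is correct, and the constant you obtain, $\tfrac{3}{2}\bound{\subgrad}\sqrt{\bound{\Psi}/T}$, is in fact slightly sharper than the stated $2\bound{\subgrad}\sqrt{\bound{\Psi}/T}$. The comparison with the paper is brief: the paper does not prove this theorem at all, remarking only that mirror descent goes back to \citet{Nemirovski:1983} and that this particular statement is Lemma~2 of \citet{Srebro:2011}. Your argument is the classical self-contained potential analysis---linearization by convexity, the three-point identity, the generalized Pythagorean inequality for the Bregman projection, $1$-strong convexity to absorb the residual $\bregman{\Psi}\left(\parameters^{(t)} \mid \tilde{\parameters}^{(t+1)}\right)$ at cost $\tfrac{\eta^2}{2}\bound{\subgrad}^2$, telescoping, and bounding $\bregman{\Psi}\left(\parameters^* \mid \parameters^{(1)}\right) \le \bound{\Psi}$ via first-order optimality of $\parameters^{(1)}$---which is precisely the style of proof the cited lemma encapsulates, and every step checks out. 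What the citation buys the paper is brevity and the quiet absorption of two technicalities that your write-up also glosses over, though no more so than the theorem statement itself: (i) the identity $\grad\Psi\left(\tilde{\parameters}^{(t+1)}\right) = \grad\Psi\left(\parameters^{(t)}\right) - \eta\,\subgrad f_t\left(\parameters^{(t)}\right)$ requires $\grad\Psi^* = \left(\grad\Psi\right)^{-1}$, \ie that $\Psi$ be of Legendre type; and (ii) the unprojected point $\tilde{\parameters}^{(t+1)}$ need not lie in $\Parameters$, so invoking $\bregman{\Psi}\left(\parameters^{(t)} \mid \tilde{\parameters}^{(t+1)}\right)$ and strong convexity there presumes $\Psi$ is defined and $1$-strongly convex on a larger set containing the unprojected iterates. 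Since the statement's own update rule already presupposes both (it applies $\grad\Psi^*$ and projects \wrt $\bregman{\Psi}\left(\cdot \mid \tilde{\parameters}^{(t+1)}\right)$), neither is a gap in your proof; your argument simply makes self-contained what the paper delegates to the literature.
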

\begin{prf}{mirror}
  Mirror descent~\citep{Nemirovski:1983,Beck:2003} dates back to 1983, but this
  particular statement is taken from Lemma 2 of \citet{Srebro:2011}.
\end{prf}

\begin{cor}{sgd}
  \textbf{(Gradient Descent)}
  Let $f_1,f_2,\ldots : \Parameters \rightarrow \R$ be a sequence of convex
  functions that we wish to minimize on a compact convex set $\Parameters$.

  Define the step size $\eta = \bound{\Parameters} / \bound{\subgrad} \sqrt{2
  T}$, where $\bound{\Parameters} \ge \max_{\parameters \in \Parameters}
  \norm{\parameters}_2$, and $\bound{\subgrad} \ge \norm{\subgrad f_t
  \left(\parameters^{(t)}\right)}_2$ is a uniform upper bound on the norms of
  the subgradients.
  Suppose that we perform $T$ iterations of the following update, starting from
  $\parameters^{(1)} = \argmin_{\parameters \in \Parameters}
  \norm{\parameters}_2$:
  \begin{equation*}
    \parameters^{(t+1)} = \Pi_{\Parameters}\left( \parameters^{(t)} - \eta
    \subgrad f_t\left( \parameters^{(t)} \right) \right)
  \end{equation*}
  where $\subgrad f_t\left(\parameters\right) \in \partial
  f_t(\parameters^{(t)})$ is a subgradient of $f_t$ at $\parameters$, and
  $\Pi_{\Parameters}$ projects its argument onto $\Parameters$ \wrt the
  Euclidean norm. Then:
  \begin{equation*}
    \frac{1}{T} \sum_{t=1}^T f_t\left( \parameters^{(t)} \right) - \frac{1}{T}
    \sum_{t=1}^T f_t\left( \parameters^* \right) \le \bound{\Parameters}
    \bound{\subgrad} \sqrt{\frac{2}{T}}
  \end{equation*}
  where $\parameters^* \in \Parameters$ is an arbitrary reference vector.
\end{cor}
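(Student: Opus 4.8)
The plan is to obtain \corref{sgd} as the Euclidean specialization of \thmref{mirror}. I would instantiate the mirror-descent framework with the distance-generating function $\Psi\left(\parameters\right) = \frac{1}{2}\norm{\parameters}_2^2$ and the norm $\norm{\cdot} = \norm{\cdot}_2$, and then check that every ingredient of \thmref{mirror} reduces to the corresponding ingredient of the corollary. Since this is purely a specialization, there is no genuine conceptual obstacle; the only thing to watch is the careful propagation of the factors $\frac{1}{2}$ and $\sqrt{2}$ that appear once $\Psi$ is fixed.

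First I would verify the hypotheses. The function $\Psi\left(\parameters\right) = \frac{1}{2}\norm{\parameters}_2^2$ is nonnegative and $1$-strongly convex with respect to the Euclidean norm, and since the Euclidean norm is self-dual we have $\norm{\cdot}_* = \norm{\cdot}_2$; hence the subgradient bound $\bound{\subgrad} \ge \norm{\subgrad f_t\left(\parameters^{(t)}\right)}_*$ assumed in \thmref{mirror} is exactly the bound $\bound{\subgrad} \ge \norm{\subgrad f_t\left(\parameters^{(t)}\right)}_2$ assumed in \corref{sgd}. Moreover $\bound{\Psi} \ge \max_{\parameters \in \Parameters} \frac{1}{2}\norm{\parameters}_2^2 = \frac{1}{2}\bound{\Parameters}^2$, so I may take $\bound{\Psi} = \frac{1}{2}\bound{\Parameters}^2$.

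Next I would confirm that the mirror-descent iteration collapses to projected gradient descent. With $\Psi = \frac{1}{2}\norm{\cdot}_2^2$ we have $\grad\Psi\left(\parameters\right) = \parameters$ and $\grad\Psi^*$ equal to the identity, so the unconstrained step $\tilde{\parameters}^{(t+1)} = \grad\Psi^*\left(\grad\Psi\left(\parameters^{(t)}\right) - \eta\subgrad f_t\left(\parameters^{(t)}\right)\right)$ becomes $\tilde{\parameters}^{(t+1)} = \parameters^{(t)} - \eta\subgrad f_t\left(\parameters^{(t)}\right)$. Likewise the associated Bregman divergence is $\bregman{\Psi}\left(\parameters \mid \parameters'\right) = \frac{1}{2}\norm{\parameters - \parameters'}_2^2$, so minimizing it over $\Parameters$ is precisely Euclidean projection, giving $\parameters^{(t+1)} = \Pi_{\Parameters}\left(\parameters^{(t)} - \eta\subgrad f_t\left(\parameters^{(t)}\right)\right)$. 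The initialization $\parameters^{(1)} = \argmin_{\parameters}\Psi\left(\parameters\right)$ coincides with $\argmin_{\parameters}\norm{\parameters}_2$, matching \corref{sgd}.

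Finally I would match the constants. Substituting $\bound{\Psi} = \frac{1}{2}\bound{\Parameters}^2$ into the mirror step size $\eta = \sqrt{\bound{\Psi}/T\bound{\subgrad}^2}$ yields $\eta = \bound{\Parameters}/\bound{\subgrad}\sqrt{2T}$, exactly the step size in the corollary; and the same substitution turns the mirror guarantee $2\bound{\subgrad}\sqrt{\bound{\Psi}/T}$ into $2\bound{\subgrad}\sqrt{\bound{\Parameters}^2/\left(2T\right)} = \bound{\Parameters}\bound{\subgrad}\sqrt{2/T}$, which is the claimed bound. The hard part, such as it is, is simply ensuring these bookkeeping steps are carried out consistently so that both the step size and the final rate land on exactly the stated expressions.
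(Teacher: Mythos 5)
Your proposal is correct and is exactly the paper's own proof: the paper proves \corref{sgd} by taking $\Psi\left(\parameters\right) = \norm{\parameters}_2^2/2$ in \thmref{mirror}, which is precisely your specialization. Your constant bookkeeping ($\bound{\Psi} = \tfrac{1}{2}\bound{\Parameters}^2$ giving both the stated step size and the bound $\bound{\Parameters}\bound{\subgrad}\sqrt{2/T}$) checks out.
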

\begin{prf}{sgd}
  Follows from taking $\Psi\left(\parameters\right) = \norm{\parameters}_2^2 /
  2$ in \thmref{mirror}.
\end{prf}

\begin{cor}{matrix-multiplicative}
  Let $\Matrixmultipliers \defeq \left\{ \matrixmultipliers \in
  \R^{\matrixmultipliersize \times \matrixmultipliersize} : \forall i \in
  \indices{\matrixmultipliersize} . \matrixmultipliers_{:, i} \in
  \Delta^{\matrixmultipliersize} \right\}$ be the set of all left-stochastic
  $\matrixmultipliersize \times \matrixmultipliersize$ matrices, and let
  $f_1,f_2,\ldots : \Matrixmultipliers \rightarrow \R$ be a sequence of concave
  functions that we wish to maximize.

  Define the step size $\eta = \sqrt{ \matrixmultipliersize \ln
  \matrixmultipliersize / T \bound{\supgrad}^2 }$, where
  $\bound{\supgrad} \ge \norm{\supgrad f_t\left( \matrixmultipliers^{(t)} \right)}_{\infty, 2}$ is a
  uniform upper bound on the norms of the supergradients, and
  $\norm{\cdot}_{\infty, 2} \defeq \sqrt{ \sum_{i=1}^{\matrixmultipliersize}
  \norm{\matrixmultipliers_{:,i}}_{\infty}^2 }$ is the $L_{\infty,2}$ matrix
  norm.
  Suppose that we perform $T$ iterations of the following update
  starting from the matrix $\matrixmultipliers^{(1)}$ with all elements equal
  to $1/\matrixmultipliersize$:
  \begin{align*}
    \tilde{\matrixmultipliers}^{(t+1)} =& \matrixmultipliers^{(t)}
    \elementwiseproduct \elementwiseexp\left( \eta \supgrad f_t\left(
    \matrixmultipliers^{(t)} \right) \right) \\
    \matrixmultipliers_{:,i}^{(t+1)} =&
    \tilde{\matrixmultipliers}_{:,i}^{(t+1)} /
    \norm{\tilde{\matrixmultipliers}_{:,i}^{(t+1)}}_1
  \end{align*}
  where $-\supgrad f_t\left(\matrixmultipliers^{(t)}\right) \in \partial
  \left(-f_t(\matrixmultipliers^{(t)})\right)$, \ie $\supgrad
  f_t\left(\matrixmultipliers^{(t)}\right)$ is a supergradient of $f_t$ at
  $\matrixmultipliers^{(t)}$, and the multiplication and exponentiation in the
  first step are performed element-wise. Then:
  \begin{equation*}
    \frac{1}{T} \sum_{t=1}^T f_t\left( \matrixmultipliers^* \right) -
    \frac{1}{T} \sum_{t=1}^T f_t\left( \matrixmultipliers^{(t)} \right) \le 2
    \bound{\supgrad} \sqrt{ \frac{ \matrixmultipliersize \ln
    \matrixmultipliersize }{T} }
  \end{equation*}
  where $\matrixmultipliers^* \in \Matrixmultipliers$ is an arbitrary reference
  matrix.
\end{cor}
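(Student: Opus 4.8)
The plan is to obtain this as a direct specialization of the Mirror Descent theorem (\thmref{mirror}), in exactly the way that \corref{sgd} specializes it to the Euclidean setting, but here using an \emph{entropic} distance-generating function adapted to the product structure of $\Matrixmultipliers$.

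First I would observe that the set of left-stochastic matrices $\Matrixmultipliers$ is nothing but the Cartesian product of $\matrixmultipliersize$ copies of the simplex $\Delta^{\matrixmultipliersize}$, one factor per column. This product structure is precisely what makes the column-wise update in the statement natural. Accordingly, I choose the distance-generating function to be the shifted negative entropy summed over columns,
\begin{equation*}
\Psi\left(\matrixmultipliers\right) = \sum_{i=1}^{\matrixmultipliersize} \left( \ln \matrixmultipliersize + \sum_{j=1}^{\matrixmultipliersize} \matrixmultipliers_{j,i} \ln \matrixmultipliers_{j,i} \right),
\end{equation*}
and verify the hypotheses of \thmref{mirror}. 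Each column's shifted entropy lies in $\left[0, \ln\matrixmultipliersize\right]$, so $\Psi$ is nonnegative with $\bound{\Psi} = \matrixmultipliersize \ln \matrixmultipliersize$. For strong convexity, the single-simplex negative entropy is $1$-strongly convex \wrt $\norm{\cdot}_1$ (Pinsker's inequality); since $\Psi$ is a separable sum over the column blocks, it is $1$-strongly convex \wrt the norm $\norm{\matrixmultipliers} = \sqrt{\sum_{i} \norm{\matrixmultipliers_{:,i}}_1^2}$, whose dual is exactly the $\norm{\cdot}_{\infty,2}$ norm appearing in the bound on $\bound{\supgrad}$.

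Next I would check that the abstract mirror step of \thmref{mirror} with this $\Psi$ collapses to the stated multiplicative-weights update. Working with the unnormalized entropy, $\grad \Psi\left(\matrixmultipliers\right)_{j,i} = \ln \matrixmultipliers_{j,i}$ (the shift does not affect the gradient) and $\grad \Psi^*$ is the coordinate-wise exponential, so the dual step produces $\matrixmultipliers^{(t)} \elementwiseproduct \elementwiseexp\left(\eta \supgrad f_t\left(\matrixmultipliers^{(t)}\right)\right)$. The sign is a $+$, rather than the $-$ in \thmref{mirror}, because we are maximizing the concave $f_t$, i.e. minimizing $-f_t$, whose subgradient is $-\supgrad f_t$. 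The subsequent Bregman projection $\argmin_{\matrixmultipliers \in \Matrixmultipliers} \bregman{\Psi}\left(\matrixmultipliers \mid \tilde{\matrixmultipliers}^{(t+1)}\right)$ is, under the entropic (KL) divergence, the familiar coordinate renormalization, and because $\Psi$ and the feasible set both decompose across columns, it acts column-by-column as $\matrixmultipliers_{:,i}^{(t+1)} = \tilde{\matrixmultipliers}_{:,i}^{(t+1)} / \norm{\tilde{\matrixmultipliers}_{:,i}^{(t+1)}}_1$. This matches the update in the statement, and the initializer $\argmin_{\matrixmultipliers} \Psi$ is the uniform matrix, as required.

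Finally I would substitute $\bound{\Psi} = \matrixmultipliersize \ln \matrixmultipliersize$ into the step size $\eta = \sqrt{\bound{\Psi} / T \bound{\supgrad}^2}$ and into the regret bound $2 \bound{\supgrad} \sqrt{\bound{\Psi}/T}$ given by \thmref{mirror}, yielding precisely the claimed step size and the bound $2 \bound{\supgrad} \sqrt{\matrixmultipliersize \ln \matrixmultipliersize / T}$; the regret direction $\frac{1}{T}\sum_t f_t\left(\matrixmultipliers^*\right) - \frac{1}{T}\sum_t f_t\left(\matrixmultipliers^{(t)}\right)$ is simply the maximization form of the minimization guarantee. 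The main obstacle is the bookkeeping in this third step: confirming that the entropic mirror step together with the KL-projection separates across columns and reduces exactly to the normalized multiplicative update, while simultaneously tracking the strong-convexity constant and its dual norm over the product domain so that $\bound{\Psi}$ and $\norm{\cdot}_{\infty,2}$ emerge with the stated constants. This is the standard exponentiated-gradient derivation applied separably to each column, so it is routine rather than deep, but it must be carried out with care to match the statement.
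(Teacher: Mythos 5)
Your proposal is correct and follows essentially the same route as the paper's proof: both instantiate \thmref{mirror} with the shifted matrix negative-entropy $\Psi$ (your column-wise sum is identically the paper's $\matrixmultipliersize \ln \matrixmultipliersize$ plus element-wise entropy), establish $1$-strong convexity \wrt the $L_{1,2}$ norm via Pinsker's inequality applied per column (the paper does this through an explicit Bregman-divergence computation, you through block-separability, but these are the same calculation), pair it with the dual $L_{\infty,2}$ norm, and plug in $\bound{\Psi} = \matrixmultipliersize \ln \matrixmultipliersize$. The only nitpick is that $\grad\Psi\left(\matrixmultipliers\right)_{j,i} = \ln \matrixmultipliers_{j,i} + 1$ rather than $\ln \matrixmultipliers_{j,i}$, which is immaterial since the constant is absorbed by the column normalization.
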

\begin{prf}{matrix-multiplicative}
  Define $\Psi:\Matrixmultipliers \rightarrow \R \defeq \matrixmultipliersize
  \ln \matrixmultipliersize + \sum_{i,j \in \indices{\matrixmultipliersize}}
  \matrixmultipliers_{i,j} \ln \matrixmultipliers_{i,j}$ as
  $\matrixmultipliersize \ln \matrixmultipliersize$ plus the negative Shannon
  entropy, applied to its (matrix) argument element-wise
  ($\matrixmultipliersize \ln \matrixmultipliersize$ is added to make $\Psi$
  nonnegative on $\Matrixmultipliers$). As in the vector setting, the resulting
  mirror descent update will be (element-wise) multiplicative.

  The Bregman divergence satisfies:
  \begin{align}
    \notag \bregman{\Psi}\left(\matrixmultipliers \vert
    \matrixmultipliers'\right) =& \Psi\left(\matrixmultipliers\right) -
    \Psi\left(\matrixmultipliers'\right) - \inner{\nabla
    \Psi\left(\matrixmultipliers'\right)}{\matrixmultipliers -
    \matrixmultipliers'} \\
    \label{eq:cor:matrix-multiplicative:bregman} =&
    \norm{\matrixmultipliers'}_{1,1} - \norm{\matrixmultipliers}_{1,1} +
    \sum_{i=1}^{\matrixmultipliersize} D_{KL}\left( \matrixmultipliers_{:,i}
    \Vert \matrixmultipliers_{:,i}' \right)
  \end{align}
  where $\norm{\matrixmultipliers}_{1,1} = \sum_{i=1}^{\matrixmultipliersize}
  \norm{\matrixmultipliers_{:,i}}_1$ is the $L_{1,1}$ matrix norm.
  This incidentally shows that one projects onto $\Matrixmultipliers$ \wrt
  $\bregman{\Psi}$ by projecting each column \wrt the KL divergence, \ie by
  normalizing the columns.

  By Pinsker's inequality (applied to each column of an $\matrixmultipliers \in
  \Matrixmultipliers$):
  \begin{equation*}
    \norm{\matrixmultipliers - \matrixmultipliers'}_{1,2}^2 \le 2
    \sum_{i=1}^{\matrixmultipliersize} D_{KL}\left( \matrixmultipliers_{:,i}
    \Vert \matrixmultipliers_{:,i}' \right)
  \end{equation*}
  where $\norm{\matrixmultipliers}_{1,2} =
  \sqrt{\sum_{i=1}^{\matrixmultipliersize}
  \norm{\matrixmultipliers_{:,i}}_1^2}$ is the $L_{1,2}$ matrix norm.
  Substituting this into \eqref{cor:matrix-multiplicative:bregman}, and using
  the fact that $\norm{\matrixmultipliers}_{1,1} = \matrixmultipliersize$ for
  all $\matrixmultipliers \in \Matrixmultipliers$, we have that for all
  $\matrixmultipliers,\matrixmultipliers' \in \Matrixmultipliers$:
  \begin{equation*}
    \bregman{\Psi}\left(\matrixmultipliers \vert \matrixmultipliers'\right) \ge
    \frac{1}{2} \norm{\matrixmultipliers - \matrixmultipliers'}_{1,2}^2
  \end{equation*}
  which shows that $\Psi$ is $1$-strongly convex \wrt the $L_{1,2}$ matrix
  norm.
  The dual norm of the $L_{1,2}$ matrix norm is the $L_{\infty,2}$ norm
  \TODO{check this}, which is the last piece needed to apply
  \thmref{mirror}, yielding the claimed result.
\end{prf}

\begin{lem}{internal-regret}
  Let $\Multipliers \defeq \Delta^{\matrixmultipliersize}$ be the
  $\matrixmultipliersize$-dimensional simplex, define $$\Matrixmultipliers
  \defeq \left\{ \matrixmultipliers \in \R^{\matrixmultipliersize \times
  \matrixmultipliersize} : \forall i \in \indices{\matrixmultipliersize} .
  \matrixmultipliers_{:, i} \in \Delta^{\matrixmultipliersize} \right\}$$ as the
  set of all left-stochastic $\matrixmultipliersize \times
  \matrixmultipliersize$ matrices, and take $f_1,f_2,\ldots : \Multipliers
  \rightarrow \R$ to be a sequence of concave functions that we wish to
  maximize.

  Define the step size $\eta = \sqrt{ \matrixmultipliersize \ln
  \matrixmultipliersize / T \bound{\supgrad}^2 }$, where
  $\bound{\supgrad} \ge \norm{\supgrad f_t\left(\multipliers^{(t)}\right) }_{\infty}$ is a
  uniform upper bound on the $\infty$-norms of the supergradients.
  Suppose that we perform $T$ iterations of the following update, starting from
  the matrix $\matrixmultipliers^{(1)}$ with all elements equal to
  $1/\matrixmultipliersize$:
  \begin{align*}
    \multipliers^{(t)} =& \fix \matrixmultipliers^{(t)} \\
    \deltamatrix^{(t)} =& \left( \supgrad f_t\left(\multipliers^{(t)}\right)
    \right) \left( \multipliers^{(t)} \right) ^T \\
    \tilde{\matrixmultipliers}^{(t+1)} =& \matrixmultipliers^{(t)}
    \elementwiseproduct \elementwiseexp\left( \eta \deltamatrix^{(t)} \right)
    \\
    \matrixmultipliers_{:,i}^{(t+1)} =&
    \tilde{\matrixmultipliers}_{:,i}^{(t+1)} /
    \norm{\tilde{\matrixmultipliers}_{:,i}^{(t+1)}}_1
  \end{align*}
  where $\fix \matrixmultipliers$ is a stationary distribution of
  $\matrixmultipliers$ (\ie a $\multipliers \in \Multipliers$ such that
  $\matrixmultipliers \multipliers = \multipliers$---such always exists, since
  $\matrixmultipliers$ is left-stochastic), $-\supgrad
  f_t\left(\multipliers^{(t)}\right) \in \partial
  \left(-f_t(\multipliers^{(t)})\right)$, \ie $\supgrad
  f_t\left(\multipliers^{(t)}\right)$ is a supergradient of $f_t$ at
  $\multipliers^{(t)}$, and the multiplication and exponentiation of the third
  step are performed element-wise. Then:
  \begin{equation*}
    \frac{1}{T} \sum_{t=1}^T f_t\left( \matrixmultipliers^* \multipliers^{(t)}
    \right) - \frac{1}{T} \sum_{t=1}^T f_t\left( \multipliers^{(t)} \right) \le
    2 \bound{\supgrad} \sqrt{ \frac{ \matrixmultipliersize \ln
    \matrixmultipliersize }{T} }
  \end{equation*}
  where $\matrixmultipliers^* \in \Matrixmultipliers$ is an arbitrary
  left-stochastic reference matrix.
\end{lem}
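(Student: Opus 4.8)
The plan is to carry out the reduction from swap regret to external regret in the style of \citet{Gordon:2008}, recognizing the stated update as the matrix-multiplicative-weights algorithm of \corref{matrix-multiplicative} run on a linearized sequence of losses. First I would use concavity of each $f_t$ to pass to supergradients: since $\supgrad f_t(\multipliers^{(t)})$ is a supergradient, for any $\matrixmultipliers^* \in \Matrixmultipliers$,
$$f_t(\matrixmultipliers^* \multipliers^{(t)}) - f_t(\multipliers^{(t)}) \le \inner{\supgrad f_t(\multipliers^{(t)})}{\matrixmultipliers^* \multipliers^{(t)} - \multipliers^{(t)}},$$
so it suffices to bound the right-hand side summed over $t$.

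The crucial step is to rewrite both inner products as matrix inner products against the rank-one matrix $\deltamatrix^{(t)} = \left(\supgrad f_t(\multipliers^{(t)})\right)(\multipliers^{(t)})^T$. Using the identity $\inner{g}{\matrixmultipliers p} = \inner{g p^T}{\matrixmultipliers}$, the comparator term becomes $\inner{\deltamatrix^{(t)}}{\matrixmultipliers^*}$. For the played term I would invoke the defining property of $\multipliers^{(t)} = \fix \matrixmultipliers^{(t)}$, namely $\matrixmultipliers^{(t)} \multipliers^{(t)} = \multipliers^{(t)}$, to write $\inner{\supgrad f_t(\multipliers^{(t)})}{\multipliers^{(t)}} = \inner{\supgrad f_t(\multipliers^{(t)})}{\matrixmultipliers^{(t)} \multipliers^{(t)}} = \inner{\deltamatrix^{(t)}}{\matrixmultipliers^{(t)}}$. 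Hence the summed supergradient bound equals $\sum_{t=1}^T \left( \inner{\deltamatrix^{(t)}}{\matrixmultipliers^*} - \inner{\deltamatrix^{(t)}}{\matrixmultipliers^{(t)}} \right)$, which is exactly the external regret of the matrix iterates $\matrixmultipliers^{(t)}$ against the linear functions $\matrixmultipliers \mapsto \inner{\deltamatrix^{(t)}}{\matrixmultipliers}$ over $\Matrixmultipliers$.

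Next I would observe that the update in the lemma is precisely the mirror-descent update of \corref{matrix-multiplicative} applied to these linear functions: their supergradient is the constant matrix $\deltamatrix^{(t)}$, and the element-wise update $\tilde{\matrixmultipliers}^{(t+1)} = \matrixmultipliers^{(t)} \elementwiseproduct \elementwiseexp( \eta \deltamatrix^{(t)} )$ followed by column normalization matches the corollary verbatim (with matching step sizes). Applying \corref{matrix-multiplicative} then bounds this external regret by $2 \bound{\supgrad} \sqrt{ \matrixmultipliersize \ln \matrixmultipliersize / T }$, provided $\bound{\supgrad}$ dominates the relevant supergradient norms $\norm{\deltamatrix^{(t)}}_{\infty,2}$. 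To verify this I would note that the $i$-th column of $\deltamatrix^{(t)}$ equals $\multipliers^{(t)}_i \supgrad f_t(\multipliers^{(t)})$, so that $\norm{\deltamatrix^{(t)}}_{\infty,2} = \norm{\supgrad f_t(\multipliers^{(t)})}_{\infty} \norm{\multipliers^{(t)}}_2 \le \norm{\supgrad f_t(\multipliers^{(t)})}_{\infty} \le \bound{\supgrad}$, using $\norm{\multipliers^{(t)}}_2 \le \norm{\multipliers^{(t)}}_1 = 1$ on the simplex. Chaining the concavity inequality with this regret bound yields the claim.

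The main obstacle --- really the only nonroutine idea --- is the second step: the fixed-point manoeuvre that converts swap regret (a comparison against all column-stochastic post-processings $\matrixmultipliers^*$) into ordinary external regret over the matrix simplex $\Matrixmultipliers$. Everything hinges on choosing the loss matrices to be the outer products $\deltamatrix^{(t)}$ and on the self-consistency $\matrixmultipliers^{(t)} \multipliers^{(t)} = \multipliers^{(t)}$ built into the stationary-distribution choice of $\multipliers^{(t)}$; once that identity is in hand, the remainder is a direct appeal to \corref{matrix-multiplicative} together with the elementary norm estimate above.
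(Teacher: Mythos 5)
Your proposal is correct and follows essentially the same route as the paper's proof: both are the \citet{Gordon:2008} reduction in which the stationary-distribution identity $\matrixmultipliers^{(t)} \multipliers^{(t)} = \multipliers^{(t)}$ and the outer-product matrices $\deltamatrix^{(t)}$ turn swap regret into external regret over $\Matrixmultipliers$, handled by \corref{matrix-multiplicative}, with the same $L_{\infty,2}$-norm estimate $\norm{\deltamatrix^{(t)}}_{\infty,2} \le \norm{\supgrad f_t(\multipliers^{(t)})}_{\infty} \norm{\multipliers^{(t)}}_2 \le \bound{\supgrad}$. The only cosmetic difference is that the paper applies the corollary to the composed concave functions $\tilde{f}_t(\matrixmultipliers) \defeq f_t(\matrixmultipliers \multipliers^{(t)})$, for which $\deltamatrix^{(t)}$ is a supergradient, whereas you linearize via concavity first and apply the corollary to the linear losses $\matrixmultipliers \mapsto \inner{\deltamatrix^{(t)}}{\matrixmultipliers}$; these are interchangeable.
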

\begin{prf}{internal-regret}
  This algorithm is an instance of that contained in Figure 1 of
  \citet{Gordon:2008}.

  Define $\tilde{f}_t\left(\matrixmultipliers\right) \defeq
  f_t\left(\matrixmultipliers^{(t)} \multipliers^{(t)}\right)$. Observe that
  since $\supgrad f_t\left(\multipliers^{(t)}\right)$ is a supergradient of
  $f_t$ at $\multipliers^{(t)}$, and $\matrixmultipliers^{(t)}
  \multipliers^{(t)} = \multipliers^{(t)}$:
  \begin{align*}
    f_t\left(\tilde{\matrixmultipliers} \multipliers^{(t)}\right) \le &
    f_t\left(\matrixmultipliers^{(t)} \multipliers^{(t)}\right) +
    \inner{\supgrad
    f_t\left(\multipliers^{(t)}\right)}{\tilde{\matrixmultipliers}
    \multipliers^{(t)} - \matrixmultipliers^{(t)} \multipliers^{(t)}} \\
    \le & f_t\left(\matrixmultipliers^{(t)} \multipliers^{(t)}\right) +
    \deltamatrix^{(t)} \cdot \left( \tilde{\matrixmultipliers} -
    \matrixmultipliers^{(t)} \right)
  \end{align*}
  where the matrix product on the last line is performed element-wise.  This
  shows that $\deltamatrix^{(t)}$ is a supergradient of $\tilde{f}_t$ at
  $\matrixmultipliers^{(t)}$, from which we conclude that the final two steps
  of the update are performing the algorithm of \corref{matrix-multiplicative},
  so:
  \begin{equation*}
    \frac{1}{T} \sum_{t=1}^T \tilde{f}_t\left( \matrixmultipliers^* \right) -
    \frac{1}{T} \sum_{t=1}^T \tilde{f}_t\left( \matrixmultipliers^{(t)} \right)
    \le 2 \bound{\supgrad} \sqrt{ \frac{ \matrixmultipliersize \ln
    \matrixmultipliersize }{T} }
  \end{equation*}
  where the $\bound{\supgrad}$ of \corref{matrix-multiplicative} is a uniform
  upper bound on the $L_{\infty,2}$ matrix norms of the $\deltamatrix^{(t)}$s.
  However, by the definition of $\deltamatrix^{(t)}$ and the fact that
  $\multipliers^{(t)} \in \Delta^{\matrixmultipliersize}$, we can instead take
  $\bound{\supgrad}$ to be a uniform upper bound on
  $\norm{\supgrad^{(t)}}_\infty$.
  Substituting the definition of $\tilde{f}_t$ and again using the fact that
  $\matrixmultipliers^{(t)} \multipliers^{(t)} = \multipliers^{(t)}$ then
  yields the claimed result.
\end{prf}

\subsection{Stochastic One-Player Convergence Rates}

\begin{thm}{stochastic-mirror}
  \textbf{(Stochastic Mirror Descent)}
  Let $\Psi$, $\norm{\cdot}$, $\bregman{\Psi}$ and $\bound{\Psi}$ be as in
  \thmref{mirror}, and let $f_1,f_2,\ldots : \Parameters \rightarrow \R$ be a
  sequence of convex functions that we wish to minimize on a compact convex set
  $\Parameters$.

  Define the step size $\eta = \sqrt{ \bound{\Psi} / T
  \bound{\stochasticsubgrad}^2 }$, where $\bound{\stochasticsubgrad} \ge
  \norm{\stochasticsubgrad^{(t)}}_*$ is a uniform upper bound on the norms of the
  stochastic subgradients.
  Suppose that we perform $T$ iterations of the following \emph{stochastic}
  update, starting from $\parameters^{(1)} = \argmin_{\parameters \in
  \Parameters} \Psi\left(\parameters\right)$:
  \begin{align*}
    \tilde{\parameters}^{(t+1)} &= \grad \Psi^* \left( \grad \Psi \left(
    \parameters^{(t)} \right) - \eta \stochasticsubgrad^{(t)} \right) \\
    \parameters^{(t+1)} &= \argmin_{\parameters \in \Parameters}
    \bregman{\Psi}\left(\parameters \vert \tilde{\parameters}^{(t+1)}\right)
  \end{align*}
  where $\expectation\left[ \stochasticsubgrad^{(t)} \mid \parameters^{(t)} \right]
  \in \partial f_t(\parameters^{(t)})$, \ie $\stochasticsubgrad^{(t)}$ is a
  stochastic subgradient of $f_t$ at $\parameters^{(t)}$. Then, with
  probability $1-\delta$ over the draws of the stochastic subgradients:
  \begin{equation*}
    \frac{1}{T} \sum_{t=1}^T f_t\left( \parameters^{(t)} \right) - \frac{1}{T}
    \sum_{t=1}^T f_t\left( \parameters^* \right) \le 2 \bound{\subgrad} \sqrt{ \frac{2
    \bound{\Psi} \left( 1 + 16 \ln\frac{1}{\delta}
    \right)}{T} }
  \end{equation*}
  where $\parameters^* \in \Parameters$ is an arbitrary reference vector.
\end{thm}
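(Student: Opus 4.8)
The plan is to reduce the stochastic guarantee to the deterministic regret bound of \thmref{mirror} applied to \emph{linearized} losses, and then to pay for the stochasticity with a single martingale concentration step.

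First I would observe that the update in the theorem is exactly the mirror-descent update of \thmref{mirror} run on the sequence of \emph{linear} surrogates $\hat{f}_t(\parameters) \defeq \inner{\stochasticsubgrad^{(t)}}{\parameters}$, whose (sub)gradient is the constant vector $\stochasticsubgrad^{(t)}$ with $\norm{\stochasticsubgrad^{(t)}}_* \le \bound{\stochasticsubgrad}$. Since the conclusion of \thmref{mirror} is a purely deterministic regret bound that holds for every realization of its input functions, I would invoke it pathwise (for each draw of the stochastic subgradients) to obtain
\[
\frac{1}{T}\sum_{t=1}^T \inner{\stochasticsubgrad^{(t)}}{\parameters^{(t)} - \parameters^*} \le 2\bound{\stochasticsubgrad}\sqrt{\frac{\bound{\Psi}}{T}}.
\]

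Next I would convert this linearized regret into a bound on $\sum_t (f_t(\parameters^{(t)}) - f_t(\parameters^*))$. Let $\mathcal{F}_{t-1}$ be the $\sigma$-algebra generated by the first $t-1$ subgradients and write $g_t \defeq \expectation\left[\stochasticsubgrad^{(t)} \mid \mathcal{F}_{t-1}\right] \in \partial f_t(\parameters^{(t)})$. Convexity of $f_t$ gives $f_t(\parameters^{(t)}) - f_t(\parameters^*) \le \inner{g_t}{\parameters^{(t)} - \parameters^*}$, so adding and subtracting $\stochasticsubgrad^{(t)}$,
\[
f_t(\parameters^{(t)}) - f_t(\parameters^*) \le \inner{\stochasticsubgrad^{(t)}}{\parameters^{(t)} - \parameters^*} + Y_t, \qquad Y_t \defeq \inner{g_t - \stochasticsubgrad^{(t)}}{\parameters^{(t)} - \parameters^*}.
\]
By construction $\expectation\left[Y_t \mid \mathcal{F}_{t-1}\right] = 0$, so $\{Y_t\}$ is a martingale difference sequence, and summing reduces the claim to controlling $\frac{1}{T}\sum_t Y_t$ on top of the deterministic bound above. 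To do so I would apply Azuma--Hoeffding after bounding $\abs{Y_t}$: by H\"older, $\abs{Y_t} \le \norm{g_t - \stochasticsubgrad^{(t)}}_* \norm{\parameters^{(t)} - \parameters^*} \le 2\bound{\stochasticsubgrad}\norm{\parameters^{(t)} - \parameters^*}$, and since $\parameters^{(1)} = \argmin_{\parameters} \Psi(\parameters)$ and $\Psi \ge 0$ is $1$-strongly convex, $\tfrac{1}{2}\norm{\parameters - \parameters^{(1)}}^2 \le \Psi(\parameters) \le \bound{\Psi}$, which yields the diameter bound $\norm{\parameters^{(t)} - \parameters^*} \le 2\sqrt{2\bound{\Psi}}$. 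Hence $\abs{Y_t} \le 4\bound{\stochasticsubgrad}\sqrt{2\bound{\Psi}}$, and Azuma gives $\frac{1}{T}\sum_t Y_t \le 8\bound{\stochasticsubgrad}\sqrt{\bound{\Psi}\ln(1/\delta)/T}$ with probability $1-\delta$.

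Finally I would combine the two contributions to get $\frac{1}{T}\sum_t(f_t(\parameters^{(t)})-f_t(\parameters^*)) \le 2\bound{\stochasticsubgrad}\sqrt{\bound{\Psi}/T}\,(1 + 4\sqrt{\ln(1/\delta)})$, and absorb both terms under one root via the elementary inequality $(1+4\sqrt{u})^2 \le 2(1+16u)$ (which is just AM--GM on $1$ and $16u$), recovering the stated $2\bound{\stochasticsubgrad}\sqrt{2\bound{\Psi}(1+16\ln(1/\delta))/T}$. The main obstacle is the concentration step: extracting the diameter bound in the correct primal norm from $\Psi$ and tracking constants carefully so the Azuma tail and the deterministic regret recombine into the clean $\sqrt{1+16\ln(1/\delta)}$ factor; the rest is a direct application of \thmref{mirror} together with convexity and unbiasedness. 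I would also flag that the displayed right-hand side, written with $\bound{\subgrad}$, should be read as the stochastic-subgradient bound $\bound{\stochasticsubgrad}$ appearing in the step size.
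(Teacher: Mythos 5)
Your proposal is correct and follows essentially the same route as the paper's own proof: reduce to the deterministic regret bound of \thmref{mirror} applied to linearized losses built from the realized stochastic subgradients (the paper uses the affine surrogate $\tilde{f}_t(\parameters) = f_t(\parameters^{(t)}) + \inner{\stochasticsubgrad^{(t)}}{\parameters - \parameters^{(t)}}$, which differs from your $\inner{\stochasticsubgrad^{(t)}}{\parameters}$ only by a constant and so yields the same iterates and regret), then control the identical martingale difference terms via H\"older, the $2\sqrt{2\bound{\Psi}}$ diameter bound from strong convexity and the choice of $\parameters^{(1)}$, and Hoeffding--Azuma, before recombining with $\sqrt{a}+\sqrt{b} \le \sqrt{2a+2b}$. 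All constants match the paper's, and your reading of the displayed $\bound{\subgrad}$ as $\bound{\stochasticsubgrad}$ is also what the paper intends.
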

\begin{prf}{stochastic-mirror}
  This is nothing more than the usual transformation of a uniform regret
  guarantee into a stochastic one via the Hoeffding-Azuma inequality---we
  include a proof for completeness. \TODO{can we find a citation for this
  instead of proving it?}

  Define the sequence:
  \begin{equation*}
    \tilde{f}_t\left(\parameters\right) = f_t\left(\parameters^{(t)}\right) +
    \inner{\stochasticsubgrad^{(t)}}{\parameters - \parameters^{(t)}}
  \end{equation*}
  Then applying non-stochastic mirror descent to the sequence $\tilde{f}_t$
  will result in exactly the same sequence of iterates $\parameters^{(t)}$ as
  applying stochastic mirror descent (above) to $f_t$. Hence, by
  \thmref{mirror} and the definition of $\tilde{f}_t$ (notice that we can take
  $\bound{\subgrad} = \bound{\stochasticsubgrad}$):
  \begin{align}
    \notag \frac{1}{T} \sum_{t=1}^T \tilde{f}_t\left( \parameters^{(t)} \right)
    - \frac{1}{T} \sum_{t=1}^T \tilde{f}_t\left( \parameters^* \right) \le& 2 \bound{\subgrad}
    \sqrt{ \frac{\bound{\Psi}}{T} } \\
    \notag \frac{1}{T} \sum_{t=1}^T f_t\left( \parameters^{(t)} \right) -
    \frac{1}{T} \sum_{t=1}^T f_t\left( \parameters^* \right) \le& 2 \bound{\subgrad} \sqrt{
    \frac{\bound{\Psi}}{T} } + \frac{1}{T} \sum_{t=1}^T
    \left( \tilde{f}_t\left(\parameters^*\right) -
    f_t\left(\parameters^*\right) \right) \\
    \label{eq:thm:stochastic-mirror:before-azuma} \le & 2 \bound{\subgrad} \sqrt{
    \frac{\bound{\Psi}}{T} } + \frac{1}{T} \sum_{t=1}^T
    \inner{\stochasticsubgrad^{(t)} - \subgrad
    f_t\left(\parameters^{(t)}\right)}{\parameters^* - \parameters^{(t)}}
  \end{align}
  where the last step follows from the convexity of the $f_t$s. Consider the
  second term on the RHS. Observe that, since the $\stochasticsubgrad^{(t)}$s are
  stochastic subgradients, each of the terms in the sum is zero in expectation
  (conditioned on the past), and the partial sums therefore form a martingale.
  Furthermore, by H\"older's inequality:
  \begin{equation*}
    \inner{\stochasticsubgrad^{(t)} - \subgrad
    f_t\left(\parameters^{(t)}\right)}{\parameters^* - \parameters^{(t)}}
    \le \norm{\stochasticsubgrad^{(t)} - \subgrad
    f_t\left(\parameters^{(t)}\right)}_* \norm{\parameters^* -
    \parameters^{(t)}}
    \le 4 \bound{\stochasticsubgrad} \sqrt{2 \bound{\Psi}}
  \end{equation*}
  the last step because $\norm{\parameters^* - \parameters^{(t)}} \le
  \norm{\parameters^* - \parameters^{(1)}} + \norm{\parameters^{(t)} -
  \parameters^{(1)}} \le 2 \sup_{\parameters \in \Parameters} \sqrt{2
  \bregman{\Psi}\left(\parameters \mid \parameters^{(1)}\right)} \le 2 \sqrt{2
  \bound{\Psi}}$, using the fact that $\bregman{\Psi}$ is $1$-strongly convex
  \wrt $\norm{\cdot}$, and the definition of $\parameters^{(1)}$.
  Hence, by the Hoeffding-Azuma inequality:
  \begin{equation*}
    \probability\left\{ \frac{1}{T} \sum_{t=1}^T \inner{\stochasticsubgrad^{(t)} -
    \subgrad f_t\left(\parameters^{(t)}\right)}{\parameters^* -
    \parameters^{(t)}} \ge \epsilon \right\}
    \le \exp\left( -\frac{T \epsilon^2}{64 \bound{\Psi}
    \bound{\stochasticsubgrad}^2} \right)
  \end{equation*}
  equivalently:
  \begin{equation*}
    \probability\left\{ \frac{1}{T} \sum_{t=1}^T \inner{\stochasticsubgrad^{(t)} -
    \subgrad f_t\left(\parameters^{(t)}\right)}{\parameters^* -
    \parameters^{(t)}} \ge 8 \bound{\stochasticsubgrad} \sqrt{\frac{ \bound{\Psi}
    \ln \frac{1}{\delta} }{T}} \right\}
    \le \delta
  \end{equation*}
  substituting this into \eqref{thm:stochastic-mirror:before-azuma}, and
  applying the inequality $\sqrt{a} + \sqrt{b} \le \sqrt{2a + 2b}$, yields the
  claimed result.
\end{prf}

\begin{cor}{stochastic-sgd}
  \textbf{(Stochastic Gradient Descent)}
  Let $f_1,f_2,\ldots : \Parameters \rightarrow \R$ be a sequence of convex
  functions that we wish to minimize on a compact convex set $\Parameters$.

  Define the step size $\eta = \bound{\Parameters} / \bound{\stochasticsubgrad}
  \sqrt{2 T}$, where $\bound{\Parameters} \ge \max_{\parameters \in \Parameters}
  \norm{\parameters}_2$, and $\bound{\stochasticsubgrad} \ge
  \norm{\stochasticsubgrad^{(t)}}_2$ is a uniform upper bound on the norms of the
  stochastic subgradients.
  Suppose that we perform $T$ iterations of the following \emph{stochastic}
  update, starting from $\parameters^{(1)} = \argmin_{\parameters \in
  \Parameters} \norm{\parameters}_2$:
  \begin{equation*}
    \parameters^{(t+1)} = \Pi_{\Parameters}\left( \parameters^{(t)} - \eta
    \stochasticsubgrad^{(t)} \right)
  \end{equation*}
  where $\expectation\left[ \stochasticsubgrad^{(t)} \mid \parameters^{(t)} \right]
  \in \partial f_t(\parameters^{(t)})$, \ie $\stochasticsubgrad^{(t)}$ is a
  stochastic subgradient of $f_t$ at $\parameters^{(t)}$, and
  $\Pi_{\Parameters}$ projects its argument onto $\Parameters$ \wrt the
  Euclidean norm.  Then, with probability $1-\delta$ over the draws of the
  stochastic subgradients:
  \begin{equation*}
    \frac{1}{T} \sum_{t=1}^T f_t\left( \parameters^{(t)} \right) - \frac{1}{T}
    \sum_{t=1}^T f_t\left( \parameters^* \right) \le 2 \bound{\Parameters}
    \bound{\subgrad} \sqrt{ \frac{ 1 + 16 \ln\frac{1}{\delta} }{T} }
  \end{equation*}
  where $\parameters^* \in \Parameters$ is an arbitrary reference vector.
\end{cor}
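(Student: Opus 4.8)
The plan is to obtain this as an immediate specialization of the stochastic mirror descent guarantee (\thmref{stochastic-mirror}) by choosing the Euclidean distance-generating function $\Psi\left(\parameters\right) = \norm{\parameters}_2^2 / 2$, exactly paralleling how the non-stochastic \corref{sgd} was deduced from \thmref{mirror}. First I would verify that this $\Psi$ meets the hypotheses of \thmref{stochastic-mirror}: it is nonnegative on $\Parameters$ and $1$-strongly convex with respect to the Euclidean norm $\norm{\cdot}_2$, whose dual norm is again $\norm{\cdot}_2$. This identification lets the theorem's bound $\bound{\stochasticsubgrad}$ on the dual norms of the stochastic subgradients be taken as the stated bound on $\norm{\stochasticsubgrad^{(t)}}_2$.

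Next I would check that the abstract mirror-descent update collapses to the projected SGD update in the statement. Since $\grad \Psi\left(\parameters\right) = \parameters$ and $\grad \Psi^*$ is the identity map, the first update step reduces to $\tilde{\parameters}^{(t+1)} = \parameters^{(t)} - \eta \stochasticsubgrad^{(t)}$; and because the associated Bregman divergence is $\bregman{\Psi}\left(\parameters \mid \parameters'\right) = \norm{\parameters - \parameters'}_2^2 / 2$, minimizing it over $\Parameters$ is precisely Euclidean projection, so the second step becomes $\parameters^{(t+1)} = \Pi_{\Parameters}\left(\parameters^{(t)} - \eta \stochasticsubgrad^{(t)}\right)$. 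The initialization $\parameters^{(1)} = \argmin_{\parameters \in \Parameters} \Psi\left(\parameters\right)$ likewise coincides with $\argmin_{\parameters \in \Parameters} \norm{\parameters}_2$, matching the corollary.

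Finally I would match the constants. Taking $\bound{\Psi} = \bound{\Parameters}^2 / 2$ as the uniform upper bound on $\Psi$ over $\Parameters$, the step size $\eta = \sqrt{\bound{\Psi} / T \bound{\stochasticsubgrad}^2}$ of \thmref{stochastic-mirror} simplifies to $\bound{\Parameters} / \bound{\stochasticsubgrad} \sqrt{2 T}$, and the regret bound $2 \bound{\stochasticsubgrad} \sqrt{2 \bound{\Psi} \left(1 + 16 \ln\frac{1}{\delta}\right) / T}$ becomes $2 \bound{\Parameters} \bound{\stochasticsubgrad} \sqrt{\left(1 + 16 \ln\frac{1}{\delta}\right) / T}$, which is exactly the claim. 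There is no genuine obstacle here; the only point requiring care is the constant bookkeeping, namely confirming that $\bound{\Psi} = \bound{\Parameters}^2 / 2$ is what produces the precise factor of $2$ inside the square root and cancels the $\sqrt{2}$ appearing in the generic mirror-descent bound, together with the routine observation that the Bregman-minimization step is Euclidean projection.
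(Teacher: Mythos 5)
Your proposal is correct and is exactly the paper's own argument: the paper proves this corollary by taking $\Psi\left(\parameters\right) = \norm{\parameters}_2^2 / 2$ in the stochastic mirror descent theorem, just as you do. Your write-up simply makes explicit the routine verifications (strong convexity, the collapse of the mirror update to Euclidean projection, and the constant bookkeeping $\bound{\Psi} = \bound{\Parameters}^2/2$) that the paper leaves implicit.
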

\begin{prf}{stochastic-sgd}
  Follows from taking $\Psi\left(\parameters\right) = \norm{\parameters}_2^2 /
  2$ in \thmref{stochastic-mirror}.
\end{prf}

\begin{cor}{stochastic-matrix-multiplicative}
  Let $\Matrixmultipliers \defeq \left\{ \matrixmultipliers \in
  \R^{\matrixmultipliersize \times \matrixmultipliersize} : \forall i \in
  \indices{\matrixmultipliersize} . \matrixmultipliers_{:, i} \in
  \Delta^{\matrixmultipliersize} \right\}$ be the set of all left-stochastic
  $\matrixmultipliersize \times \matrixmultipliersize$ matrices, and let
  $f_1,f_2,\ldots : \Matrixmultipliers \rightarrow \R$ be a sequence of concave
  functions that we wish to maximize.

  Define the step size $\eta = \sqrt{ \matrixmultipliersize \ln
  \matrixmultipliersize / T \bound{\stochasticsupgrad}^2 }$, where
  $\bound{\stochasticsupgrad} \ge \norm{\stochasticsupgrad^{(t)}}_{\infty, 2}$ is a
  uniform upper bound on the norms of the stochastic supergradients, and
  $\norm{\cdot}_{\infty, 2} \defeq \sqrt{ \sum_{i=1}^{\matrixmultipliersize}
  \norm{\matrixmultipliers_{:,i}}_{\infty}^2 }$ is the $L_{\infty,2}$ matrix
  norm.
  Suppose that we perform $T$ iterations of the following stochastic update
  starting from the matrix $\matrixmultipliers^{(1)}$ with all elements equal
  to $1/\matrixmultipliersize$:
  \begin{align*}
    \tilde{\matrixmultipliers}^{(t+1)} =& \matrixmultipliers^{(t)}
    \elementwiseproduct \elementwiseexp\left( \eta \stochasticsupgrad^{(t)}
    \right) \\
    \matrixmultipliers_{:,i}^{(t+1)} =&
    \tilde{\matrixmultipliers}_{:,i}^{(t+1)} /
    \norm{\tilde{\matrixmultipliers}_{:,i}^{(t+1)}}_1
  \end{align*}
  where $\expectation\left[ - \stochasticsupgrad^{(t)} \mid
  \matrixmultipliers^{(t)}\right] \in \partial
  \left(-f_t(\matrixmultipliers^{(t)})\right)$, \ie $\stochasticsupgrad^{(t)}$
  is a stochastic supergradient of $f_t$ at $\matrixmultipliers^{(t)}$, and the
  multiplication and exponentiation in the first step are performed
  element-wise. Then with probability $1-\delta$ over the draws of the
  stochastic supergradients:
  \begin{equation*}
    \frac{1}{T} \sum_{t=1}^T f_t\left( \matrixmultipliers^* \right) -
    \frac{1}{T} \sum_{t=1}^T f_t\left( \matrixmultipliers^{(t)} \right) \le 2
    \bound{\stochasticsupgrad} \sqrt{ \frac{ 2 \left( \matrixmultipliersize \ln
    \matrixmultipliersize \right) \left( 1 + 16 \ln\frac{1}{\delta}\right) }{T}
    }
  \end{equation*}
  where $\matrixmultipliers^* \in \Matrixmultipliers$ is an arbitrary reference
  matrix.
\end{cor}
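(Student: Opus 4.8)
The plan is to derive this exactly as the stochastic counterpart of \corref{matrix-multiplicative}, in precisely the way \corref{stochastic-sgd} is the stochastic counterpart of \corref{sgd}: by instantiating Stochastic Mirror Descent (\thmref{stochastic-mirror}) with the entropic distance-generating function already used in the non-stochastic matrix case. Concretely, I would set $\Psi:\Matrixmultipliers\rightarrow\R$ to be $\Psi\left(\matrixmultipliers\right) \defeq \matrixmultipliersize \ln \matrixmultipliersize + \sum_{i,j} \matrixmultipliers_{i,j} \ln \matrixmultipliers_{i,j}$, i.e.\ the shifted negative Shannon entropy applied element-wise, which is the same function appearing in the proof of \corref{matrix-multiplicative}.

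The key observation is that every structural fact about $\Psi$ needed to invoke \thmref{stochastic-mirror} was already established in the proof of \corref{matrix-multiplicative} and carries over verbatim, since it concerns $\Psi$ and the geometry of $\Matrixmultipliers$ alone and is independent of whether the (super)gradients are exact or stochastic. In particular: $\Psi$ is nonnegative on $\Matrixmultipliers$ with $\bound{\Psi} = \matrixmultipliersize \ln \matrixmultipliersize$; its Bregman divergence $\bregman{\Psi}$ decomposes column-wise into KL divergences, so that Bregman projection onto $\Matrixmultipliers$ amounts to normalizing each column in $1$-norm; and, via Pinsker's inequality, $\Psi$ is $1$-strongly convex with respect to the $L_{1,2}$ matrix norm, whose dual is the $L_{\infty,2}$ norm. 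With $\grad\Psi$ equal to the element-wise logarithm and $\grad\Psi^*$ the element-wise exponential, the stochastic mirror-descent update of \thmref{stochastic-mirror} collapses to exactly the multiplicative update $\tilde{\matrixmultipliers}^{(t+1)} = \matrixmultipliers^{(t)} \elementwiseproduct \elementwiseexp\left( \eta \stochasticsupgrad^{(t)} \right)$ followed by column normalization stated in the corollary, once we pass to minimizing the convex functions $-f_t$ so that $\stochasticsupgrad^{(t)}$ becomes a stochastic subgradient of $-f_t$.

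I would then apply \thmref{stochastic-mirror} directly, noting that its dual-norm supergradient bound $\bound{\stochasticsubgrad} \ge \norm{\stochasticsubgrad^{(t)}}_*$ is, under the identification above, exactly the hypothesized $L_{\infty,2}$ bound $\bound{\stochasticsupgrad} \ge \norm{\stochasticsupgrad^{(t)}}_{\infty,2}$, and that the prescribed step size $\eta = \sqrt{\matrixmultipliersize \ln \matrixmultipliersize / T \bound{\stochasticsupgrad}^2}$ coincides with $\sqrt{\bound{\Psi}/T\bound{\stochasticsubgrad}^2}$. Substituting $\bound{\Psi} = \matrixmultipliersize \ln \matrixmultipliersize$ into the $2\bound{\stochasticsubgrad}\sqrt{2\bound{\Psi}\left(1+16\ln(1/\delta)\right)/T}$ guarantee of \thmref{stochastic-mirror} then yields the claimed high-probability bound. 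The only real care required---the sole ``obstacle,'' though it is bookkeeping rather than a genuine difficulty---is tracking the sign conventions when converting maximization of the concave $f_t$ into minimization of $-f_t$, making sure the supergradient/subgradient correspondence and the conditional-expectation hypothesis $\expectation\left[-\stochasticsupgrad^{(t)}\mid\matrixmultipliers^{(t)}\right]\in\partial\left(-f_t(\matrixmultipliers^{(t)})\right)$ line up so that the martingale/Hoeffding-Azuma argument internal to \thmref{stochastic-mirror} applies unchanged.
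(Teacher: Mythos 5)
Your proposal is correct and is essentially the paper's own proof: the paper proves this corollary by rerunning the argument of \corref{matrix-multiplicative} (entropic distance-generating function, column-wise KL Bregman projections, strong convexity \wrt the $L_{1,2}$ norm with dual $L_{\infty,2}$) starting from \thmref{stochastic-mirror} instead of \thmref{mirror}, exactly as you describe. Your additional care with the concave-maximization sign conventions and the matching of $\bound{\Psi} = \matrixmultipliersize \ln \matrixmultipliersize$ to the step size and final bound just makes explicit what the paper leaves implicit.
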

\begin{prf}{stochastic-matrix-multiplicative}
  The same reasoning as was used to prove \corref{matrix-multiplicative} from
  \thmref{mirror} applies here (but starting from \thmref{stochastic-mirror}).
\end{prf}

\begin{lem}{stochastic-internal-regret}
  Let $\Multipliers \defeq \Delta^{\matrixmultipliersize}$ be the
  $\matrixmultipliersize$-dimensional simplex, define $$\Matrixmultipliers
  \defeq \left\{ \matrixmultipliers \in \R^{\matrixmultipliersize \times
  \matrixmultipliersize} : \forall i \in \indices{\matrixmultipliersize} .
  \matrixmultipliers_{:, i} \in \Delta^{\matrixmultipliersize} \right\}$$ as the
  set of all left-stochastic $\matrixmultipliersize \times
  \matrixmultipliersize$ matrices, and take $f_1,f_2,\ldots : \Multipliers
  \rightarrow \R$ to be a sequence of concave functions that we wish to
  maximize.

  Define the step size $\eta = \sqrt{ \matrixmultipliersize \ln
  \matrixmultipliersize / T \bound{\stochasticsupgrad}^2 }$, where
  $\bound{\stochasticsupgrad} \ge \norm{\stochasticsupgrad^{(t)}}_{\infty}$ is a
  uniform upper bound on the $\infty$-norms of the stochastic supergradients.
  Suppose that we perform $T$ iterations of the following update, starting from
  the matrix $\matrixmultipliers^{(1)}$ with all elements equal to
  $1/\matrixmultipliersize$:
  \begin{align*}
    \multipliers^{(t)} =& \fix \matrixmultipliers^{(t)} \\
    \deltamatrix^{(t)} =& \stochasticsupgrad^{(t)} \left(
    \multipliers^{(t)} \right) ^T \\
    \tilde{\matrixmultipliers}^{(t+1)} =& \matrixmultipliers^{(t)}
    \elementwiseproduct \elementwiseexp\left( \eta \deltamatrix^{(t)} \right)
    \\
    \matrixmultipliers_{:,i}^{(t+1)} =&
    \tilde{\matrixmultipliers}_{:,i}^{(t+1)} /
    \norm{\tilde{\matrixmultipliers}_{:,i}^{(t+1)}}_1
  \end{align*}
  where $\fix \matrixmultipliers$ is a stationary distribution of
  $\matrixmultipliers$ (\ie a $\multipliers \in \Multipliers$ such that
  $\matrixmultipliers \multipliers = \multipliers$---such always exists, since
  $\matrixmultipliers$ is left-stochastic), $\expectation\left[ -
  \stochasticsupgrad^{(t)} \mid \multipliers^{(t)} \right] \in \partial
  \left(-f_t(\multipliers^{(t)})\right)$, \ie $\stochasticsupgrad^{(t)}$ is a
  stochastic supergradient of $f_t$ at $\multipliers^{(t)}$, and the
  multiplication and exponentiation of the third step are performed
  element-wise. Then with probability $1-\delta$ over the draws of the
  stochastic supergradients:
  \begin{equation*}
    \frac{1}{T} \sum_{t=1}^T f_t\left( \matrixmultipliers^* \multipliers^{(t)}
    \right) - \frac{1}{T} \sum_{t=1}^T f_t\left( \multipliers^{(t)} \right) \le
    2 \bound{\stochasticsupgrad} \sqrt{ \frac{ 2 \left( \matrixmultipliersize
    \ln \matrixmultipliersize \right) \left( 1 + 16 \ln\frac{1}{\delta}\right)
    }{T} }
  \end{equation*}
  where $\matrixmultipliers^* \in \Matrixmultipliers$ is an arbitrary
  left-stochastic reference matrix.
\end{lem}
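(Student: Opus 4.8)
The plan is to reduce \lemref{stochastic-internal-regret} to the stochastic matrix-multiplicative-weights bound of \corref{stochastic-matrix-multiplicative}, mirroring exactly the way \lemref{internal-regret} was reduced to its non-stochastic counterpart \corref{matrix-multiplicative}. First I would introduce the lifted sequence $\tilde{f}_t\left(\matrixmultipliers\right) \defeq f_t\left(\matrixmultipliers \multipliers^{(t)}\right)$, treating $\multipliers^{(t)} = \fix \matrixmultipliers^{(t)}$ as fixed, since it is determined by $\matrixmultipliers^{(t)}$ and hence measurable \wrt the history before the $t$th stochastic supergradient is drawn. Observe that the last two update steps of the lemma (the element-wise multiplicative update on $\matrixmultipliers^{(t)}$ followed by column-wise $L_1$ normalization) are precisely the stochastic update of \corref{stochastic-matrix-multiplicative} applied to the $\tilde{f}_t$s with stochastic supergradient $\deltamatrix^{(t)} = \stochasticsupgrad^{(t)}\left(\multipliers^{(t)}\right)^T$. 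So it suffices to control the regret of the lifted sequence and then translate back using $\matrixmultipliers^{(t)}\multipliers^{(t)} = \multipliers^{(t)}$.

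The key technical step is verifying that $\deltamatrix^{(t)}$ is a genuine \emph{stochastic} supergradient of $\tilde{f}_t$ at $\matrixmultipliers^{(t)}$. By concavity of $f_t$ and the identity $\matrixmultipliers^{(t)}\multipliers^{(t)} = \multipliers^{(t)}$, for any actual supergradient $\supgrad f_t\left(\multipliers^{(t)}\right)$ and any $\tilde{\matrixmultipliers} \in \Matrixmultipliers$ we have $f_t\left(\tilde{\matrixmultipliers}\multipliers^{(t)}\right) \le f_t\left(\multipliers^{(t)}\right) + \inner{\supgrad f_t\left(\multipliers^{(t)}\right)}{\left(\tilde{\matrixmultipliers} - \matrixmultipliers^{(t)}\right)\multipliers^{(t)}}$, and the vector inner product rewrites as the element-wise matrix product $\left(\supgrad f_t\left(\multipliers^{(t)}\right)\left(\multipliers^{(t)}\right)^T\right) \cdot \left(\tilde{\matrixmultipliers} - \matrixmultipliers^{(t)}\right)$, exactly as in the deterministic proof. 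Thus $\supgrad f_t\left(\multipliers^{(t)}\right)\left(\multipliers^{(t)}\right)^T$ is a supergradient of $\tilde{f}_t$. Taking conditional expectations, using the hypothesis $\expectation\left[-\stochasticsupgrad^{(t)} \mid \multipliers^{(t)}\right] \in \partial\left(-f_t\left(\multipliers^{(t)}\right)\right)$ together with the linearity of $v \mapsto v\left(\multipliers^{(t)}\right)^T$, shows that $\expectation\left[\deltamatrix^{(t)} \mid \matrixmultipliers^{(t)}\right]$ is a supergradient of $\tilde{f}_t$ at $\matrixmultipliers^{(t)}$, i.e.\ $\deltamatrix^{(t)}$ is unbiased.

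With this in hand I would apply \corref{stochastic-matrix-multiplicative} to the $\tilde{f}_t$s. Its bound is stated in the $L_{\infty,2}$ matrix norm, so I must tighten this to the $\infty$-norm appearing in the lemma. Since $\multipliers^{(t)} \in \Delta^{\matrixmultipliersize}$, the $i$th column of $\deltamatrix^{(t)}$ is $\multipliers^{(t)}_i \stochasticsupgrad^{(t)}$, whence $\norm{\deltamatrix^{(t)}}_{\infty,2}^2 = \sum_i \left(\multipliers^{(t)}_i\right)^2 \norm{\stochasticsupgrad^{(t)}}_{\infty}^2 \le \norm{\stochasticsupgrad^{(t)}}_{\infty}^2$, using $\sum_i \left(\multipliers^{(t)}_i\right)^2 \le \left(\sum_i \multipliers^{(t)}_i\right)^2 = 1$. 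Hence the $\infty$-norm upper bound $\bound{\stochasticsupgrad}$ of the lemma is a valid $\bound{\stochasticsupgrad}$ for the corollary. Finally I would substitute $\tilde{f}_t\left(\matrixmultipliers^*\right) = f_t\left(\matrixmultipliers^*\multipliers^{(t)}\right)$ and $\tilde{f}_t\left(\matrixmultipliers^{(t)}\right) = f_t\left(\matrixmultipliers^{(t)}\multipliers^{(t)}\right) = f_t\left(\multipliers^{(t)}\right)$ to recover the claimed high-probability swap-regret bound.

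The main obstacle—and the only place requiring genuine care—is the measurability and martingale bookkeeping implicit in the conditional-expectation claim: one must ensure $\matrixmultipliers^{(t)}$ (and hence $\multipliers^{(t)}$) is fixed given the past so that $\deltamatrix^{(t)}$ is conditionally unbiased, which is what lets the Hoeffding--Azuma argument inside \corref{stochastic-matrix-multiplicative} (inherited from \thmref{stochastic-mirror}) apply without modification. Everything else is a routine transcription of the deterministic argument of \lemref{internal-regret} with \corref{matrix-multiplicative} replaced by its stochastic analogue.
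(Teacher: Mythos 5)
Your proposal is correct and follows exactly the paper's own route: the paper proves this lemma by stating that the reasoning which derived \lemref{internal-regret} from \corref{matrix-multiplicative} applies verbatim starting from \corref{stochastic-matrix-multiplicative}, which is precisely the reduction you carry out (including the supergradient lifting via $\deltamatrix^{(t)} = \stochasticsupgrad^{(t)}(\multipliers^{(t)})^T$ and the $L_{\infty,2}$-to-$\infty$-norm bound using $\sum_i (\multipliers^{(t)}_i)^2 \le 1$). Your explicit treatment of the conditional unbiasedness of $\deltamatrix^{(t)}$ simply fills in detail the paper leaves implicit.
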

\begin{prf}{stochastic-internal-regret}
  The same reasoning as was used to prove \lemref{internal-regret} from
  \corref{matrix-multiplicative} applies here (but starting from
  \corref{stochastic-matrix-multiplicative}).
\end{prf}

\subsection{Two-Player Convergence Rates}\label{app:convergence:two-player}

\begin{algorithm*}[t]

\begin{pseudocode}
\codename $\mbox{StochasticLagrangian}\left( \Radius \in \R_+, \lagrangian : \Parameters \times \Multipliers \rightarrow \R, T \in \N, \eta_{\parameters}, \eta_{\multipliers} \in \R_+ \right)$: \\
\codeline Initialize $\parameters^{(1)} = 0$, $\multipliers^{(1)} = 0$ \codecomment{Assumes $0 \in \Parameters$} \\
\codeline For $t \in \indices{T}$: \\
\codeline \> Let $\stochasticsubgrad^{(t)}_{\parameters}$ be a stochastic subgradient of $\lagrangian\left(\parameters^{(t)},\multipliers^{(t)}\right)$ \wrt $\parameters$ \\
\codeline \> Let $\stochasticgrad^{(t)}_{\multipliers}$ be a stochastic gradient of $\lagrangian\left(\parameters^{(t)},\multipliers^{(t)}\right)$ \wrt $\multipliers$ \\
\codeline \> Update $\parameters^{(t+1)} = \Pi_{\Parameters}\left( \parameters^{(t)} - \eta_{\parameters} \stochasticsubgrad^{(t)}_{\parameters} \right)$ \codecomment{Projected SGD updates \dots} \\
\codeline \> Update $\multipliers^{(t+1)} = \Pi_{\Multipliers}\left( \multipliers^{(t)} + \eta_{\multipliers} \stochasticgrad^{(t)}_{\multipliers} \right)$ \codecomment{\;\;\;\;\dots} \\
\codeline Return $\parameters^{(1)},\dots,\parameters^{(T)}$ and $\multipliers^{(1)},\dots,\multipliers^{(T)}$
\end{pseudocode}

\caption{
  Optimizes the Lagrangian formulation (\eqref{lagrangian}) in the convex
  setting.
  The parameter $\Radius$ is the radius of the Lagrange multiplier space
  $\Multipliers \defeq \left\{ \multipliers \in \R_+^{\numconstraints} :
  \norm{\multipliers}_1 \le \Radius \right\}$, and the functions
  $\Pi_{\Parameters}$ and $\Pi_{\Multipliers}$ project their arguments onto
  $\Parameters$ and $\Multipliers$ (respectively) \wrt the Euclidean norm.
}

\label{alg:stochastic-lagrangian}

\end{algorithm*}

\begin{lem}{stochastic-lagrangian}
  \ifshowproofs
  \textbf{(\algref{stochastic-lagrangian})}
  \fi
  Suppose that $\Parameters$ is a compact convex set, $\Multipliers$ and
  $\Radius$ are as in \thmref{lagrangian-suboptimality-and-feasibility}, and
  that the objective and constraint functions
  $\objective,\constraint{1},\dots,\constraint{\numconstraints}$ are convex.
  Define the three upper bounds $\bound{\Parameters} \ge \max_{\parameters \in
  \Parameters} \norm{\parameters}_2$, $\bound{\stochasticsubgrad} \ge \max_{t
  \in \indices{T}} \norm{\stochasticsubgrad_{\parameters}^{(t)}}_2$, and
  $\bound{\stochasticgrad} \ge \max_{t \in \indices{T}}
  \norm{\stochasticgrad_{\multipliers}^{(t)}}_2$.

  If we run \algref{stochastic-lagrangian} with the step sizes
  $\eta_{\parameters} \defeq \bound{\Parameters} / \bound{\stochasticsubgrad}
  \sqrt{2T}$ and $\eta_{\multipliers} \defeq \Radius /
  \bound{\stochasticgrad} \sqrt{2T}$, then the result satisfies the
  conditions of \thmref{lagrangian-suboptimality-and-feasibility} for:
  \begin{equation*}
    \epsilon = 2 \left( \bound{\Parameters} \bound{\stochasticsubgrad} +
    \Radius \bound{\stochasticgrad} \right) \sqrt{ \frac{ 1 + 16
    \ln\frac{2}{\delta} }{T} }
  \end{equation*}
  with probability $1-\delta$ over the draws of the stochastic
  (sub)gradients.
\end{lem}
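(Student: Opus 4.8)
The plan is to handle the two players separately, applying the stochastic gradient descent guarantee of \corref{stochastic-sgd} once to the $\parameters$-update sequence and once to the $\multipliers$-update sequence, and then to sum the two resulting regret bounds to recover the approximate mixed Nash equilibrium condition required by \thmref{lagrangian-suboptimality-and-feasibility}. The argument closely parallels the proof of \lemref{oracle-lagrangian}, the only difference being that the $\parameters$-player now runs projected SGD in place of the oracle, so both guarantees are high-probability rather than deterministic.

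First I would apply \corref{stochastic-sgd} to the $\parameters$-updates, viewing the losses $f_t(\cdot) \defeq \lagrangian(\cdot, \multipliers^{(t)})$, each of which is convex in $\parameters$ since $\objective,\constraint{1},\dots,\constraint{\numconstraints}$ are convex and $\multipliers^{(t)} \in \R_+^{\numconstraints}$. With the prescribed step size $\eta_{\parameters} = \bound{\Parameters}/\bound{\stochasticsubgrad}\sqrt{2T}$, this yields, with probability $1-\delta'$,
\[
\frac{1}{T}\sum_{t=1}^T \lagrangian(\parameters^{(t)}, \multipliers^{(t)}) - \inf_{\parameters^* \in \Parameters} \frac{1}{T}\sum_{t=1}^T \lagrangian(\parameters^*, \multipliers^{(t)}) \le 2\bound{\Parameters}\bound{\stochasticsubgrad}\sqrt{\frac{1+16\ln\frac{1}{\delta'}}{T}}.
\]
Next I would apply the same corollary to the $\multipliers$-player, which performs projected ascent; equivalently, one applies \corref{stochastic-sgd} to $-\lagrangian(\parameters^{(t)}, \cdot)$, which is convex (indeed linear) in $\multipliers$. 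The only adjustment is the domain radius: since every $\multipliers \in \Multipliers$ satisfies $\norm{\multipliers}_1 \le \Radius$ and hence $\norm{\multipliers}_2 \le \Radius$, the role of $\bound{\Parameters}$ is played by $\Radius$, which matches the prescribed step size $\eta_{\multipliers} = \Radius/\bound{\stochasticgrad}\sqrt{2T}$. This gives, with probability $1-\delta'$,
\[
\max_{\multipliers^* \in \Multipliers} \frac{1}{T}\sum_{t=1}^T \lagrangian(\parameters^{(t)}, \multipliers^*) - \frac{1}{T}\sum_{t=1}^T \lagrangian(\parameters^{(t)}, \multipliers^{(t)}) \le 2\Radius\bound{\stochasticgrad}\sqrt{\frac{1+16\ln\frac{1}{\delta'}}{T}}.
\]

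Finally I would union-bound the two events and add the two displays: the shared terms $\tfrac{1}{T}\sum_t \lagrangian(\parameters^{(t)}, \multipliers^{(t)})$ cancel, leaving precisely the left-hand side of the equilibrium condition of \thmref{lagrangian-suboptimality-and-feasibility}. Setting $\delta = 2\delta'$, so that $\ln(1/\delta') = \ln(2/\delta)$ and the total failure probability is at most $\delta$, produces the claimed
\[
\epsilon = 2\left(\bound{\Parameters}\bound{\stochasticsubgrad} + \Radius\bound{\stochasticgrad}\right)\sqrt{\frac{1+16\ln\frac{2}{\delta}}{T}}.
\]

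The thing to be careful about — more bookkeeping than a genuine obstacle — is this final combination step: checking that each of the two step sizes is exactly the one demanded by \corref{stochastic-sgd} for its respective domain and gradient bound, that the $\ell_1$-ball $\Multipliers$ indeed has $\ell_2$-radius at most $\Radius$ (so that $\Radius$ legitimately substitutes for $\bound{\Parameters}$), and that the union bound correctly merges the two $\delta'$-probability statements into a single $\delta$-probability statement with the stated logarithmic factor. Everything else follows verbatim from the one-player stochastic guarantee and the linearity of $\lagrangian$ in $\multipliers$.
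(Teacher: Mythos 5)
Your proposal is correct and matches the paper's own proof essentially verbatim: the paper likewise applies \corref{stochastic-sgd} once to each player (using $\bound{\Multipliers} = \Radius$ for the $\multipliers$-domain), adds the two regret bounds so the shared term cancels, and takes $\delta = 2\delta'$ via a union bound. Your extra remarks on the $\ell_1$-ball's $\ell_2$-radius and on treating ascent as SGD on $-\lagrangian$ are just the details the paper leaves implicit.
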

\begin{prf}{stochastic-lagrangian}
  Applying \corref{stochastic-sgd} to the two optimizations (over $\parameters$
  and $\multipliers$) gives that with probability $1-2\delta'$ over the draws
  of the stochastic (sub)gradients:
  \begin{align*}
    \frac{1}{T} \sum_{t=1}^T \lagrangian\left( \parameters^{(t)},
    \multipliers^{(t)} \right) - \frac{1}{T} \sum_{t=1}^T \lagrangian\left(
    \parameters^*, \multipliers^{(t)} \right) \le& 2 \bound{\Parameters}
    \bound{\stochasticsubgrad} \sqrt{ \frac{ 1 + 16 \ln\frac{1}{\delta'} }{T} }
    \\
    \frac{1}{T} \sum_{t=1}^T \lagrangian\left( \parameters^{(t)},
    \multipliers^* \right) - \frac{1}{T} \sum_{t=1}^T \lagrangian\left(
    \parameters^{(t)}, \multipliers^{(t)} \right) \le& 2 \bound{\Multipliers}
    \bound{\stochasticgrad} \sqrt{ \frac{ 1 + 16 \ln\frac{1}{\delta'} }{T} }
  \end{align*}
  Adding these inequalities, taking $\delta=2\delta'$, using the linearity of
  $\lagrangian$ in $\multipliers$, the fact that $\bound{\Multipliers} =
  \Radius$, and the definitions of $\bar{\parameters}$ and
  $\bar{\multipliers}$, yields the claimed result.
\end{prf}

\end{document}